\newtheorem{thm}{\textsc{Theorem}}
\newtheorem{prop}[thm]{\textsc{Proposition}}
\newtheorem{lem}[thm]{\textsc{Lemma}}
\newtheorem{cor}[thm]{\textsc{Corollary}}
\theoremstyle{definition}
\newtheorem{rem}[thm]{\textsc{Remark}}
\newtheorem{dfn}[thm]{\textsc{Definition}}
\DeclareMathOperator{\Tr}{Tr}
\DeclareMathOperator{\argmax}{argmax}
\DeclareMathOperator{\prob}{Pr}
\newcommand{\RR}{\mathbb{R}}
\newcommand{\ZZ}{\mathbb{Z}}
\newcommand{\trn}{\mathrm{T}}
\newcommand{\rank}{\mathrm{rank} \hspace{2pt}}
\newcommand{\normdist}{\mathcal{N}}
\newcommand{\gl}{\mathrm{GL}}
\newcommand{\ex}[2][{}]{\mathbf{E}_{#1}\left[#2\right]}
\newcommand{\bl}{\left(}
\newcommand{\br}{\right)}
\newcommand{\aset}{\mathcal{A}}
\newcommand{\filt}{\mathcal{F}}
\newcommand{\hilb}[1]{\mathcal{H}_{K}\left(#1\right)}
\newcommand{\gsk}{\mathrm{SE}}
\newcommand{\rqk}{\mathrm{RQ}}
\newcommand{\mtk}{\mathrm{Mat\acute{e}rn}}
\newcommand{\linf}[1]{L^{\infty}\left(#1\right)}
\newcommand{\omdsc}{\widehat{\Omega}}
\newcommand{\wx}{\widetilde{x}}
\newcommand{\thetahat}{\hat{\theta}}
\newcommand{\otilde}{\widetilde{O}}
\newcommand{\algucb}{\text{APG-UCB}}
\newcommand{\algpe}{\text{APG-PE}}
\newcommand{\algthmp}{\text{APG-TS}}
\newcommand{\algucbh}{APG-UCB}
\newcommand{\algpeh}{APG-PE}
\newcommand{\algexp}{APG-EXP3}
\newcommand{\myciteauthoryear}{\citet}
\newcommand{\parencite}{\cite}
\newcommand{\Input}[1]{\STATE \textbf{Input}: #1}
\newcommand{\Output}[1]{\STATE \textbf{Output}: #1}
\newcommand{\err}{\epsilon}
\newcommand{\admerr}{\mathfrak{e}}
\newcommand{\piexp}{\pi_{\mathrm{exp}}}
\newcommand{\betaigp}{\beta^{\text{IGP-UCB}}}
\title{Approximation Theory Based Methods for RKHS Bandits}
\author{Sho Takemori\thanks{sho.takemori.py@fujifilm.com,takemorisho@gmail.com}, Masahiro Sato}
\date{}
\begin{document}
\maketitle
\begin{abstract}
The RKHS bandit problem (also called kernelized multi-armed bandit problem)
is an online optimization problem of non-linear functions with noisy feedback.
Although the problem has been extensively studied,
there are unsatisfactory results for some problems compared to
the well-studied linear bandit case.
Specifically, there is no general algorithm for the adversarial RKHS bandit problem.
In addition, high computational complexity of existing algorithms hinders practical application.
We address these issues by considering a novel amalgamation
of approximation theory and the misspecified linear bandit problem.
Using an approximation method,
we propose efficient algorithms for the stochastic
RKHS bandit problem and the first general algorithm for the adversarial RKHS bandit problem.
Furthermore,
we empirically show that one of our proposed methods has
comparable cumulative regret to IGP-UCB and its running time is much shorter.
\end{abstract}

\section{Introduction}
The RKHS bandit problem (also called kernelized multi-armed bandit problem)
is an online optimization problem of non-linear functions with noisy feedback.
\myciteauthoryear{srinivas2010gaussian}
studied a multi-armed bandit problem where the reward function
belongs to the reproducing kernel Hilbert space (RKHS) associated with a kernel.
In this paper, we call this problem the (stochastic) RKHS bandit problem.
Although the problem has been studied extensively,
some issues are not completely solved yet.
In this paper, we focus on mainly two issues:
non-existence of general algorithms for the adversarial RKHS bandit problem
and high computational complexity for the stochastic RKHS bandit algorithms.

First, as a non-linear generalization of the classical adversarial linear bandit problem,
\myciteauthoryear{chatterji2019online} proposed the adversarial RKHS bandit problem,
where a learner interacts with a sequence of any functions from the RKHS with bounded norms.
However, they only consider the kernel loss, i.e., a loss function of the form $x \mapsto K(x, x_0)$,
where $x_0$ is a fixed point.
Considering functions in the RKHS can be represented as infinite
linear combinations of such functions,
kernel loss is a very special function in the RKHS.
Therefore,
there are no algorithms for the adversarial RKHS bandits with general
loss (or reward) functions.

Next, we discuss the efficiency of existing methods for the stochastic RKHS bandit problem.
We note that most of the existing methods have regret guarantees at the cost of high computational complexity.
For example, IGP-UCB \parencite{chowdhury2017kernelized} requires
matrix-vector multiplication of size $t$ for each arm at each round $t=1, \dots, T$.
Therefore, the total computational complexity up to round $T$
is given as $O(|\aset|T^3)$, where $\aset$ is the set of arms.
To address the issue,
\citet{calandriello2020near} proposed BBKB
and proved its total computational complexity is given as
$\otilde(|\aset|T\gamma_T^2 + \gamma_T^4)$,
where $\aset \subset \Omega$ is the set of arms,
$\Omega$ is a subset of a Euclidean space $\RR^d$,
and $\gamma_T$ is the maximum information gain \citep{srinivas2010gaussian}.
If the kernel is a squared exponential kernel, then
since $\gamma_T = \otilde(\log^d (T))$%
\footnote{In this paper, we use $\otilde$ notation to ignore $\log^c(T)$ factor, where $c$ is a universal constant.}
\citep{srinivas2010gaussian},
ignoring the polylogarithmic factor,
BBKB's computational complexity is nearly linear in $T$.
However, the coefficient $|\aset|$ in the term is large in general.

In this paper, we address these two issues
by considering a novel amalgamation of approximation theory \citep{wendland2004scattered}
and the misspecified linear bandit problem \cite{lattimore2020learning}.
That is, we approximately reduce the RKHS bandit problem
to the well-studied linear bandit problem.
Here, because of an approximation error, the model is a misspecified linear model.
Ordinary approximation methods (such as Random Fourier Features or Nystr\"om embedding)
basically aim to approximate kernel $K(x, y)$ by an inner product of finite dimensional vectors.
However, to reduce the RKHS bandits to the linear bandits,
we want to approximate a function $f$ in the RKHS $\hilb{\Omega}$ by a function $\phi$
in a finite dimensional subspace
so that $\| f -  \phi \|_{L^{\infty}(\Omega)}$ is small.
Since the usual approximation methods are not appropriate for the purpose,
in this paper, we utilize a method developed in the approximation theory literature
called the $P$-greedy algorithm \citep{de2005near}
to minimize the $L^{\infty}$ error.
More precisely,
we shall introduce that
any function $f$ in the RKHS is approximately equal (in terms of the $L^{\infty}$ norm)
to a linear combination of $D_{q, \alpha}(T)$ functions,
where $q, \alpha > 0$ are parameters and
$D_{q, \alpha}(T)$ is the number of functions
(or equivalently points)
returned by the $P$-greedy algorithm (Algorithm \ref{alg:nbasispgreedy})
with admissible error $\admerr = \frac{\alpha}{T^q}$.
If $K$ is sufficiently smooth, $D_{q, \alpha}(T)$ is much smaller
than $T$ and $|\aset|$.
By this approximation,
we can tackle the original RKHS bandit problem
by applying an algorithm for the misspecified linear bandit problem.


\subsection*{Contributions}
To state contributions, we introduce terminology for kernels.
In this paper, we consider two types of kernels: kernels with infinite smoothness
and those with finite smoothness with smoothness parameter $\nu$
(we provide a precise definition in \S \ref{sec:res-approximation}).
Examples of the former include Rational Quadratic (RQ) and Squared Exponential (SE) kernels
and those of the latter include the Mat\'ern kernels with parameter $\nu$.
The latter type of kernels also include a general class of kernels
 that belong to $C^{2\nu}(\Omega \times \Omega)$
with $\nu \in \frac{1}{2}\ZZ_{> 0}$ and satisfy some additional conditions.
Let $D_{q, \alpha}(T) \in \ZZ_{>0}$ be as before.
Then, in \S \ref{sec:res-approximation}, we shall introduce that
$D_{q, \alpha}(T) = O\left((q\log T - \log(\alpha))^d\right)$ if $K$ has infinite smoothness
and $D_{q, \alpha}(T) = O\left(\alpha^{-d/\nu} T^{dq/\nu}\right)$ if $K$ has finite smoothness.
Our contributions are stated as follows:
\begin{enumerate}
  \item We apply an approximation method
  that has not been applied to the RKHS bandit problem and
  reduce the problem to the well-studied (misspecified) linear bandit problem.
  This novel reduction method has potential to tackle issues other than the ones we deal with in this paper.

  \item We propose \algexp{} for the adversarial RKHS bandit problem,
  where APG stands for an Approximation theory based method using $P$-Greedy.
  We prove its expected cumulative regret is upper bounded by
  $\otilde\left( \sqrt{T D_{1, \alpha}(T)  \log\left(|\aset|\right) } \right)$,
  where $\alpha = \log(|\aset|)$.
  To the best of our knowledge, this is the first method for the adversarial RKHS bandit problem
  with general reward functions.

  \item We propose a method for the stochastic RKHS bandit problem called $\algpe$
  and prove its cumulative regret is given as $
      \otilde\left(
      \sqrt{T D_{1/2, \alpha}(T)  \log\left(\frac{|\aset|}{\delta}\right)
      }
      \right),
  $ with probability at least $1-\delta$ and its total computational complexity is given as
  $\otilde \left( (|\aset| + T) D_{1/2, \alpha}^2(T) \right)$.
  We note that the total computational complexity is generally much better than
  that of the state of the art result
  $\otilde( |\aset|T\gamma_T^2 + \gamma_T^4)$
  \citep{calandriello2020near}.

  \item We propose \algucbh{} as an approximation of IGP-UCB and
  provide an upper bound of its cumulative regret
  if $q \ge 1/2$ and
  prove that its total computational complexity is given as $O(|\aset|T D_{q, \alpha}^2(T))$.

  If we take the parameter $q$ so that $q > 3/2$,
  then we shall show that $R_{\algucb}(T)$
  is upper bounded by $4 \betaigp_T \sqrt{\gamma_T T}
  + O(\sqrt{T\gamma_T} T^{(3/2-q)/2}
 + \gamma_T T^{1-q})$, where we define $\betaigp_T$ in \S \ref{sec:main-results}.
  Since the upper bound for the cumulative regret of IGP-UCB is also given as
  $4 \betaigp_T \sqrt{\gamma_T (T + 2)}$,
  \algucbh{} has asymptotically the same
  regret upper bound as that of IGP-UCB in this case.
  If the kernel has infinite smoothness
  or finite smoothness with sufficiently large $\nu$ (i.e., $\nu > 3d/2$),
  then this method is more efficient than IGP-UCB,
  whose computational complexity is $O(|\aset|T^3)$.

  \item In synthetic environments,
  we empirically show that \algucbh{} has almost the same cumulative regret as that of IGP-UCB
  and its running time is much shorter.
\end{enumerate}





\section{Related Work}
First, we review previous works on the adversarial RKHS bandit problem.
There are almost no existing results concerning the adversarial RKHS bandit problem except
for \citep{chatterji2019online}.
They also used an approximation method to solve the problem,
but their approximation method can handle only a limited case.
Therefore, there are no existing algorithms for the adversarial RKHS bandit problem
with general reward functions.
Next, we review existing results for the stochastic RKHS bandit problem.
\myciteauthoryear{srinivas2010gaussian}
studied a multi-armed bandit problem,
where the reward function is assumed to be sampled from a Gaussian process or
belongs to an RKHS.
\myciteauthoryear{chowdhury2017kernelized}
improved the result of \myciteauthoryear{srinivas2010gaussian} in the RKHS setting
and proposed two methods called IGP-UCB and GP-TS.
\myciteauthoryear{valko2013finite}
considered a stochastic RKHS bandit problem, where the arm set $\aset$ is finite
and fixed,
prpoposed a method called SupKernelUCB, and
 proved a regret upper bound $\otilde(\sqrt{T \gamma_T \log^3(|\aset|T/\delta)})$.
To address the computational inefficiency in the stochastic RKHS bandit problem,
\myciteauthoryear{mutny2018efficient} proposed Thompson Sampling and UCB-type
algorithms using an approximation method called Quadrature Fourier Features which
is an improved method of Random Fourier Features \parencite{rahimi2008random}.
They proved that the total computational complexity of their methods is given as
$\otilde(|\aset|T \gamma_T^2)$.
However, their methods can be applied to only a very special class of kernels.
For example, among three examples introduced in \S \ref{sec:prob-form},
only SE kernels satisfy their assumption unless $d =1$.
Our methods work for general symmetric positive definite kernels
with enough smoothness.
\myciteauthoryear{calandriello2020near}
proposed a method called BBKB and proved its regret is upper bounded by
$55\tilde{C}^3R_{\text{GP-UCB}}(T)$ with $\tilde{C} > 1$
and its total computational complexity is given as $\otilde( |\aset|T\gamma^2(T) + \gamma^4(T))$.
Here we use the maximum information gain instead of the effective dimension since
they have the same order up to polylogarithmic factors \citep{calandriello2019gaussian}.
If the kernel is an SE kernel,
ignoring polylogarithmic factors, their
computational complexity is linear in $T$.
However, the term incurs generally large coefficient $|\aset|$ in the term
unlike \algpe{}.
Finally, we note that we construct \algpeh{}
from PHASED ELIMINATION \citep{lattimore2020learning},
which is an algorithm for the stochastic misspecified linear bandit problem,
where PE stands for PHASED ELIMINATION.


\section{Problem Formulation}
\label{sec:prob-form}
Let $\Omega$ be a non-empty subset of a Euclidean space $\RR^d$ and
$K: \Omega \times \Omega \rightarrow \RR$ be a symmetric, positive definite kernel on $\Omega$, i.e.,
$K(x, y) = K(y, x)$ for all $x, y \in \Omega$ and
for a pairwise distinct points $\{x_1, \dots, x_n\} \subseteq \Omega$, the kernel matrix
$(K(x_i, x_j))_{1 \le i, j \le n}$ is positive definite.
Examples of such kernels are
Rational Quadratic (RQ), Squared Exponential (SE), and Mat\'ern kernels defined as
\begin{math}
  K_{\rqk}(x, y) := \bl 1 + \frac{s^2}{2\mu l^2}  \br^{-\mu},
\end{math}
\begin{math}
  K_{\gsk}(x, y) := \exp\bl -\frac{s^2}{2 l^2} \br,
\end{math}
and
\begin{math}
  K_{\mtk}^{(\nu)}(x, y) := \frac{2^{1-\nu}}{\Gamma(\nu)} \bl \frac{s\sqrt{2\nu}}{l} \br^{\nu}
  K_{\nu}\bl \frac{s\sqrt{2\nu}}{l} \br
\end{math}.
where $s = \| x - y\|_2$ and $l > 0, \mu > d/2$, $\nu >0$ are parameters, and $K_{\nu}$ is
the modified Bessel function of the second kind.
As in the previous work \parencite{chowdhury2017kernelized}, we normalize kernel $K$ so that
$K(x, x) \le 1$ for all $x \in \Omega$.
We note that the above three examples satisfy $K(x, x) = 1$ for any $x$.
We denote by $\hilb{\Omega}$ the RKHS corresponding to the kernel $K$,
which we shall review briefly in \S \ref{sec:res-approximation}
and assume that $f \in \hilb{\Omega}$ has bounded norm, i.e., $\|f \|_{\hilb{\Omega}} \le B$.
In this paper, we consider the following multi-armed bandit problem with time interval $T$
and arm set $\aset \subseteq \Omega$.
First, we formulate the stochastic RKHS bandit problem.
In each round $t = 1, 2, \dots, T$, a learner selects an arm $x_t \in \aset$ and
observes noisy reward $y_t = f(x_t) + \varepsilon_t$.
Here we assume that noise stochastic process is conditionally $R$-sub-Gaussian with respect to
a filtration $\{\mathcal{F}_t\}_{t=1,2,\dots}$, i.e.,
\begin{math}
  \ex{\exp(\xi \varepsilon_t) \mid \mathcal{F}_{t}} \le \exp(\xi^2 R^2/2)
\end{math}
for all $t\ge 1$ and $\xi \in \RR$. We also assume that $x_t$
is $\mathcal{F}_{t}$-measurable and $y_t$ is $\mathcal{F}_{t+1}$-measurable.
The objective of the learner is to maximize the cumulative reward
$\sum_{t=1}^{T}f(x_t)$ and regret is defined by
$R(T) := \sup_{x \in \aset}\sum_{t=1}^{T}\bl f(x) - f(x_t)\br$.
In the adversarial (or non-stochastic) bandit RKHS problem,
we assume a sequence $f_t \in \hilb{\Omega}$ with $\|f_t\|_{\hilb{\Omega}} \le B$ for $t = 1, \dots, T$
is given. In each round $t = 1,\dots, T$,
a learner selects an arm $x_t \in \aset$ and observes a reward $f_t(x_t)$.
The learner's objective is to minimize the cumulative regret
$R(T) := \sup_{x \in \aset} \sum_{t=1}^{T} f_t(x) - \sum_{t=1}^{T}f_t(x_t)$.
In this paper we only consider
oblivious adversary, i.e., we assume the adversary chooses a sequence $\{f_t\}_{t=1}^{T}$
before the game starts.

\section{Results from Approximation Theory}
In this section, we introduce important results provided by approximation theory.
For introduction to this subject, we refer to the monograph \parencite{wendland2004scattered}.
We first briefly review basic properties of the RKHS and
introduce classical results regarding the convergence rate of the power function,
which are required for the proof of Theorem \ref{thm:pgreedy-decay}.
Then, we introduce the $P$-greedy algorithm and
its convergence rate in Theorem \ref{thm:pgreedy-decay},
which generalizes the existing result \parencite{santin2017convergence}.

\label{sec:res-approximation}
\subsection{Reproducing Kernel Hilbert Space}
Let $F(\Omega) := \{f: \Omega \rightarrow \RR \}$ be the real vector space of
$\RR$-valued functions on $\Omega$.
Then, there exists a unique real Hilbert space
$\left(\hilb{\Omega}, \langle \cdot, \cdot \rangle_{\hilb{\Omega}}\right)$
with $\hilb{\Omega} \subseteq F(\Omega)$
satisfying the following two properties:
(i) $K(\cdot, x) \in \hilb{\Omega}$ for all $x \in \Omega$.
(ii) $\langle f, K(\cdot, x)\rangle_{\hilb{\Omega}} = f(x)$ for all $f \in \hilb{\Omega}$ and $x \in \Omega$.
Because of the second property,
the kernel $K$ is called reproducing kernel and $\hilb{\Omega}$ is called the reproducing kernel Hilbert
space (RKHS).

For a subset $\Omega' \subseteq \Omega$, we denote by $V(\Omega')$
the vector subspace of $\hilb{\Omega}$ spanned by $\{K(\cdot, x)\mid x \in \Omega' \}$.
We define an inner product of $V(\Omega')$ as follows.
For $f = \sum_{i \in I} a_i K(\cdot, x_i)$ and $g = \sum_{j \in I} b_j K(\cdot, x_j)$ with $|I| < \infty$,
we define $\langle f, g\rangle := \sum_{i, j\in I} a_i b_j K(x_i, x_j)$.
Since $K$ is symmetric and positive definite,
$V(\Omega')$ becomes a pre-Hilbert space with this inner product.
Then it is known that RKHS $\hilb{\Omega}$ is isomorphic to the completion of
$V(\Omega)$. Therefore, for each $f \in \hilb{\Omega}$,
there exists a sequence $\{x_n\}_{i=1}^{\infty}\subseteq \Omega$ and real numbers $\{a_n\}_{n=1}^{\infty}$
such that $f = \sum_{n=1}^{\infty}a_n K(\cdot, x_n)$. Here the convergence
is that with respect to the norm of $\hilb{\Omega}$ and because of a special property of RKHS,
it is also a pointwise convergence.

\subsection{Power Function and its Convergence Rate}
Since for any $f \in \hilb{\Omega}$, there exists a sequence of
finite sums $\sum_{n=1}^{N}a_n K(\cdot, x_n)$ that converges to $f$,
it is natural to consider the error between $f$ and the finite sum.
A natural notion to capture the error for any $f \in \hilb{\Omega}$ is
the power function defined as below.
For a finite subset of points $X = \{x_n\}_{n=1}^{N} \subseteq \Omega$,
we denote by $\Pi_{V(X)}: \hilb{\Omega} \rightarrow V(X)$ the orthogonal
projection to $V(X)$.
We note that the function $\Pi_{V(X)} f$ is characterized as the interpolant of $f$, i.e,
$\Pi_{V(X)} f$ is a unique function $g \in V(X)$ satisfying $g(x) = f(x)$ for all $x \in X$.
Then the power function $P_{V(X)}: \Omega \rightarrow \RR_{\ge 0}$ is defined as:
\begin{equation*}
  P_{V(X)}(x) = \sup_{f\in \hilb{\Omega} \setminus \{0 \}}
  \frac{
    | f(x) - (\Pi_{V(X)}f)(x) |
  }{
    \|f \|_{\hilb{\Omega}}
  }.
\end{equation*}
By definition, we have
\begin{equation*}
  \left|f(x) - \bl \Pi_{V(X)}f\br(x) \right| \le \|f\|_{\hilb{\Omega}} P_{V(X)}(x)
\end{equation*}
for any $f \in \hilb{\Omega}$ and $x \in \Omega$.

Since the power function $P_{V(X)}$ represents how well the space $V(X)$ approximates
any function in $\hilb{\Omega}$ with a bounded norm,
it is intuitively clear that the value of $P_{V(X)}$ is small if $X$ is a ``fine'' discretization of $\Omega$.
The fineness of a finite subset $X = \{x_1, \dots, x_N\} \subseteq \Omega$ can
be evaluated by the fill distance $h_{X, \Omega}$ of $X$ defined as
\begin{math}
  \sup_{x \in \Omega}  \min_{1 \le n \le N} \| x - x_n\|_2.
\end{math}
We introduce classical results on
the convergence rate of the power function as $h_{X, \Omega} \rightarrow 0$.
We introduce two kinds of these results: polynomial decay and exponential decay.%
\footnote{We note that more generalized results including in the case of conditionally positive definite kernels
and differentials of functions of RKHS
are proved \citep[Chapter 11]{wendland2004scattered}.}
Before introducing the results, we define smoothness of kernels.
\begin{dfn}
  \begin{enumerate}[label=(\roman*)]
    \item We say $(K, \Omega)$ has finite smoothness%
    \footnote{By abuse of notation, omitting $\Omega$,
    we also say ``$K$ has finite smoothness''.}
    with a smoothness parameter
    $\nu \in \frac{1}{2}\ZZ_{> 0}$,
    if $\Omega$ is bounded and satisfies an interior cone condition (see remark below),
    and satisfies either the following condition (a) or (b):
    (a) $K \in C^{2\nu} (\Omega^{\iota} \times \Omega^{\iota})$, and
    all the differentials of $K$ of order $2\nu$ are bounded on $\Omega \times \Omega$.
    Here $\Omega^{\iota}$ denotes the interior.
    (b) There exists $\Phi: \RR^d \rightarrow \RR$
    such that $K(x, y) = \Phi(x - y)$,
    $\nu + d/2 \in \ZZ$,
    $\Phi$ has continuous Fourier transformation $\hat{\Phi}$
    and $\hat{\Phi}(x) = \Theta ((1 + \|x \|_2^{2})^{-(\nu + d/2)})$ as $\|x\|_2 \rightarrow \infty$.

    \item We say $(K, \Omega)$ has infinite smoothness if
    $\Omega$ is a $d$-dimensional cube $\{x \in \RR^d: |x - a_0|_{\infty} \le r_0 \}$,
    $K(x, y) = \phi(\| x- y\|_2)$ with a function $\phi: \RR_{\ge 0} \rightarrow \RR$,
    and there exists a positive integer $l_0$ and a constant $M > 0$ such that
    $\varphi(r) := \phi(\sqrt{r})$ satisfies
    $|\frac{d^l \varphi}{dr^l}(r)| \le l! M^l$ for any $l \ge l_0$
    and $r \in \RR_{\ge 0}$.
  \end{enumerate}
\end{dfn}
\begin{rem}
   (i) Results introduced in this subsection depend on local polynomial reproduction on $\Omega$
   and such a result is hopeless if $\Omega$ is a general bounded set \parencite{wendland2004scattered}.
   The interior cone condition is a mild condition that assures such results.
   For example, if $\Omega$ is a cube
   $\{x: |x - a|_{\infty} \le r \}$ or
   ball $\{x: |x - a|_{2} \le r\}$, then this condition is satisfied.
  (ii) Since $\hat{\Phi}(x) = c \ (1 + \|x\|_2^2)^{-\nu -d/2}$ with $c > 0$
  for $\Phi(x) = \|x\|_2^{\nu}K_{\nu}(\|x \|_2)$,
  Mat\'ern kernels $K_{\mtk}^{(\nu)}$ have finite smoothness with smoothness parameter $\nu$.
  In addition, it can be shown that the RQ and SE kernels have infinite smoothness.
\end{rem}

\begin{thm}[\citet{wu1993local}, \citet{wendland2004scattered} Theorem 11.13]
  \label{thm:qunif-conv-pol}
  We assume $(K, \Omega)$ has finite smoothness with smoothness parameter $\nu$.
  Then there exist constants $C > 0$ and $h_0 > 0$ that depend
  only on $\nu, d, K$ and $\Omega$ such that
  \begin{math}
    \| P_{X} \|_{\linf{\Omega}} \le C h_{X, \Omega}^{\nu}
  \end{math}
  for any $X \subseteq \Omega$ with $h_{X, \Omega} \le h_0$.
\end{thm}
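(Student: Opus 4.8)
The plan is to bound the power function pointwise by substituting a convenient (non-optimal) approximation and then exploiting the smoothness of $K$. First I would record the variational characterization of the power function. Writing $P_X := P_{V(X)}$ and using the reproducing property together with self-adjointness of the orthogonal projection, one has $f(x)-(\Pi_{V(X)}f)(x)=\langle f,(I-\Pi_{V(X)})K(\cdot,x)\rangle_{\hilb{\Omega}}$, so by Cauchy--Schwarz $P_X(x)=\|(I-\Pi_{V(X)})K(\cdot,x)\|_{\hilb{\Omega}}$. Since $\Pi_{V(X)}K(\cdot,x)$ is the best approximation of $K(\cdot,x)$ from $V(X)=\mathrm{span}\{K(\cdot,x_j)\}$, this yields the quadratic-form identity
\[
P_X(x)^2=\min_{u\in\RR^{N}}\Bigl(K(x,x)-2\sum_{j}u_jK(x,x_j)+\sum_{i,j}u_iu_jK(x_i,x_j)\Bigr),
\]
and in particular any fixed vector $u=u(x)$ furnishes an upper bound for $P_X(x)^2$.

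Next I would supply the weights $u(x)$ from a local polynomial reproduction. Under the interior cone condition on $\Omega$ (this is exactly where that hypothesis enters), the classical construction provides, for every $x\in\Omega$ and all $X$ with $h_{X,\Omega}\le h_0$, coefficients $u_j(x)$ of degree $m:=\lceil\nu\rceil-1$ such that (i) $\sum_j u_j(x)p(x_j)=p(x)$ for every polynomial $p$ of degree $\le m$, (ii) $\sum_j|u_j(x)|\le C_1$, and (iii) $u_j(x)=0$ whenever $\|x-x_j\|_2>C_2 h_{X,\Omega}$, with $C_1,C_2,h_0$ depending only on $m,d,\Omega$. Introducing the nodes $\xi_0=x,\ \xi_j=x_j$ and weights $v_0=1,\ v_j=-u_j(x)$, property (i) says that the functional $\lambda=\sum_j v_j\delta_{\xi_j}$ annihilates polynomials of degree $\le m$, and the quadratic bound rewrites as $P_X(x)^2\le g(x):=\sum_{i,j}v_iv_j\,K(\xi_i,\xi_j)$, i.e. $\lambda$ applied to $K$ in each of its two arguments.

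The heart of the argument is then a Taylor expansion. Since $\nu\in\frac12\ZZ_{>0}$ we have $2\nu\in\ZZ_{>0}$ and $K\in C^{2\nu}$, so I would expand $K(s,t)$ about $(x,x)$ to total order $2\nu$:
\[
K(s,t)=\sum_{|\alpha|+|\beta|<2\nu}\frac{D_1^{\alpha}D_2^{\beta}K(x,x)}{\alpha!\,\beta!}(s-x)^{\alpha}(t-x)^{\beta}+R(s,t),
\]
with $|R(s,t)|\le C_K\bigl(\|s-x\|_2+\|t-x\|_2\bigr)^{2\nu}$, where $C_K$ controls the order-$2\nu$ derivatives of $K$ on $\Omega\times\Omega$ (this is where boundedness of the $2\nu$-th differentials is used). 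Applying $\sum_{i,j}v_iv_j$ and using $\sum_i v_i(\xi_i-x)^{\alpha}=0$ whenever $|\alpha|\le m$, every polynomial term dies: if one survived we would need $|\alpha|\ge m+1$ and $|\beta|\ge m+1$, forcing $|\alpha|+|\beta|\ge 2\lceil\nu\rceil\ge 2\nu$, contradicting $|\alpha|+|\beta|<2\nu$. Hence $g(x)=\sum_{i,j}v_iv_jR(\xi_i,\xi_j)$, and the locality (iii) confines all active nodes to $\|\xi_i-x\|_2\le C_2 h_{X,\Omega}$, so $|R(\xi_i,\xi_j)|\le C_K(2C_2 h_{X,\Omega})^{2\nu}$; together with (ii) this gives $g(x)\le C_K(2C_2)^{2\nu}(1+C_1)^2\,h_{X,\Omega}^{2\nu}$. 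Taking square roots and the supremum over $x\in\Omega$ produces $\|P_X\|_{\linf{\Omega}}\le C\,h_{X,\Omega}^{\nu}$ with $C$ depending only on $\nu,d,K,\Omega$.

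The step I expect to be the real obstacle is the construction and bookkeeping of the local polynomial reproduction: guaranteeing simultaneously the reproduction identity, the uniform $\ell^1$ bound, and the $O(h_{X,\Omega})$ support radius with constants independent of $x$ and uniform over all admissible $X$ is precisely what the interior cone condition buys, and matching the degree $m=\lceil\nu\rceil-1$ to the half-integer parameter $\nu$ so that the Taylor cancellation closes is the delicate point. For kernels satisfying the Fourier-decay condition (b) rather than (a), the pointwise Taylor argument must be replaced by a frequency-domain estimate identifying the native space with a Sobolev space $H^{\nu+d/2}$ and bounding the interpolation error there; I would treat that case separately, but it yields the same exponent $\nu$.
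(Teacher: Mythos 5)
The paper does not prove this statement at all: it is imported verbatim from Wu--Schaback and Wendland (Theorem 11.13), and the argument you reconstruct is exactly the one in that source --- the variational/quadratic-form characterization of $P_X$, a local polynomial reproduction of degree $\lceil\nu\rceil-1$ furnished by the interior cone condition, and Taylor cancellation of all terms of total order below $2\nu$, with case (b) handled by identifying the native space with a Sobolev space. Your outline is correct, and the one place you flag as the real obstacle (uniformity of the reproduction weights in $x$ and $X$, with the $\ell^1$ bound and $O(h_{X,\Omega})$ support radius) is indeed where all the work in the cited proof lives.
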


One can apply this result to RQ and SE kernels for any $\nu > 0$,
but a stronger result holds for these kernels.
\begin{thm}[\citet{madych1992bounds}, \citet{wendland2004scattered} Theorem 11.22]
  \label{thm:qunif-conv-exp}
  Let $\Omega \subset \RR^d$ be a cube
  and assume $K$ has infinite smoothness.
  Then, there exist constants $C_1, C_2, h_0 > 0$ depending only on
  $d, \Omega,$ and $K$ such that
  \begin{equation*}
    \| P_{X} \|_{L^{\infty}(\Omega)}
    \le
    C_1 \exp \bl  -C_2 /h_{X, \Omega} \br,
  \end{equation*}
  for any finite subset $X \subseteq \Omega$ with $h_{X, \Omega} \le h_0$.
\end{thm}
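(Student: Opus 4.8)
The plan is to bound $\|P_X\|_{\linf{\Omega}}$ pointwise by selecting, at each $x$, near-optimal interpolation-like weights coming from local polynomial reproduction, and then to exploit the analytic-type derivative bounds encoded in the infinite-smoothness hypothesis to make the resulting residual exponentially small once the polynomial degree is optimized against the fill distance.

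First I would record the variational characterisation of the power function. Since $\Pi_{V(X)}$ is the orthogonal projection onto $V(X)$ and $f(x) - (\Pi_{V(X)}f)(x) = \langle f, (I-\Pi_{V(X)})K(\cdot,x)\rangle_{\hilb{\Omega}}$ by the reproducing property and self-adjointness of $\Pi_{V(X)}$, Cauchy--Schwarz (with equality at $f = (I-\Pi_{V(X)})K(\cdot,x)$) gives $P_X(x) = \|(I-\Pi_{V(X)})K(\cdot,x)\|_{\hilb{\Omega}}$. Because the projection minimises distance to $V(X)$, every coefficient vector $u = (u_j)$ yields the upper bound
\begin{equation*}
  P_X(x)^2 \le K(x,x) - 2\sum_j u_j K(x,x_j) + \sum_{i,j} u_i u_j K(x_i,x_j) = (\lambda_x \otimes \lambda_x) K,
\end{equation*}
where $\lambda_x(g) := g(x) - \sum_j u_j\, g(x_j)$ and $\lambda_x\otimes\lambda_x$ means applying $\lambda_x$ in each argument of $K$.

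Next I would invoke local polynomial reproduction, which is available because a cube satisfies the interior cone condition: there are constants $h_0, c_1, c_2 > 0$ so that for every degree $\ell$ and every $X$ with $\ell\, h_{X,\Omega} \le h_0$ one may choose weights $u_j = u_j(x)$ for which $\lambda_x$ annihilates all polynomials of degree $\le \ell$, $\sum_j |u_j(x)| \le c_1$, and $u_j(x) = 0$ whenever $\|x - x_j\|_2 > c_2\, \ell\, h_{X,\Omega} =: \rho$. Fixing the second argument and Taylor-expanding $K(\cdot,z)$ about $x$ to degree $\ell$, the reproduction property makes $\lambda_x$ kill the polynomial part, so $\lambda_x^{(1)} K(\cdot,z)$ is exactly a Taylor remainder of order $\ell+1$. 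Writing $K(x,y) = \varphi(\|x-y\|_2^2)$ (so that $\varphi(r) = \phi(\sqrt r)$ is composed with the smooth quadratic $r(x,y)=\|x-y\|_2^2$, avoiding the non-smoothness of the Euclidean norm at the origin) and converting $|\varphi^{(l)}(r)| \le l!\,M^l$ into a bound $|D^\beta K| \le C\,|\beta|!\,\tilde M^{|\beta|}$ on mixed partials via a Faà di Bruno computation, the $(\ell+1)!$ from the derivative bound cancels the $1/(\ell+1)!$ of Taylor's remainder, leaving $|\lambda_x^{(1)} K(\cdot,z)| \le C\, c_1\, (\tilde M \rho)^{\ell+1}$. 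Bounding the remaining application of $\lambda_x$ by its $\ell_1$-norm $\le 1 + c_1$ then gives
\begin{equation*}
  P_X(x)^2 \le (\lambda_x\otimes\lambda_x)K \le C'\, c_1^2\,\bl \tilde M c_2\, \ell\, h_{X,\Omega}\br^{\ell+1}.
\end{equation*}

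Finally I would optimise the free degree $\ell$. Choosing $\ell \approx \lfloor \kappa / h_{X,\Omega}\rfloor$ for a small constant $\kappa$ forces $\tilde M c_2\, \ell\, h_{X,\Omega} \le \theta$ for a fixed $\theta \in (0,1)$, so the right-hand side is at most $C' c_1^2\, \theta^{\ell+1} \le C_1^2 \exp(-2C_2/h_{X,\Omega})$ for suitable $C_1, C_2 > 0$; taking square roots and the supremum over $x \in \Omega$ gives the claim, provided $h_{X,\Omega}$ is small enough that this $\ell$ respects $\ell\, h_{X,\Omega} \le h_0$, which fixes the threshold $h_0$. The main obstacle is the middle estimate: one must pass from the one-dimensional bound on $\varphi$ to the multivariate mixed derivatives of $\varphi(\|\cdot\|_2^2)$ with exactly the analytic form $|\beta|!\,\tilde M^{|\beta|}$, and then track the interplay between the factorial derivative growth $\ell!\,\tilde M^\ell$, the Taylor factor $1/\ell!$, and the fact that the support radius $\rho$ itself grows linearly in $\ell$. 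It is precisely this bookkeeping that both permits and pins down the exponential rate $\exp(-C_2/h_{X,\Omega})$, and arranging the constants so that a genuine power $\theta<1$ survives the optimisation over $\ell$ is the delicate part.
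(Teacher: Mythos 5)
The paper does not prove this statement at all: Theorem \ref{thm:qunif-conv-exp} is imported as a black box from \citet{madych1992bounds} and \citet{wendland2004scattered} (Theorem 11.22), so there is no internal proof to compare against. What you have written is a faithful reconstruction of the standard Madych--Nelson/Wendland argument for that cited result: the variational bound $P_X(x)^2 \le (\lambda_x\otimes\lambda_x)K$ for arbitrary quasi-interpolation weights, local polynomial reproduction of degree $\ell$ with localized support of radius $\rho \asymp \ell\, h_{X,\Omega}$, Taylor remainders killed up to order $\ell$, the cancellation of $(\ell+1)!$ against the factorial derivative growth coming from $|\varphi^{(l)}(r)|\le l!M^l$, and finally the choice $\ell \asymp 1/h_{X,\Omega}$. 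This is the right proof and the right order of operations.

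Two caveats on the details you gloss over. First, you attribute the needed polynomial reproduction to the interior cone condition, with the admissibility condition $\ell\, h_{X,\Omega}\le h_0$. The general cone-condition version of local polynomial reproduction (the one underlying the finite-smoothness Theorem \ref{thm:qunif-conv-pol}) only gives admissibility under a condition that is quadratic in $\ell$, which after optimizing $\ell$ would degrade the rate to $\exp(-C/\sqrt{h_{X,\Omega}})$. The linear-in-$\ell$ condition you use is exactly what the cube hypothesis buys (via a tensor-product/grid construction), which is why the theorem is stated for cubes; your proof should invoke that construction rather than the generic cone-condition lemma. Second, in that cube construction the $\ell_1$-norm of the weights is not bounded by a constant $c_1$ independent of $\ell$; it grows like $e^{c d \ell}$. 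This is harmless --- it is absorbed by shrinking $\theta$ in the factor $\theta^{\ell+1}$, i.e.\ by taking $\kappa$ smaller in $\ell\approx\kappa/h_{X,\Omega}$ --- but as written your final optimization step silently assumes a uniform $c_1$, and this is precisely the ``delicate bookkeeping'' you flag at the end. With those two repairs the argument is complete.
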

\begin{rem}
  (i) The assumption on $\Omega$ can be relaxed, i.e.,
  the set $\Omega$ is not necessarily a cube. See \parencite{madych1992bounds} for details.
  (ii) In the case of SE kernels, a stronger result holds.
    More precisely, for sufficiently small $h_{X, \Omega}$,
    $\| P_{X} \|_{L^{\infty}(\Omega)}
    \le C_1 \exp \bl  C_2 \log(h_{X, \Omega}) /h_{X, \Omega} \br$
    holds.
\end{rem}

\subsection{$P$-greedy Algorithm and its Convergence Rate}
\begin{algorithm}[tb]
  \caption{Construction of Newton basis with $P$-greedy algorithm (c.f. \citet{pazouki2011bases})}
\begin{algorithmic}
  \label{alg:nbasispgreedy}
  \Input{
    kernel $K$,
    admissible error $\admerr > 0$,
    a subset of points $\omdsc \subseteq \Omega$,
  }
  \Output{
    A subset of points $X_m \subseteq \omdsc$ and
    Newton basis $N_1, \dots, N_m$ of $V(X_m)$.}
  \STATE $\xi_1 := \argmax_{x \in \omdsc} K(x, x)$.
  \STATE $N_1(x) := \frac{K(x, \xi_1)}{\sqrt{K(\xi_1, \xi_1)}}$.
  \FOR{$m = 1, 2, 3, \dots,$}
    \STATE $P_{m}^2 (x) := K(x, x) - \sum_{k=1}^{m}N_k^2(x)$.
    \IF{$\max_{x \in \omdsc}P_{m}^2(x) < \admerr^2$}
      \STATE \textbf{return} $\{\xi_1, \dots, \xi_m\}$ and $\{N_1, \dots, N_m\}$.
    \ENDIF
    \STATE $\xi_{m + 1} := \argmax_{x \in \omdsc} P_{m}^2 (x)$.
    \STATE $u(x) := K(x, \xi_{m + 1}) - \sum_{k=1}^{m}N_k(\xi_{m + 1}) N_k(x)$,
    $N_{m + 1}(x) := u(x) / \sqrt{P_m^2(\xi_{m + 1})}$.
  \ENDFOR
\end{algorithmic}
\end{algorithm}

In a typical application, for a given discretization $\omdsc \subseteq \Omega$
and function $f \in \hilb{\Omega}$,
we want to find a finite subset $X = \{\xi_1, \dots, \xi_m\} \subseteq \omdsc$ with $|X| \ll |\omdsc|$
so that $f$ is close to an element of $V(X)$.
Several greedy algorithms are proposed to solve this problem
\parencite{de2005near,schaback2000adaptive,smuller2009komplexitaet}.
Among them, the $P$-greedy algorithm \parencite{de2005near} is most suitable
for our purpose, since the point selection depends only on $K$ and $\omdsc$ but not on the
function $f$ which is unknown to the learner in the bandit setting.

The $P$-greedy algorithm first selects a point $\xi_1 \in \omdsc$ maximizing $P_{V(\emptyset)}(x) = K(x, x)$
and after selecting points $X_{n - 1} =\{\xi_1, \dots, \xi_{n - 1}\}$, it selects
$\xi_n$ by $\xi_n = \argmax_{x \in \omdsc} P_{V(X_{n -1})}(x)$.
Following \myciteauthoryear{pazouki2011bases},
we introduce (a variant of) the $P$-greedy algorithm that simultaneously computes
the Newton basis \parencite{muller2009newton} in Algorithm \ref{alg:nbasispgreedy}.
If $\omdsc$ is finite, this algorithm outputs Newton Basis $N_1, \dots, N_m$ at the cost of
$O(|\omdsc|m^2)$ time complexity using $O(|\omdsc|m)$ space.
Newton basis $\{N_1, \dots, N_m\}$ is the Gram-Schmidt orthonormalization
of basis $\{K(\cdot, \xi_1), \dots, K(\cdot, \xi_m)\}$.
Because of orthonormality, the following equality holds
\citep[Lemma 5]{santin2017convergence}:
\begin{math}
  P_{V(X)}^2(x) = K(x, x) - \sum_{i = 1}^{m} N_i^2(x),
\end{math}
where $X = \{\xi_1, \dots, \xi_m\}$.
Seemingly, Algorithm \ref{alg:nbasispgreedy} is different from
the $P$-greedy algorithm described above, using this formula,
we can see that these two algorithms are identical.

The following theorem is essentially due to \myciteauthoryear{santin2017convergence}
and we provide a more generalized result.
\begin{thm}[\citet{santin2017convergence}]
  \label{thm:pgreedy-decay}
  Let $K: \Omega \times \Omega \rightarrow \RR$ be a symmetric positive definite kernel.
  Suppose that the $P$-greedy algorithm applied to $\omdsc \subseteq \Omega$ with
  error $\admerr$ gives
  $n_{\admerr}$ points $X \subseteq \omdsc$ with $|X| = n_{\admerr}$.
  Then the following statements hold:

  (i) Suppose $(K, \Omega)$ has finite smoothness with smoothness parameter
    $\nu > 0$.
    Then, there exists a constant $\hat{C} > 0$ depending only on $d, \nu, K$, and $\Omega$
    such that
    \begin{math}
      \| P_{V(X)} \|_{L^{\infty}\bl\omdsc\br} < \hat{C} n_{\admerr}^{-\nu/d}.
    \end{math}

    (ii) Suppose $(K, \Omega)$ has infinite smoothness.
    Then there exist constants $\hat{C}_1, \hat{C}_2 > 0$ depending only on
    $d, K,$ and $\Omega$ such that
     $\|P_{V(X)} \|_{L^{\infty}\bl \omdsc \br}
    < \hat{C_1} \exp\left(-\hat{C_2} n_{\admerr}^{1/d} \right)$.
\end{thm}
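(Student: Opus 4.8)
The plan is to recognize Algorithm \ref{alg:nbasispgreedy} as the \emph{strong greedy algorithm} for approximating the compact set $\mathcal{K}=\{K(\cdot,x):x\in\omdsc\}$ inside the Hilbert space $\hilb{\Omega}$, and then to control its convergence rate by the Kolmogorov widths of $\mathcal{K}$. The starting point is the identity $P_{V(X_m)}(x)=\|K(\cdot,x)-\Pi_{V(X_m)}K(\cdot,x)\|_{\hilb{\Omega}}=\mathrm{dist}_{\hilb{\Omega}}\!\big(K(\cdot,x),V(X_m)\big)$, which follows from $P_{V(X_m)}^2(x)=K(x,x)-\sum_{k\le m}N_k^2(x)$ and the orthonormality of the Newton basis. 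Hence $\tau_m:=\max_{x\in\omdsc}P_{V(X_m)}(x)=\max_{v\in\mathcal{K}}\mathrm{dist}_{\hilb{\Omega}}(v,V(X_m))$ is exactly the greedy approximation error after $m$ steps, and the selection $\xi_{m+1}=\argmax_{x\in\omdsc}P_{V(X_m)}(x)$ is exactly the greedy choice of the element of $\mathcal{K}$ farthest from the current subspace $V(X_m)$. The goal thus becomes to bound $\tau_{n_{\admerr}}$, and more generally $\tau_n$, in terms of $n$.

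For this I would invoke the comparison theorems for greedy algorithms in Hilbert spaces, which state that the greedy errors inherit, up to constants, the decay rate of the Kolmogorov $n$-widths $d_n(\mathcal{K})$: a polynomial bound $d_n(\mathcal{K})\le C_0 n^{-\alpha}$ forces $\tau_n\le C_1 n^{-\alpha}$, and a stretched-exponential bound $d_n(\mathcal{K})\le C_0\exp(-c_0 n^{\gamma})$ forces $\tau_n\le C_1\exp(-c_1 n^{\gamma})$. It therefore suffices to bound the widths $d_n(\mathcal{K})$, and here the power-function estimates already at our disposal enter. Choosing, for each $m$, a quasi-uniform reference set $Y_m\subseteq\omdsc$ with $|Y_m|=m$ and fill distance $h_{Y_m,\Omega}=O(m^{-1/d})$, the subspace $V(Y_m)$ is $m$-dimensional and gives $d_m(\mathcal{K})\le\max_{x\in\omdsc}\mathrm{dist}(K(\cdot,x),V(Y_m))=\|P_{V(Y_m)}\|_{\linf{\omdsc}}\le\|P_{V(Y_m)}\|_{\linf{\Omega}}$.

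Now the two regimes separate. In the finite-smoothness case I apply Theorem \ref{thm:qunif-conv-pol} to the reference set to get $\|P_{V(Y_m)}\|_{\linf{\Omega}}\le C h_{Y_m,\Omega}^{\nu}=O(m^{-\nu/d})$, so $d_m(\mathcal{K})=O(m^{-\nu/d})$; the comparison theorem then yields $\tau_n=O(n^{-\nu/d})$, which is (i). In the infinite-smoothness case I apply Theorem \ref{thm:qunif-conv-exp} to get $\|P_{V(Y_m)}\|_{\linf{\Omega}}\le C_1\exp(-C_2/h_{Y_m,\Omega})=O\!\big(\exp(-C_2' m^{1/d})\big)$, so $d_m(\mathcal{K})=O(\exp(-C_2' m^{1/d}))$, and the stretched-exponential comparison (with $\gamma=1/d$) gives $\tau_n=O(\exp(-\hat C_2 n^{1/d}))$, which is (ii). Specializing to $n=n_{\admerr}$ finishes both parts.

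The main obstacle is the passage from widths to greedy errors in the second, exponential regime: while the polynomial rate-preservation is classical and is what \citet{santin2017convergence} effectively use, the stretched-exponential comparison with exponent $\gamma=1/d<1$ must be applied (and, if necessary, reproved) carefully, since the constants degrade as $\gamma\to 0$, and this is exactly the generalization beyond the existing result. A secondary technical point I would handle is the interplay between $\omdsc$ and $\Omega$: Theorems \ref{thm:qunif-conv-pol} and \ref{thm:qunif-conv-exp} require the interior cone condition on $\Omega$, so I must take $\omdsc$ to be a net fine enough that it contains quasi-uniform subsets $Y_m$ of every cardinality with $h_{Y_m,\Omega}=O(m^{-1/d})$, while the estimate $\|P_{V(Y_m)}\|_{\linf{\omdsc}}\le\|P_{V(Y_m)}\|_{\linf{\Omega}}$ needed to transfer the bounds back is immediate since $\omdsc\subseteq\Omega$.
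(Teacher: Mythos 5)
Your proposal is correct and follows essentially the same route the paper indicates: interpret the $P$-greedy selection as the strong greedy algorithm on $\{K(\cdot,x):x\in\omdsc\}$ in $\hilb{\Omega}$, bound the Kolmogorov widths via quasi-uniform reference sets together with Theorems \ref{thm:qunif-conv-pol} and \ref{thm:qunif-conv-exp}, and transfer the polynomial and stretched-exponential rates to the greedy errors by \citep[Corollary 3.3]{devore2013greedy} (which already covers the exponent $\gamma=1/d<1$, so no reproof is needed). The only adjustment is that your reference sets $Y_m$ need not lie in $\omdsc$ --- the Kolmogorov width is an infimum over all $m$-dimensional subspaces of $\hilb{\Omega}$, so you may take $Y_m\subseteq\Omega$ quasi-uniform in $\Omega$ and use $\|P_{V(Y_m)}\|_{\linf{\omdsc}}\le\|P_{V(Y_m)}\|_{\linf{\Omega}}$, which removes any fineness assumption on $\omdsc$ and matches the theorem's generality.
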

The statements of the theorem are non-trivial in two folds.
First, by Theorems \ref{thm:qunif-conv-pol}, \ref{thm:qunif-conv-exp},
if $n \in \ZZ_{> 0}$ is sufficiently large,
there exists a subset $X \subseteq \Omega$ with $|X| = n$
that gives the same convergence rate above (e.g. $X$ is a uniform mesh of $\Omega$).
This theorem assures the same convergence rate is achieved by the points selected by
the $P$-greedy algorithm.
Secondly, it also assures that the same result holds even if the $P$-greedy algorithm
is applied to a subset $\omdsc \subseteq \Omega$.

If the kernel has finite smoothness,
\myciteauthoryear{santin2017convergence} only considered the case when
$\hilb{\Omega}$ is norm equivalent to a Sobolev space,
which is also norm equivalent to the RKHS associated with a Mat\'ern kernel.
One can prove
Theorem \ref{thm:pgreedy-decay} from Theorems \ref{thm:qunif-conv-pol},
\ref{thm:qunif-conv-exp} and \citep[Corollary 3.3]{devore2013greedy}
by the same argument to \myciteauthoryear{santin2017convergence}.

For later use, we provide a restatement of Theorem \ref{thm:pgreedy-decay} as follows.
\begin{cor}
  \label{cor:pgreedy-decay}
    Let $\alpha, q > 0$ be parameters and
    denote by $D = D_{q, \alpha}(T)$ the number
    of points returned by the $P$-greedy algorithm
    with error $\admerr = \alpha/ T^q$.

    (i) Suppose $K$ has finite smoothness with smoothness parameter
    $\nu > 0$.
    Then
    $D_{q, \alpha}(T) = O\left(\alpha^{-d/\nu} T^{dq/\nu}\right)$.

    \noindent
    (ii) Suppose $K$ has infinite smoothness.
    Then
    $D_{q, \alpha}(T) = O\left((q\log T - \log(\alpha))^d\right)$.
\end{cor}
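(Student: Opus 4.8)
The plan is to read off $D_{q,\alpha}(T)$ directly from the convergence rate in Theorem~\ref{thm:pgreedy-decay} by inverting the relationship between the number of selected points and the admissible error. The starting observation is that the point selection $\xi_1,\xi_2,\dots$ performed by the $P$-greedy algorithm (Algorithm~\ref{alg:nbasispgreedy}) does not depend on the admissible error $\admerr$: the error enters only through the stopping rule $\max_{x\in\omdsc}P_m^2(x)<\admerr^2$. Hence, writing $X_m=\{\xi_1,\dots,\xi_m\}$ for the first $m$ greedy points, the number $D=D_{q,\alpha}(T)$ returned with $\admerr=\alpha/T^q$ is exactly the smallest index $m$ for which $\|P_{V(X_m)}\|_{L^{\infty}(\omdsc)}<\alpha/T^q$, and Theorem~\ref{thm:pgreedy-decay} furnishes an upper bound on $\|P_{V(X_m)}\|_{L^{\infty}(\omdsc)}$ as a function of $m$.

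First I would treat the finite smoothness case (i). By Theorem~\ref{thm:pgreedy-decay}(i) there is a constant $\hat{C}$ with $\|P_{V(X_m)}\|_{L^{\infty}(\omdsc)}<\hat{C}\,m^{-\nu/d}$ for all sufficiently large $m$. Consequently, as soon as $m$ satisfies $\hat{C}\,m^{-\nu/d}\le\alpha/T^q$ we get $\|P_{V(X_m)}\|_{L^{\infty}(\omdsc)}<\alpha/T^q$, so the stopping criterion is met and $D$ is no larger than the least such $m$. Solving $\hat{C}\,m^{-\nu/d}\le\alpha/T^q$ for $m$ yields $m\ge\hat{C}^{d/\nu}\alpha^{-d/\nu}T^{dq/\nu}$, whence $D=O(\alpha^{-d/\nu}T^{dq/\nu})$, which is (i). The caveat ``$m$ sufficiently large'' is harmless for the asymptotic claim, since $\admerr=\alpha/T^q\to0$ forces $D\to\infty$ as $T\to\infty$.

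The infinite smoothness case (ii) is identical in structure, replacing the polynomial bound by the exponential one from Theorem~\ref{thm:pgreedy-decay}(ii): from $\hat{C}_1\exp(-\hat{C}_2 m^{1/d})\le\alpha/T^q$, taking logarithms gives $\hat{C}_2 m^{1/d}\ge q\log T-\log\alpha+\log\hat{C}_1$, so that $D$ is at most $\hat{C}_2^{-d}\bl q\log T-\log\alpha+\log\hat{C}_1\br^{d}=O\bl(q\log T-\log\alpha)^d\br$.

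I do not anticipate a genuine obstacle, since the statement is essentially a restatement of Theorem~\ref{thm:pgreedy-decay}; the only point requiring care is the first paragraph's reading of that theorem as a bound on $\|P_{V(X_m)}\|_{L^{\infty}(\omdsc)}$ for \emph{each} greedy prefix $X_m$ as a function of $m$, which is justified by the error-independence of the greedy selection and lets me convert ``$D$ points suffice to reach error $\admerr$'' into an explicit bound on $D$. Everything after that is elementary inversion of the two rate functions.
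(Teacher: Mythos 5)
Your proposal is correct and follows essentially the same route as the paper: both proofs exploit the fact that the greedy point selection is independent of the admissible error so that Theorem~\ref{thm:pgreedy-decay} can be read as a bound on $\|P_{V(X_m)}\|_{L^{\infty}(\omdsc)}$ for a greedy prefix, and then invert the polynomial or exponential rate to bound $D$. The only cosmetic difference is framing --- the paper lower-bounds $\max_{x}P_{D-1}(x)$ by $\alpha/T^q$ (the stopping rule failed at $D-1$) and upper-bounds it via the theorem applied with error $\max_x P_{D-1}(x)+\epsilon$, whereas you take the first $m$ at which the theorem's bound falls below $\alpha/T^q$; these yield the same inequality, and you additionally write out the finite-smoothness case that the paper omits ``for simplicity.''
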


\section{Misspecified Linear Bandit Problem}
\label{sec:miss-spec}
Since we can approximate $f \in \hilb{\Omega}$ by an element of $V(X)$,
where $V(X)$ is a finite dimensional subspace of the RKHS,
we study a linear bandit problem where the linear model is misspecified,
i.e, the misspecified linear bandit problem \citep{lattimore2020learning,lattimore2020book}.
In this section, we introduce several algorithms for the stochastic and adversarial misspecified linear bandit problems.
It turns out that such algorithms can be constructed by modifying
(or even without modification) algorithms for the linear bandit problem.
We provide proofs in this section in the supplementary material.

First, we provide a formulation of the stochastic misspecified linear bandit problem
suitable for our purpose.
Let $\aset$ be a set and suppose that there exists a
map $x \mapsto \wx$ from $\aset$ to the unit ball
$\{\xi \in \RR^D: \|\xi \|_2 \le 1\}$ of a Euclidean space.
In each round $t = 1, 2, \dots, T$,
a learner selects an action $x_t \in \aset$ and the environment
reveals a noisy reward $y_t = g(x_t) + \varepsilon_t$,
where
$g(x) := \langle \theta, \wx\rangle + \omega(x)$, $\theta \in \RR^D$, and
$\omega(x)$ is a biased noise and satisfies
$\sup_{x \in \aset} \|\omega(x)\| \le \err$ and $\err > 0$ is known to the learner.
We also assume that there exists $B > 0$ such that
$\sup_{x \in \aset}\|g(x)\| \le B$ and $\| \theta\|_2 \le B$.
As before,
$\{\varepsilon_t\}_{t \ge 1}$ is conditionally $R$-sub-Gaussian
w.r.t a filtration $\{\filt_t\}_{t \ge 1}$ and we assume that
$\wx_t$ is $\filt_t$-measurable and $y_{t}$ is $\filt_{t + 1}$-measurable.
The regret is defined as
\begin{math}
  R(T) := \sum_{t=1}^{T}
  \bl \sup_{x \in \aset} g(x) - g(x_t) \br.
\end{math}
We can formulate the adversarial misspecified linear bandit problem in a similar way.
Let $\{g_t\}_{t=1}^{T}$ be a sequence of functions on $\aset$
with $g_t (x) = \langle \theta_t, \wx \rangle + \omega_t(x)$,
$\theta_t \in \RR^{D}$, and
$\sup_{x \in \aset} \| \omega_t(x)\| \le \err$,
where the map $x \mapsto \wx$ is as before.
We also assume that there exists $B > 0$ such that
$\sup_{x \in \aset}|g(x)| \le B$ and $\| \theta_t\|_2 \le B$.
In each round, $t = 1, \dots, T$, the learner selects an arm $x_t \in \aset$ and
observes a reward $g_t(x_t)$.
The cumulative regret is defined as
$R_T = \sup_{x \in \aset}\sum_{t=1}^{T}g_t(x) - \sum_{t=1}^{T}g_t(x_t)$.

First, we introduce a modification of LinUCB \parencite{abbasi2011improved}.
To do this, we prepare notation for the stochastic linear bandit problem.
Let $\lambda > 0$ and $\delta > 0$ be parameters.
We define
$A_t := \lambda 1_D + \sum_{s=1}^{t} \wx_s \wx_s^\trn$,
$b_t := \sum_{s=1}^{t} y_s \wx_s$, and $\thetahat_t := A_t^{-1} b_t$.
Here, $1_D$ is the identity matrix of size $D$.
For $x \in \RR^{D}$, we define the Mahalanobis norm as
$\|x\|_{A_{t}^{-1}} := \sqrt{x^{\trn} A_{t}^{-1} x}$
and define $\beta_t$ as
\begin{equation*}
  \beta_t := \beta(A_t, \delta, \lambda) := R \sqrt{\log \frac{\det \lambda^{-1} A_t }{\delta^2}} + \sqrt{\lambda} B.
\end{equation*}
We note that by the proof of \citep[Lemma 11]{abbasi2011improved},
computational complexity for updating $\beta_t$
is $O(D^2)$ at each round.

\myciteauthoryear{lattimore2020learning} (see appendix of its arXiv version)
considered a modification of LinUCB which selects
$x \in \aset$ maximizing (modified) UCB
$\langle \thetahat_t, \wx \rangle + \beta_t \|\wx\|_{A_t^{-1}}
+ \err \sum_{s=1}^{t}
|\wx^\trn A_t^{-1} \wx_s|$
in $(t + 1)$th round and
proved the regret of the algorithm is upper bounded by $O(D\sqrt{T} \log(T) + \err T \sqrt{D \log(T)})$.
However, computing the above value requires $O(t)$ time for each arm $x \in \aset$.
Therefore, instead of incurring additional $\sqrt{D}$ factor in the second term
in the regret upper bound above,
we consider another upper confidence bound which can be easily computed.
In $(t + 1)$th round, our modification of UCB type algorithm selects arm $x \in \aset$ maximizing the modified UCB
\begin{math}
  \langle \thetahat_t, \wx\rangle +
  \|\wx\|_{A_t^{-1}}\left(
  \beta_t + \err \psi_t
  \right),
\end{math}
where $\psi_t$ is defined as $\sum_{s=1}^{t} \| \wx_s \|_{A_{s-1}^{-1}}$.
Then by storing $\psi_t$ in each round,
the complexity for computing this value is given as
$O(D^2)$ for each $x \in \aset$ and
as is well-known, one can update $A_t^{-1}$ in $O(D^2)$ time using the Sherman–Morrison formula.
By the standard argument, we can prove the following regret bound.
\begin{prop}
  \label{prop:app-ucb}
  Let notation and assumptions be as above.
  We further assume that $\lambda \ge 1$.
  Then with probability at least $1 - \delta$,
  the regret $R(T)$ of the modified UCB algorithm satisfies
  \begin{math}
    R(T) \le 2\beta_T \sqrt{T} \sqrt{2 \log \det (\lambda^{-1}A_t)}
    \allowbreak + 2\err T \allowbreak +
    4\err T \log(\det(\lambda^{-1}A_T)).
  \end{math}
  In particular, we have
  \begin{equation*}
    R(T) = \otilde\left(\sqrt{DT\log(1/\delta)} + D\sqrt{T} +\err D T\right).
  \end{equation*}
\end{prop}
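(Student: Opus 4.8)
The plan is to run the standard optimism-based analysis of LinUCB, carefully isolating the contribution of the misspecification term $\omega$ and showing that it is controlled by the cheaply computable quantity $\err\psi_t$. First I would establish a confidence bound of the form
\[
  |\langle \thetahat_t - \theta, \wx\rangle| \le \|\wx\|_{A_t^{-1}}\left(\beta_t + \err\psi_t\right)
\]
valid simultaneously for all $t$ and all arms $x$, with probability at least $1-\delta$. To do this I would expand $\thetahat_t - \theta = A_t^{-1} b_t - \theta$ and substitute $y_s = \langle\theta,\wx_s\rangle + \omega(x_s) + \varepsilon_s$ to split it into three pieces: a regularization term $-\lambda A_t^{-1}\theta$, a martingale term $A_t^{-1}\sum_s \varepsilon_s \wx_s$, and a bias term $A_t^{-1}\sum_s \omega(x_s)\wx_s$. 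The first two pieces are exactly the estimation error of ordinary ridge regression, so the self-normalized bound of \citet{abbasi2011improved} gives that their combined $A_t$-norm is at most $\beta_t$ with probability $1-\delta$; after Cauchy--Schwarz this contributes $\|\wx\|_{A_t^{-1}}\beta_t$. The bias piece is the genuinely new term and is handled next.

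For the bias term I would use $|\omega(x_s)|\le\err$ to write $|\sum_s \omega(x_s)\langle\wx, A_t^{-1}\wx_s\rangle| \le \err\sum_s|\langle\wx, A_t^{-1}\wx_s\rangle|$, and then prove the key inequality
\[
  \sum_{s=1}^{t}|\langle\wx, A_t^{-1}\wx_s\rangle| \le \|\wx\|_{A_t^{-1}}\,\psi_t .
\]
This follows from Cauchy--Schwarz in the $A_t^{-1}$-inner product, $|\langle\wx, A_t^{-1}\wx_s\rangle| \le \|\wx\|_{A_t^{-1}}\|\wx_s\|_{A_t^{-1}}$, together with the monotonicity $A_t \succeq A_{s-1}$ for $s \le t$, which yields $A_t^{-1}\preceq A_{s-1}^{-1}$, hence $\|\wx_s\|_{A_t^{-1}} \le \|\wx_s\|_{A_{s-1}^{-1}}$ and $\sum_s \|\wx_s\|_{A_t^{-1}} \le \psi_t$. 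This monotonicity replacement is the whole point of the modified algorithm and is to my mind the only conceptually new step: it replaces the $O(t)$ quantity $\sum_s|\wx^\trn A_t^{-1}\wx_s|$ of \citet{lattimore2020learning} by $\|\wx\|_{A_t^{-1}}\psi_t$ at the cost of loosening the second regret term.

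Finally I would run the usual optimism argument. Since $|\omega|\le\err$, the confidence bound gives $g(x) \le \langle\thetahat_t,\wx\rangle + \|\wx\|_{A_t^{-1}}(\beta_t+\err\psi_t) + \err$ for every arm, so the UCB maximizer $x_{t+1}$ satisfies $g(x^\ast) \le g(x_{t+1}) + 2\|\wx_{t+1}\|_{A_t^{-1}}(\beta_t+\err\psi_t) + 2\err$ after applying the confidence bound a second time at $x_{t+1}$ and using $\langle\theta,\wx_{t+1}\rangle\le g(x_{t+1})+\err$. Summing the per-round regret and using that $\beta_t$ is nondecreasing produces two sums involving $\sum_t \|\wx_t\|_{A_{t-1}^{-1}}$. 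The assumption $\lambda\ge 1$ with $\|\wx\|_2\le 1$ forces $\|\wx_t\|_{A_{t-1}^{-1}}\le 1$, so the elliptical potential lemma \citep[Lemma 11]{abbasi2011improved} gives $\sum_t \|\wx_t\|_{A_{t-1}^{-1}}^2 \le 2\log\det(\lambda^{-1}A_T)$; one Cauchy--Schwarz controls the $\beta_T$ term, while for the misspecification term I bound $\psi_{t-1}\le\psi_T$ and use $\psi_T^2 = \left(\sum_t\|\wx_t\|_{A_{t-1}^{-1}}\right)^2 \le 2T\log\det(\lambda^{-1}A_T)$ from the same lemma, giving the three stated terms exactly. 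The $\otilde$ form then follows from $\log\det(\lambda^{-1}A_T) = O(D\log T)$ and $\beta_T = O(\sqrt{D\log T} + \sqrt{\log(1/\delta)})$. I expect the main obstacle to be bookkeeping rather than conceptual: keeping the index shifts among $A_{t-1}$, $\psi_{t-1}$, and $\beta_{t-1}$ consistent so that both applications of the potential lemma go through cleanly.
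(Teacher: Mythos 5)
Your proposal is correct and follows essentially the same route as the paper's proof: the same three-way decomposition of $\thetahat_t-\theta$, the same monotonicity argument $A_{s-1}\preceq A_t \Rightarrow \|\wx_s\|_{A_t^{-1}}\le\|\wx_s\|_{A_{s-1}^{-1}}$ to bound the bias term by $\|\wx\|_{A_t^{-1}}\psi_t$, and the same final summation via the elliptical potential lemma with $\psi_T^2\le 2T\log\det(\lambda^{-1}A_T)$. No substantive differences to report.
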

In the supplementary material, we also introduce a modification of Thompson Sampling.


The regret upper bound provided above does not depend on the arm set $\aset$.
Moreover, the same results hold even if the arm set changes over time step $t$
(with minor modification of the definition of regret).
On the other hand, several authors \parencite{lattimore2020learning,auer2002using,valko2013finite}
studied algorithm whose regret depends on the cardinality of the
arm set in the stochastic linear or RKHS setting.
In some rounds, such algorithms eliminate arms that are supposed to be non-optimal
with a high probability
and therefore the arm set should be the same over time.
Generally, these algorithms are more complicated than LinUCB or Thompson Sampling.
However, recently, \myciteauthoryear{lattimore2020learning} proposed
a simpler and sophisticated algorithm
called PHASED ELIMINATION using Kiefer–Wolfowitz theorem.
Furthermore, they showed that it works well
for the stochastic misspecified linear bandit problem without modification.
More precisely, they proved the following result.
\begin{thm}[\citet{lattimore2020learning,lattimore2020book}]
  \label{thm:lmb-pe}
  Let $R(T)$ be the regret PHASED ELIMINATION incurs for the stochastic misspecified linear bandit problem.
  We further assume that $\{\varepsilon_t\}$ is independent $R$-sub-Gaussian.
  Then, with probability at least $1 - \delta$, we have
  \begin{equation*}
    R(T) = O\left(\sqrt{DT \log \left( \frac{|\aset| \log(T)}{\delta}\right)} +
    \err \sqrt{D} T \log(T) \right).
  \end{equation*}
  Moreover the total computational complexity up to round $T$
  is given as $O( D^2 |\aset| \log \log (D) \log(T) + TD^2)$.
\end{thm}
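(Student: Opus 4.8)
The plan is to reconstruct the standard \textsc{Phased Elimination} analysis of \citet{lattimore2020book} and to track the additional error introduced by the misspecification bound $\err$. Recall the algorithm proceeds in phases $\ell = 1, 2, \dots, L$ with $L = O(\log T)$: it maintains an active set $\aset_\ell \subseteq \aset$ (initialized to $\aset$), and in phase $\ell$ fixes a target accuracy $\eta_\ell = 2^{-\ell}$. Using the Kiefer--Wolfowitz theorem it computes a $G$-optimal design $\pi_\ell$ supported on $\aset_\ell$ with $\max_{x \in \aset_\ell}\|\wx\|_{V(\pi_\ell)^{-1}}^2 \le D$ and $|\supp(\pi_\ell)| = O(D \log\log D)$, plays each supported arm a number of times proportional to $\pi_\ell$ for a total of $T_\ell = O\!\left(D \eta_\ell^{-2}\log(|\aset| L/\delta)\right)$ rounds, forms the least-squares estimate $\thetahat_\ell$, and eliminates from $\aset_\ell$ every arm whose estimated value lags the best by more than a threshold of order $\eta_\ell + \err\sqrt{D}$.

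First I would establish the concentration step, which is where the misspecification enters. Writing $V_\ell = \sum_t \wx_t \wx_t^{\trn}$ for the (unregularized) design matrix of phase $\ell$, the prediction error at a fixed arm decomposes as $\langle \thetahat_\ell - \theta, \wx\rangle = \langle V_\ell^{-1}\sum_t \varepsilon_t \wx_t, \wx\rangle + \langle V_\ell^{-1}\sum_t \omega(x_t)\wx_t, \wx\rangle$. For the noise term, independence and sub-Gaussianity give a bound of order $R\|\wx\|_{V_\ell^{-1}}\sqrt{\log(1/\delta')}$, and since $V_\ell = T_\ell V(\pi_\ell)$ the design guarantee yields $\|\wx\|_{V_\ell^{-1}}^2 \le D/T_\ell$, which turns this into $\eta_\ell$ once $T_\ell$ is chosen as above. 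For the bias term, a Cauchy--Schwarz argument over the $T_\ell$ summands together with $\sum_t \langle V_\ell^{-1}\wx_t, \wx\rangle^2 = \|\wx\|_{V_\ell^{-1}}^2 \le D/T_\ell$ yields the uniform bound $\err\sqrt{D}$, independent of $\eta_\ell$. A union bound over the $|\aset|$ arms and the $L = O(\log T)$ phases produces the good event of probability at least $1 - \delta$ and the $\log(|\aset|\log(T)/\delta)$ factor inside the square root.

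Next I would argue that on the good event the optimal arm is never eliminated and every arm surviving phase $\ell$ is $O(\eta_\ell + \err\sqrt{D})$-suboptimal; both follow by combining the elimination rule with the two-sided confidence bound just established. The regret bookkeeping then splits into two contributions. The statistical part contributes, in phase $\ell$, at most $\eta_{\ell-1} T_\ell = O(D\eta_\ell^{-1}\log(|\aset|L/\delta))$, and since this grows geometrically it is dominated by the last phase, giving the leading term $\otilde\!\left(\sqrt{DT\log(|\aset|\log(T)/\delta)}\right)$. The misspecification part contributes an irreducible $\err\sqrt{D}$ per round that elimination cannot remove, so accumulating it across the $O(\log T)$ phases yields the $\err\sqrt{D}\,T\log(T)$ term. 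The complexity claim follows by bounding each phase: maintaining $V_\ell^{-1}$ via Sherman--Morrison while running a Frank--Wolfe solver for the $G$-optimal design costs $O(D^2|\aset|\log\log D)$ per phase (the $\log\log D$ coming from the support/iteration bound), over $O(\log T)$ phases, plus $O(D^2)$ per round over all $T$ rounds for the running least-squares statistics, giving $O(D^2|\aset|\log\log(D)\log(T) + TD^2)$.

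The main obstacle is the misspecification analysis: one must show that the least-squares estimate stays robust under the biased perturbation $\omega$, i.e. that the bias term inflates the confidence width by only $\err\sqrt{D}$ uniformly over arms, and then verify that widening the elimination threshold by exactly this amount keeps the optimal arm alive while still eliminating genuinely suboptimal arms fast enough. Getting the $\err$-dependent term to come out as $\err\sqrt{D}\,T\log(T)$ rather than something larger hinges on the uniform $\sqrt{D}$ (rather than $\sqrt{D T_\ell}$) bias bound, which is precisely what the $G$-optimal design buys.
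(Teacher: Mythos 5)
The paper does not prove this theorem: it is imported verbatim from \citet{lattimore2020learning,lattimore2020book}, and the only original content is the remark that the cited proof, though stated for expected regret, already yields the high-probability form. Your reconstruction follows the same route as the cited proof (phased $G$-optimal design via Kiefer--Wolfowitz, the $\|\wx\|_{V_\ell^{-1}}^2 \le D/T_\ell$ guarantee, Cauchy--Schwarz on the bias term to get the uniform $\err\sqrt{D}$ inflation, union bound over arms and phases), and the concentration and complexity accounting are sound. One point deserves care: the origin of the $\log(T)$ factor on the misspecification term. As you set things up --- elimination threshold widened by $\err\sqrt{D}$, so the optimal arm is never eliminated and survivors of phase $\ell$ are $O(\eta_\ell + \err\sqrt{D})$-suboptimal --- summing $T_\ell\cdot\err\sqrt{D}$ over phases gives $\err\sqrt{D}\,T$ with \emph{no} $\log T$; your phrase ``accumulating it across the $O(\log T)$ phases'' does not actually produce one under your own hypotheses (which is harmless for an $O(\cdot)$ upper bound, but means the stated mechanism is not the real one). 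In the version the paper emphasizes --- PHASED ELIMINATION run \emph{without modification}, i.e.\ with the unwidened threshold $2\eta_\ell$ --- the optimal arm \emph{can} be eliminated once $\eta_\ell \lesssim \err\sqrt{D}$; the $\log(T)$ arises because the value of the best surviving arm may then drift downward by $O(\err\sqrt{D})$ in each subsequent phase, so arms active in phase $\ell$ are only $O(\eta_\ell + \ell\,\err\sqrt{D})$-suboptimal and the sum $\sum_\ell T_\ell\,\ell\,\err\sqrt{D}$ gives $\err\sqrt{D}\,T\log(T)$. Your widened-threshold variant is a legitimate (indeed slightly stronger) alternative, but it requires $\err$ to be known to the learner and is a modification of the algorithm, whereas the point of the citation is that no modification is needed.
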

\begin{rem}
  Although they provided an upper bound for the expected regret,
  it is not difficult to see that their proof gave a high probability regret upper bound.
\end{rem}

Next, we show that EXP3 for adversarial linear bandits (c.f. \citet{lattimore2020book})
works for the adversarial misspecified linear bandits without modification.
We introduce notations for EXP3.
Let $\eta>0$ be a learning rate,
$\gamma$ an exploration parameter,
and $\piexp$ be an exploration distribution
over $\aset$.
For a distribution $\pi$ on $\aset$, we define a matrix
$Q(\pi) := \sum_{x \in \aset} \pi(x) \wx \wx^{\trn}$.
We also put $\phi_t := g_t(x_t) Q_t^{-1}\wx_t$ and
$\phi_t(x) := \langle \phi_t, \wx\rangle$ for $x \in \aset$, where
the matrix $Q_t$ is defined later.
We define a distribution $q_t$ over $\aset$
by $q_t(x) \sim \exp(\eta \sum_{s=1}^{t-1}\phi_s(x))$
and a distribution $p_t$ by $p_t(x) = \gamma \piexp(x) + (1-\gamma)q_t(x)$
for $x \in \aset$.
The matrix $Q_t$ is defined as $Q(p_t)$.
We put $\Gamma(\piexp) := \sup_{x \in \aset}\wx^\trn Q(\piexp)^{-1}\wx$.
\begin{prop}
  \label{prop:appexp3}
  We assume that $\{\wx \mid x \in \aset \}$ spans $\RR^D$.
  We also assume $\piexp$ satisfies $\Gamma(\piexp) \le D$
  and we take $\gamma = B\Gamma(\piexp)\eta$.
  Then applying EXP3 to the adversarial misspecified linear bandit problem,
  we have the following upper bound for the expected regret:
  \begin{equation*}
    \ex{R(T)} \le 2\varepsilon T +
    e B^2 \eta DT + \frac{2 \err T } {B\eta} + \frac{\log | \aset |}{\eta}.
  \end{equation*}
\end{prop}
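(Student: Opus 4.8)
The plan is to run the standard analysis of EXP3 for adversarial linear bandits through the importance-weighted reward estimator $\phi_t(x)=\langle\phi_t,\wx\rangle$, while carefully tracking the bias that the misspecification $\omega_t$ introduces. The first step is to compute the conditional expectation of the estimator. Since $x_t\sim p_t$ and $Q_t=Q(p_t)$ is $\filt_t$-measurable and invertible (because $Q_t\succeq\gamma Q(\piexp)\succ 0$ under the spanning assumption), a direct computation gives $\ex{\phi_t\mid\filt_t}=\theta_t+\beta_t$, where $\beta_t:=Q_t^{-1}\sum_{x\in\aset}p_t(x)\omega_t(x)\wx$ is an induced bias, and hence $\ex{\phi_t(x)\mid\filt_t}=g_t(x)-\omega_t(x)+\langle\beta_t,\wx\rangle$. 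Thus, unlike the well-specified case, the estimator is biased by the two terms $-\omega_t(x)$ (the misspecification entering directly) and $\langle\beta_t,\wx\rangle$ (the misspecification entering through the observed reward $g_t(x_t)$ inside $\phi_t$).

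Next I would establish the core exponential-weights inequality. The role of the calibration $\gamma=B\Gamma(\piexp)\eta$ is to guarantee the range bound $\eta|\phi_t(x)|\le 1$: using $|g_t(x_t)|\le B$, Cauchy--Schwarz in the $Q_t^{-1}$ inner product, and $\|\wx\|^2_{Q_t^{-1}}\le\gamma^{-1}\wx^\trn Q(\piexp)^{-1}\wx\le\gamma^{-1}\Gamma(\piexp)$, one gets $|\phi_t(x)|\le B\Gamma(\piexp)/\gamma=1/\eta$. This precondition lets me apply the standard potential argument (with $e^z\le 1+z+z^2$ for $z\le 1$) to the weights $q_t\propto\exp(\eta\sum_{s<t}\phi_s)$, yielding, for the fixed optimizer $x^\star:=\argmax_{x\in\aset}\sum_t g_t(x)$ (deterministic since the adversary is oblivious),
\[
  \sum_{t=1}^T \phi_t(x^\star) - \sum_{t=1}^{T}\langle q_t, \phi_t\rangle
  \le \frac{\log|\aset|}{\eta} + \eta\sum_{t=1}^{T}\sum_{x\in\aset} q_t(x)\phi_t(x)^2.
\]
The second-order term is controlled in expectation by the trace identity: writing $Q(q_t)\preceq(1-\gamma)^{-1}Q_t$ and using $\sum_{x}p_t(x)\|\wx\|^2_{Q_t^{-1}}=\Tr(Q_t^{-1}Q_t)=D$, one obtains $\ex{\sum_x q_t(x)\phi_t(x)^2\mid\filt_t}\le B^2 D/(1-\gamma)$.

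It then remains to convert the estimated rewards back to the true rewards in expectation and collect the errors. I would decompose $g_t(x^\star)-\langle p_t,g_t\rangle=\big(g_t(x^\star)-\langle q_t,g_t\rangle\big)+\big(\langle q_t,g_t\rangle-\langle p_t,g_t\rangle\big)$; the second bracket is the exploration overhead $\gamma(\langle q_t,g_t\rangle-\langle\piexp,g_t\rangle)\le 2\gamma B$, which together with the second-order term above contributes the $O(\eta B^2 D T)$ term (packaged as $eB^2\eta DT$). For the first bracket I substitute $g_t(x)=\ex{\phi_t(x)\mid\filt_t}+\omega_t(x)-\langle\beta_t,\wx\rangle$, so that after summation the $\phi_t$-part is handled by the exponential-weights inequality, while the two bias families are bounded crudely: $|\omega_t(x^\star)|+|\langle q_t,\omega_t\rangle|\le 2\err$ gives the $2\err T$ term, and $|\langle\beta_t,\wx\rangle|\le\err\,\Gamma(\piexp)/\gamma=\err/(B\eta)$ (again via Cauchy--Schwarz and $Q_t\succeq\gamma Q(\piexp)$) gives the $2\err T/(B\eta)$ term. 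Assembling these pieces and taking total expectation yields the claimed bound.

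The main obstacle is the bias analysis forced by misspecification, which is absent in the standard EXP3 argument. Concretely, the non-routine observations are that $\omega_t$ contaminates the estimator twice---directly as $-\omega_t(x)$ and indirectly as $\langle\beta_t,\wx\rangle$ through $g_t(x_t)$---and that the single calibration $\gamma=B\Gamma(\piexp)\eta$ must simultaneously enforce the estimator range bound $\eta|\phi_t|\le 1$ (so the potential argument applies) and keep the induced bias at the level $\err/(B\eta)$. The uniform control $\sup_{x,x'}|\wx^\trn Q_t^{-1}\wx'|\le\Gamma(\piexp)/\gamma$, coming from the exploration floor $Q_t\succeq\gamma Q(\piexp)$ and the assumption $\Gamma(\piexp)\le D$, is what makes both of these estimates go through.
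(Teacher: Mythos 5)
Your argument is correct and follows essentially the same route as the paper's proof: the same importance-weighted estimator, the same range bound $\eta|\phi_t(x)|\le 1$ from the calibration $\gamma=B\Gamma(\piexp)\eta$, the same bias bound $\err\,\Gamma(\piexp)/\gamma$ applied twice (your $\langle\beta_t,\wx\rangle$ is exactly the paper's $\ex[t-1]{\omega_t(x_t)\wx^{\trn}Q_t^{-1}\wx_t}$, i.e.\ its $S_2$-type term), and the same trace bound $B^2D/(1-\gamma)$ on the second moment. The only nitpick is the quadratic inequality: to land on the stated constant $e$ (rather than $3$) in front of $B^2\eta DT$ you need the sharper $e^y\le 1+y+(e-2)y^2$ for $y\le 1$, so that the $(e-2)\eta B^2DT$ variance term combines with the $2\gamma BT\le 2\eta B^2DT$ exploration overhead to give exactly $eB^2\eta DT$.
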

\begin{rem}
By the Kiefer–Wolfowitz theorem, there exists an exploitation distribution
$\piexp$ such that $\Gamma(\piexp) \le D$.
\end{rem}

\section{Main Results}
\label{sec:main-results}
Using results from approximation theory explained in \S \ref{sec:res-approximation}
and algorithms for the misspecified bandit problem,
we provide several algorithms for the stochastic and adversarial RKHS bandit problems.
We provide proofs of the results in this section in the supplementary material.

Let $N_1, \dots, N_D$ be the Newton basis returned by Algorithm \ref{alg:nbasispgreedy}
with $\admerr = \frac{\alpha}{T^q}$ with $q, \alpha > 0$, and $\omdsc = \aset$.
Then, by orthonormality of the Newton basis and the definition of the power function,
for any $f \in \hilb{\Omega}$ and $x \in \Omega$,
we have
\begin{equation*}
  |f(x) - \langle \theta_f, \wx \rangle| \le \| f \|_{\hilb{\Omega}} P_{V(X)}(x),
\end{equation*}
where $\theta_f = \bl \langle f, N_i \rangle\br_{1\le i \le D} \in \RR^D$
and $\wx = \bl N_i(x) \br_{1\le i \le D} \in \RR^D$.
Therefore,
if $f$ is an objective function of a RKHS bandit problem,
we can regard $f$ as a linearly misspecified model
and apply algorithms for misspecified linear bandit problems
to solve the original RKHS bandit problems.

In this section,
we reduce the RKHS bandit problem to the the misspecified linear bandit problem by the map
$x \mapsto \wx$ and
apply modified LinUCB, PHASED ELIMINATION, and EXP3 to the problem.
We call these algorithms \algucbh{}, \algpeh{} and \algexp{} respectively
and \algucbh{} is displayed in Algorithm \ref{alg:appucb}.
We denote by $D_{q, \alpha}(T) = D$ the number of points returned by Algorithm \ref{alg:nbasispgreedy}
with $\admerr = \frac{\alpha}{T^q}$.
By the results in \S \ref{sec:res-approximation},
we have an upper bound of $D_{q, \alpha}(T)$ (Corollary \ref{cor:pgreedy-decay}).

\begin{algorithm}[tb]
  \caption{Approximated RKHS Bandit Algorithm of UCB type (APG-UCB)}
  \begin{algorithmic}
  \label{alg:appucb}
  \Input{
    Time interval $T$, admissible error $\admerr = \frac{\alpha}{T^q}$, $\lambda, R, B, \delta$
  }
  \STATE
  Using Alg. \ref{alg:nbasispgreedy},
  compute Newton basis $N_1, \dots, N_D$ with admissible error $\admerr$
  and $\omdsc = \aset$, and put $\err = B \admerr$.
  \FOR{$x \in \aset$}
    \STATE $\wx := [N_1(x), N_2(x), \dots, N_D(x)]^\trn \in \RR^D$.
  \ENDFOR
  \FOR{$t = 0,1, \dots, T-1$}
    \STATE $A_t := \lambda 1_D + \sum_{s = 1}^{t} \wx_s \wx_s^\trn$, \
    $b_t := \sum_{s = 1}^{t} y_s \wx_s$.
    \STATE $\thetahat_t := A_t ^{-1} b_t$,\quad
    $\psi_t := \sum_{s=1}^{t} \|\wx_s \|_{A_{s-1}^{-1}}$.
    \STATE $x_{t + 1} :=
    \argmax_{x \in \aset} \left\{
    \langle \wx, \thetahat_t \rangle + \| \wx \|_{A_t^{-1}}
    \bl \beta_t + \err \psi_t\br \right\}$.
    \STATE Select $x_{t+1}$ and observe $y_{t+1}$.
  \ENDFOR
  \end{algorithmic}
\end{algorithm}

First, we state the results for \algucbh{}.
\begin{thm}
  \label{thm:app-ucb}
  We denote by $R_{\algucb}(T)$ the regret that Algorithm \ref{alg:appucb} incurs
  for the stochastic RKHS bandit problem up to time step $T$
  and assume that $\lambda \ge 1$ and $q \ge 1/2$.
  Then with probability at least $1 - \delta$,
  $R_{\algucb}(T)$ is given as
  \begin{equation*}
    \otilde\left(
    \sqrt{T D_{q, \alpha}(T) \log(1/\delta)} + D_{q, \alpha}(T) \sqrt{T}
    \right)
  \end{equation*}
  and the total
  computational complexity of the algorithm is given as $O(|\aset|T D_{q, \alpha}^2(T))$.
\end{thm}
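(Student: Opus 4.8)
The plan is to verify that Algorithm \ref{alg:appucb} is \emph{literally} the modified UCB algorithm of Proposition \ref{prop:app-ucb} applied to a carefully chosen misspecified linear model, and then to read off both the regret bound and the running time. First I would set up the reduction. With the Newton basis $N_1, \dots, N_D$ (where $D = D_{q,\alpha}(T)$) produced by Algorithm \ref{alg:nbasispgreedy} on $\omdsc = \aset$ with $\admerr = \alpha/T^q$, define the feature map $x \mapsto \wx = (N_1(x), \dots, N_D(x))^\trn$, the weight vector $\theta_f = (\langle f, N_i\rangle_{\hilb{\Omega}})_{1\le i\le D}$, and the bias $\omega(x) = f(x) - \langle \theta_f, \wx\rangle$, so that $g := \langle \theta_f, \cdot\rangle + \omega$ equals $f$ exactly. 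I would then check the four hypotheses needed for Proposition \ref{prop:app-ucb}: (i) $\|\wx\|_2 \le 1$, which follows from the power-function identity $P_{V(X)}^2(x) = K(x,x) - \sum_i N_i^2(x) \ge 0$ together with the normalization $K(x,x)\le 1$; (ii) $\|\theta_f\|_2 \le \|f\|_{\hilb{\Omega}} \le B$ by Bessel's inequality for the orthonormal system $\{N_i\}$; (iii) $\sup_x |f(x)| \le B$ by the reproducing property and Cauchy--Schwarz; and (iv) the misspecification bound $\sup_{x\in\aset}|\omega(x)| \le \err$.

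The last item is the load-bearing step. The displayed inequality just before the theorem gives $|\omega(x)| \le \|f\|_{\hilb{\Omega}} P_{V(X)}(x) \le B \|P_{V(X)}\|_{\linf{\aset}}$, and the stopping criterion of Algorithm \ref{alg:nbasispgreedy} (return when $\max_{x\in\omdsc} P_m^2(x) < \admerr^2$) guarantees $\|P_{V(X)}\|_{\linf{\aset}} < \admerr$, whence $\sup_x|\omega(x)| < B\admerr = \err$, exactly the value of $\err$ set in Algorithm \ref{alg:appucb}. Since the UCB selection rule in Algorithm \ref{alg:appucb} coincides verbatim with the one in Proposition \ref{prop:app-ucb}, the two algorithms generate the same play sequence; and because $g = f$, the RKHS regret $\sup_{x}\sum_t(f(x)-f(x_t))$ equals the misspecified-linear regret $\sum_t(\sup_x g(x) - g(x_t))$.

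Next I would invoke Proposition \ref{prop:app-ucb}: with probability at least $1-\delta$,
\[
  R_{\algucb}(T) = \otilde\!\left(\sqrt{DT\log(1/\delta)} + D\sqrt{T} + \err D T\right).
\]
To collapse the third term I would use the hypothesis $q \ge 1/2$: since $\err = B\alpha\,T^{-q}$, we have $\err D T = B\alpha\, D\, T^{1-q} = O(D\sqrt{T})$ because $1-q \le 1/2$, so the third term is absorbed into $D\sqrt{T}$ and the stated bound $\otilde(\sqrt{D T \log(1/\delta)} + D\sqrt{T})$ follows (with $D = D_{q,\alpha}(T)$).

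Finally, for the computational complexity I would account separately for the one-time cost of building the Newton basis, which is $O(|\aset|D^2)$ by the cost stated for Algorithm \ref{alg:nbasispgreedy}, and the per-round cost. In each of the $T$ rounds, $A_t^{-1}$ is maintained in $O(D^2)$ by the Sherman--Morrison formula, $\beta_t$ in $O(D^2)$, and $\psi_t$ in $O(D^2)$ (one new term $\|\wx_t\|_{A_{t-1}^{-1}}$); the argmax over $\aset$ requires, for each arm, an inner product in $O(D)$ and a Mahalanobis norm $\|\wx\|_{A_t^{-1}}$ in $O(D^2)$, hence $O(|\aset|D^2)$ per round. Summing over rounds gives $O(|\aset|TD^2) = O(|\aset|T D_{q,\alpha}^2(T))$, the one-time basis cost being dominated. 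The only genuinely delicate point is the misspecification verification (iv); everything else is bookkeeping against Proposition \ref{prop:app-ucb} and the per-step cost of the linear-algebra updates.
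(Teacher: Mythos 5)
Your proposal is correct and follows essentially the same route as the paper's proof: reduce to the misspecified linear model via the Newton basis, verify the norm, boundedness, and misspecification hypotheses using orthonormality and the $P$-greedy stopping criterion, and then invoke Proposition \ref{prop:app-ucb} together with the per-round linear-algebra cost accounting. Your explicit absorption of the $\err D T$ term via $q \ge 1/2$ is a step the paper leaves implicit, but it is exactly where that hypothesis enters, so the arguments coincide.
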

The admissible error $\admerr$ balances the computational complexity and
regret minimization.
However, this is not clear from Theorem \ref{thm:app-ucb}.
The following theorem provides another upper bound of \algucbh{}
and it states that if we take smaller error $\admerr$,
then an upper bound of \algucbh{} is almost the same as that of IGP-UCB.
\begin{thm}
  \label{thm:igp-app}
  We assume $\lambda = 1$
  and take parameter $q$ of \algucbh{} so that $q > 3/2$.
  We define $\betaigp_T$ as $B + R\sqrt{2(\gamma_T + 1 + \log(1/\delta))}$.
  Then with probability at least $1-\delta$, we have
  $R_{\algucb}(T) \le b(T)$, where $b(T)$ is given as
  \begin{math}
    4 \betaigp_T \sqrt{\gamma_T T}
     + O(\sqrt{T\gamma_T} T^{(3/2-q)/2}
    + \gamma_T T^{1-q}).
  \end{math}
\end{thm}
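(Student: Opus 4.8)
The plan is to view \algucbh{} as the modified-UCB algorithm of \S\ref{sec:miss-spec} applied to the misspecified linear model induced by the Newton-basis feature map $x \mapsto \wx = (N_1(x),\dots,N_D(x))$, whose misspecification level is $\err = B\admerr = B\alpha/T^q$ (exactly how Algorithm \ref{alg:appucb} is set up). I would therefore start from the regret guarantee of Proposition \ref{prop:app-ucb}, or, to keep control of the constants, from the UCB decomposition underlying it,
\[
  R(T) \le 2\err T + 2\sum_{t=1}^{T}\|\wx_t\|_{A_{t-1}^{-1}}\bl \beta_{t-1} + \err\psi_{t-1}\br ,
\]
with $\lambda = 1$, and show that after substituting the approximation-theoretic facts the first sum reduces to $4\betaigp_T\sqrt{\gamma_T T}$ while everything else becomes lower order once $q > 3/2$.

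The crux is replacing the ambient dimension $D$ (which governs the generic bound of Theorem \ref{thm:app-ucb}) by the true maximum information gain $\gamma_T$. Let $\tilde K(x,y) := \langle \wx, \wy\rangle = \sum_{i=1}^{D} N_i(x) N_i(y)$. Since $\{N_i\}$ is an orthonormal basis of $V(X) \subseteq \hilb{\Omega}$, the function $\tilde K$ is the reproducing kernel of $V(X)$, so $K - \tilde K$ is the reproducing kernel of the orthogonal complement $V(X)^{\perp}$ and is therefore positive semidefinite. Hence the Gram matrices $\tilde K_T := (\tilde K(x_i,x_j))_{i,j}$ and $K_T := (K(x_i,x_j))_{i,j}$ over the played arms satisfy $\tilde K_T \preceq K_T$, and with $\lambda = 1$,
\[
  \log\det(\lambda^{-1} A_T) = \log\det\bl I + \tilde K_T\br \le \log\det\bl I + K_T\br \le 2\gamma_T .
\]
This single inequality lets the approximate finite-dimensional model inherit IGP-UCB's $\gamma_T$-based rate rather than a $D$-based one. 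Feeding it into $\beta_T = R\sqrt{\log\det A_T + 2\log(1/\delta)} + B$ immediately gives $\beta_T \le R\sqrt{2\gamma_T + 2 + 2\log(1/\delta)} + B = \betaigp_T$.

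With these two bounds the leading term is controlled by the elliptical-potential estimate $\sum_t\|\wx_t\|_{A_{t-1}^{-1}} \le \sqrt{2T\log\det A_T} \le 2\sqrt{\gamma_T T}$ (valid since $\|\wx_t\|_2^2 = K(x_t,x_t) - P_{V(X)}^2(x_t) \le 1$), yielding $2\beta_T \cdot 2\sqrt{\gamma_T T} \le 4\betaigp_T\sqrt{\gamma_T T}$, which matches the IGP-UCB constant exactly. The residual terms all carry a factor $\err = B\alpha/T^q$: the direct misspecification $2\err T = O(T^{1-q})$, and the contributions of the $\err\psi_{t-1}$ correction, where $\psi_{t-1} \le \psi_T = O(\sqrt{\gamma_T T})$. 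Tracking how this correction couples to $\sum_t\|\wx_t\|_{A_{t-1}^{-1}}$ and to $\beta_{t-1}$ (rather than bounding each factor in isolation) produces the two error terms $\sqrt{T\gamma_T}\,T^{(3/2-q)/2}$ and $\gamma_T T^{1-q}$. The exponent $(3/2-q)/2$ is exactly the break-even point relative to the leading $\sqrt{\gamma_T T}$: the first error term is $o(\sqrt{\gamma_T T})$ precisely when $q > 3/2$, which is why that is the stated hypothesis.

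I expect the main obstacle to be the kernel-domination step $\tilde K \preceq K$ and the resulting $\log\det(\lambda^{-1}A_T) \le 2\gamma_T$; this is the conceptual heart of the result, since without it the bound would scale with $D$ (hence polynomially in $T$ for finite-smoothness kernels) instead of with $\gamma_T$. The secondary difficulty is purely bookkeeping: carrying the misspecification through the confidence widths in the right grouping so as to land on the exponent $(3/2-q)/2$, and hence the threshold $q > 3/2$, rather than a coarser estimate.
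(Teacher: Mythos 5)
Your proposal is correct, and it reaches the key estimate by a genuinely different route from the paper. The paper's proof first shows a uniform kernel approximation $\sup_{x,y}|K(x,y)-\langle \wx,\widetilde{y}\rangle|\le\admerr$ (Lemma \ref{lem:kernel-app}) and then invokes the Wielandt--Hoffman theorem to compare the spectra of $K_T$ and $\widetilde{K}_T$, obtaining the perturbative bound $\log\det(\lambda^{-1}A_T)\le 2\gamma_T+\admerr T^{3/2}/\lambda$ (Lemma \ref{lem:at_gamma-app}); the factor $T^{3/2}$ there is precisely the origin of the error term $\sqrt{T\gamma_T}\,T^{(3/2-q)/2}$ and hence of the threshold $q>3/2$. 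You instead observe that $\langle\wx,\widetilde{y}\rangle=\langle\Pi_{V(X)}K(\cdot,x),\Pi_{V(X)}K(\cdot,y)\rangle_{\hilb{\Omega}}$, so $K-\widetilde{K}$ is the Gram kernel of $(\mathrm{id}-\Pi_{V(X)})K(\cdot,x)$ and is exactly positive semidefinite; this gives $\widetilde{K}_T\preceq K_T$ and therefore $\log\det(\lambda^{-1}A_T)\le\log\det(I+K_T)\le 2\gamma_T$ with \emph{no} error term. That is a cleaner and strictly stronger statement than the paper's lemma, and feeding it into the same regret decomposition yields $R_{\algucb}(T)\le 4\betaigp_T\sqrt{\gamma_T T}+O(\gamma_T T^{1-q})$, which implies the theorem. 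The one inaccuracy in your write-up is the claim that your bookkeeping ``produces'' the $\sqrt{T\gamma_T}\,T^{(3/2-q)/2}$ term and that $(3/2-q)/2$ is a break-even exponent intrinsic to the analysis: under your domination argument that term simply does not arise (it is an artifact of the paper's Wielandt--Hoffman perturbation), and the hypothesis $q>3/2$ could accordingly be relaxed. Since the theorem asserts only an upper bound, this discrepancy is harmless, but you should either derive that term honestly from a perturbative $\log\det$ bound as the paper does, or drop it and state the sharper conclusion your argument actually delivers.
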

\begin{rem}
  Since the main term of $b(T)$ is $4 \betaigp_T \sqrt{\gamma_T T}$
  and by the proof in \citep{chowdhury2017kernelized},
  IGP-UCB has the regret upper bound $4 \betaigp_T \sqrt{\gamma_T (T + 2)}$,
  \algucbh{} has an asymptotically the same
  regret upper bound as IGP-UCB if we take a small error $\admerr$.
  We note that if $\nu$ is sufficiently large compared to $d$
  (this is always the case if the kernel has infinite smoothness),
  then \algucbh{} is more efficient than IGP-UCB.
  We note that for any choice of parameters, the regret upper bound of BBKB is
  given as $55\tilde{C}^3 R_{\text{GP-UCB}}(T)$, where $\tilde{C} \ge 1$.
\end{rem}

Next, we state the results for \algpeh{}.
\begin{thm}
  \label{thm:app-pe}
  We denote by $R_{\algpe}(T)$ the regret that \algpeh{} with $q = 1/2$ incurs
  for the stochastic RKHS bandit problem
  up to time step $T$.
  We further assume that $\{\varepsilon_t\}$ is independent $R$-sub-Gaussian.
  Then with probability at least $1 - \delta$, we have
  \begin{math}
      R_{\algpe}(T) =
      \otilde\left(
      \sqrt{T D_{1/2, \alpha}(T)  \log\left(\frac{|\aset|}{\delta}\right)
      }
      \right),
  \end{math}
  and its total computational complexity is given as
  $\otilde \left( (|\aset| + T) D_{1/2, \alpha}^2(T) \right)$.
\end{thm}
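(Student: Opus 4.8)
The plan is to instantiate the reduction to the stochastic misspecified linear bandit set up in \S\ref{sec:main-results} and then apply Theorem \ref{thm:lmb-pe} essentially verbatim, the point being that the choice $q=1/2$ is tuned so that the misspecification term and the leading term of the resulting regret bound coincide up to polylogarithmic factors.

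First I would fix the linear model. Running Algorithm \ref{alg:nbasispgreedy} with $\omdsc=\aset$ and $\admerr=\alpha/T^{1/2}$ yields a Newton basis $N_1,\dots,N_D$ with $D=D_{1/2,\alpha}(T)$, and I set $\wx=(N_i(x))_{1\le i\le D}$ and $\theta_f=(\langle f,N_i\rangle)_{1\le i\le D}$. I would then verify the hypotheses of Theorem \ref{thm:lmb-pe}. By orthonormality of the Newton basis, $\|\wx\|_2^2=\sum_i N_i^2(x)=K(x,x)-P_{V(X)}^2(x)\le K(x,x)\le 1$, so $x\mapsto\wx$ indeed maps into the unit ball, and $\|\theta_f\|_2=\|\Pi_{V(X)}f\|_{\hilb{\Omega}}\le\|f\|_{\hilb{\Omega}}\le B$, while $|f(x)|\le\|f\|_{\hilb{\Omega}}\sqrt{K(x,x)}\le B$. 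Measurability transfers because $\wx_t$ is a deterministic function of $x_t$. The key estimate is the misspecification bound: the stopping rule of the $P$-greedy algorithm guarantees $\sup_{x\in\aset}P_{V(X)}(x)<\admerr$, hence
\begin{equation*}
\sup_{x\in\aset}\bigl|f(x)-\langle\theta_f,\wx\rangle\bigr|\le\|f\|_{\hilb{\Omega}}\sup_{x\in\aset}P_{V(X)}(x)<B\admerr=\frac{B\alpha}{T^{1/2}}=:\err .
\end{equation*}
Thus the observed rewards $y_t=f(x_t)+\varepsilon_t$ form exactly a stochastic misspecified linear model with $g=f$, feature dimension $D$, and misspecification $\err=\otilde(T^{-1/2})$.

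Next I would apply Theorem \ref{thm:lmb-pe}, whose regret bound is $O\bigl(\sqrt{DT\log(|\aset|\log(T)/\delta)}+\err\sqrt{D}\,T\log(T)\bigr)$. Substituting $\err=B\alpha/T^{1/2}$ turns the second term into $B\alpha\sqrt{D}\,T^{1/2}\log(T)=\otilde(\sqrt{DT})$, of the same order as the first term; this is precisely why $q=1/2$ is the correct choice, and it gives $R_{\algpe}(T)=\otilde\bigl(\sqrt{TD_{1/2,\alpha}(T)\log(|\aset|/\delta)}\bigr)$. For the running time I would sum the three stages: constructing the Newton basis costs $O(|\aset|D^2)$ (the stated cost of Algorithm \ref{alg:nbasispgreedy} with $|\omdsc|=|\aset|$, $m=D$), forming all feature vectors costs $O(|\aset|D)$, and PHASED ELIMINATION costs $O(D^2|\aset|\log\log(D)\log(T)+TD^2)$ by Theorem \ref{thm:lmb-pe}; absorbing the $\log\log(D)\log(T)$ factors into $\otilde$ gives $\otilde\bigl((|\aset|+T)D_{1/2,\alpha}^2(T)\bigr)$.

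I expect this to be a verification rather than a genuinely hard argument, since Theorem \ref{thm:lmb-pe} supplies the substance. The only step requiring real care is confirming that the boundedness and measurability hypotheses of that theorem transfer cleanly through the Newton-basis reduction, and in particular that the $L^{\infty}$ error is controlled over $\aset$ itself rather than over all of $\Omega$ — which holds precisely because we run the $P$-greedy algorithm with $\omdsc=\aset$.
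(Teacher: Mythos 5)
Your proposal is correct and follows exactly the route the paper takes: the paper proves Theorem \ref{thm:app-ucb} by verifying the four hypotheses of the misspecified linear bandit reduction (unit-norm features via orthonormality of the Newton basis, measurability, $\|\theta_f\|_2\le B$, and the $L^\infty$ misspecification bound $\err=B\alpha/T^{q}$ over $\aset$), and states that Theorem \ref{thm:app-pe} follows "in a similar way" by invoking Theorem \ref{thm:lmb-pe} in place of Proposition \ref{prop:app-ucb}, which is precisely what you do, including the observation that $q=1/2$ balances the two terms and the bookkeeping for the total running time.
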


Finally, we state a result for the adversarial RKHS bandit problem.
\begin{thm}
  \label{thm:app-exp3}
  We denote by $R_{\text{\algexp{}}}(T)$ the cumulative regret that
  \algexp{} with $\alpha = \log(|\aset|)$ and
  $q = 1$ incurs for the adversarial RKHS bandit problem
  up to time step $T$.
  Then with appropriate choices of the learning rate $\eta$ and
  exploration distribution, the expected regret
  $\ex{R_{\text{\algexp}}(T)}$ is given as $
      \otilde\left(
      \sqrt{T D_{1, \alpha}(T)  \log\left(|\aset|\right) }
      \right)$.
\end{thm}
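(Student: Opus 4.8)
The plan is to instantiate Proposition \ref{prop:appexp3} with the feature map coming from the Newton basis and then optimize the learning rate. First I would run Algorithm \ref{alg:nbasispgreedy} on $\omdsc = \aset$ with admissible error $\admerr = \alpha/T = \log(|\aset|)/T$, obtaining $D = D_{1,\alpha}(T)$ Newton basis functions $N_1, \dots, N_D$ and the embedding $x \mapsto \wx = (N_1(x), \dots, N_D(x))^\trn$. For each adversarial reward $f_t \in \hilb{\Omega}$ I set $\theta_t = (\langle f_t, N_i \rangle)_{1\le i \le D}$ and $\omega_t(x) = f_t(x) - \langle \theta_t, \wx \rangle$, so that $f_t(x) = \langle \theta_t, \wx\rangle + \omega_t(x)$ is exactly of the misspecified linear form. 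The power-function inequality together with the stopping rule of Algorithm \ref{alg:nbasispgreedy} gives $\sup_{x\in\aset}|\omega_t(x)| \le \|f_t\|_{\hilb{\Omega}} \|P_{V(X)}\|_{\linf{\aset}} \le B\admerr$, so the misspecification parameter may be taken to be $\err = B\admerr = B\log(|\aset|)/T$.

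Second, I would verify the hypotheses of Proposition \ref{prop:appexp3}. Orthonormality of the Newton basis yields $\|\theta_t\|_2 = \|\Pi_{V(X)}f_t\|_{\hilb{\Omega}} \le \|f_t\|_{\hilb{\Omega}} \le B$, while $|f_t(x)| = |\langle f_t, K(\cdot,x)\rangle| \le B\sqrt{K(x,x)} \le B$; and the identity $\|\wx\|_2^2 = K(x,x) - P_{V(X)}^2(x) \le 1$ shows the features land in the unit ball. For the spanning condition I would use the triangular structure of the Newton basis: since $u = K(\cdot,\xi_{m+1}) - \Pi_{V(X_m)}K(\cdot,\xi_{m+1})$ is orthogonal to $V(X_m)$, one gets $N_i(\xi_j) = 0$ for $i > j$ with strictly positive diagonal, hence $(N_i(\xi_j))$ is invertible and $\{\wx : x \in \aset\}$ spans $\RR^D$. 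The Kiefer–Wolfowitz theorem (the remark following Proposition \ref{prop:appexp3}) then supplies an exploration distribution $\piexp$ with $\Gamma(\piexp) \le D$, with $\gamma$ set as required there.

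Third, I would substitute $\err = B\log(|\aset|)/T$ into the bound of Proposition \ref{prop:appexp3},
\begin{equation*}
  \ex{R(T)} \le 2\err T + eB^2\eta DT + \frac{2\err T}{B\eta} + \frac{\log|\aset|}{\eta},
\end{equation*}
so that $2\err T = 2B\log|\aset|$ is merely logarithmic and the two $1/\eta$ terms collapse into $3\log(|\aset|)/\eta$. Balancing the remaining $\eta$-dependent quantities $eB^2\eta DT$ and $3\log(|\aset|)/\eta$ by taking $\eta = \Theta\bigl(B^{-1}\sqrt{\log(|\aset|)/(DT)}\bigr)$ makes each of order $B\sqrt{DT\log|\aset|}$, whence $\ex{R_{\text{\algexp}}(T)} = O\bigl(B\sqrt{DT\log|\aset|}\bigr) = \otilde\bigl(\sqrt{TD_{1,\alpha}(T)\log|\aset|}\bigr)$, as claimed.

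The step I expect to be the main obstacle is arranging the misspecification to be harmless. Because the EXP3 bound contributes a term growing like $\err T/(B\eta)$ — stemming from the bias in the importance-weighted estimator $\phi_t$ — rather than the purely additive $\err T$ of a cruder analysis, this term is inflated by $1/\eta$ and would dominate after the optimal $\eta$ is plugged in unless $\err$ is driven down to order $\log(|\aset|)/T$; this is precisely the reason for the choice $q = 1$, $\alpha = \log|\aset|$. Confirming via Corollary \ref{cor:pgreedy-decay} that this choice still keeps $D = D_{1,\alpha}(T)$ small (polylogarithmic for infinitely smooth kernels, polynomial for finite smoothness) is what makes the resulting $\otilde(\sqrt{TD_{1,\alpha}(T)\log|\aset|})$ rate meaningful, and is the point where the approximation-theoretic input is essential.
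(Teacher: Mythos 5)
Your proposal is correct and follows essentially the same route as the paper: reduce to the adversarial misspecified linear bandit via the Newton-basis embedding with $\err = B\log(|\aset|)/T$, verify the hypotheses of Proposition \ref{prop:appexp3} (the paper establishes the spanning condition by a rank argument comparing $(N_i(x))$ to the kernel matrix, whereas you use the triangular structure of the Newton basis, but both work), and then take $\eta = \sqrt{\log(|\aset|)/(DT)}$ to balance the terms. Your closing observation about why the $\err T/(B\eta)$ bias term forces $q=1$ matches the role this choice plays in the paper's argument.
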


\newcommand{\figureheight}{2.3cm}
\begin{figure*}[ht]
  \vskip 0.2in
  \centering
  \includegraphics[width=0.95\textwidth, height=\figureheight{}]{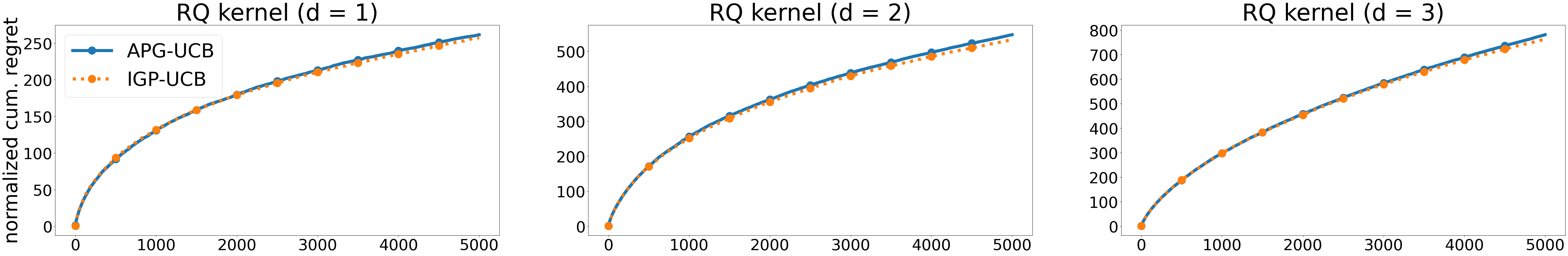}
  \caption{Normalized Cumulative Regret for RQ kernels.}
  \vskip -0.2in
  \label{fig:regret-rqkern}
\end{figure*}
\begin{figure*}[ht]
  \vskip 0.2in
  \centering
  \includegraphics[width=0.95\textwidth, height=\figureheight{}]{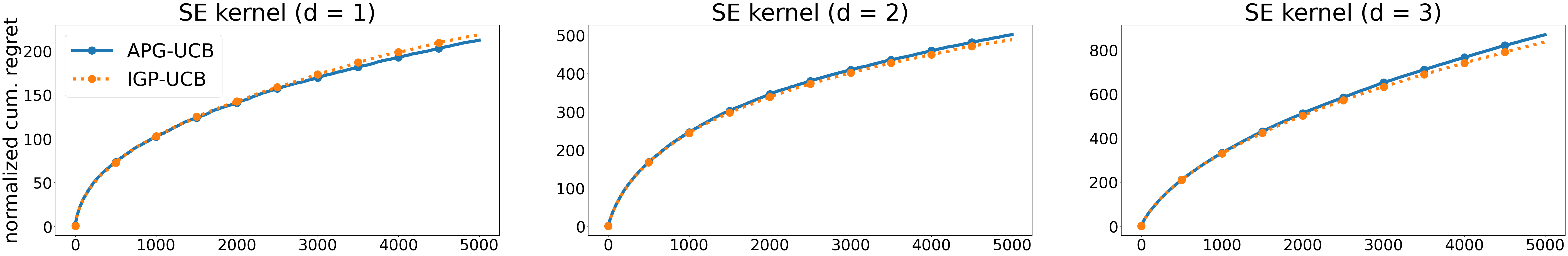}
  \caption{Normalized Cumulative Regret for SE kernels.}
  \label{fig:regret-sekern}
  \vskip -0.2in
\end{figure*}

\section{Discussion}
So far, we have emphasized the advantages of our methods.
In this section, we discuss limitations of our methods.
Here, we focus on Theorem \ref{thm:app-ucb} with $q = 1/2$ and Theorem \ref{thm:app-pe}.
Since we do not see limitations if the kernel has infinite smoothness,
in this section we assume the kernel is a Mat\'ern kernel.
In our theoretical results, $D_{q, \alpha}(T)$ plays a similar role as the
information gain in the theoretical result of BBKB.
If the kernel is a Mat\'ern kernel with parameter $\nu$,
then, by recent results on the information gain \citep{vakili2021information},
we have $\gamma_T = \otilde(T^{d/(d + 2\nu)})$,
which is a nearly optimal result by \citep{scarlett2017lower}
and is slightly better than the upper bound of $D_{1/2, \alpha}(T)$.
Therefore, in this case the regret upper bound $\otilde(\sqrt{T} T^{d/(2\nu)})$ of
Theorem \ref{thm:app-ucb} is slightly worse than
the regret upper bound $\otilde(\sqrt{T}T^{d/(d + 2\nu)})$ of BBKB.
In addition, similarly SupKernelUCB has nearly optimal regret upper bound
if the kernel is Mat\'ern, but regret upper bound of \algpeh{}
is slightly worse in that case.

Inferiority of our method in the Mat\'ern kernel case might be counter-intuitive since it is also proved
that the convergence rate of the power function
for Mat\'ern kernel is optimal (c.f. \citet{schaback1995error})
and Theorem \ref{thm:lmb-pe} cannot be improved \citep{lattimore2020learning}.
We explain why a combination of optimal results leads to a non-optimal result.
The results on the information gain
depend on the eigenvalue decay of the Mercer operator
rather than the decay of the power function in the $L^\infty$-norm
as in this study. However these two notions are closely related.
From the $n$-width theory \citep[Chapter IV, Corollary 2.6]{pinkus2012n},
eigenvalue decay corresponds to the decay of the power function in the $L^2$-norm
(or more precisely Kolmogorov $n$-width).
The decay in the $L^2$-norm is derived from that in the $L^{\infty}$-norm.
If the kernel is a Mat\'ern kernel, using a localization trick
called Duchon's trick \citep{wendland1997sobolev}, it can be possible
to give a faster decay in the $L^p$-norm than that in the $L^\infty$-norm
if $p < \infty$.
Since the norm regarding the misspecified bandit problem is not a $L^{2}$
norm but a $L^{\infty}$ norm, we took the approach proposed in this paper.

\section{Experiments}
In this section, we empirically verify our theoretical results.
We compare \algucbh{} to IGP-UCB \parencite{chowdhury2017kernelized}
in terms of cumulative regret and running time
for RQ and SE kernels in synthetic environments.
\subsection{Environments}
We assume the action set is a discretization of a cube $[0, 1]^d$
for $d = 1, 2, 3$.
We take $\aset$ so that $|\aset|$ is about $1000$.
More precisely, we define $\aset$ by
$\{i/m_d \mid i = 0, 1, \dots, m_d - 1\}^d$
where $m_1 = 1000, m_2 = 30, m_3 = 10$.
We randomly construct reward functions $f \in \hilb{\Omega}$ with $\| f\|_{\hilb{\Omega}} = 1$ as follows.
We randomly select points $\xi_i$ (for $1\le i \le m$) from $\aset$
until $m \le 300$ or $\|P_{V(\{\xi_1, \dots, \xi_m\})}\|_{L^\infty(\aset)} < 10^{-4}$
and compute orthonormal basis $\{\varphi_1, \dots, \varphi_m\}$
of $V(\{\xi_1, \dots, \xi_m\})$.
Then, we define $f = \sum_{i=1}^{m}a_i \varphi_i$, where
$[a_1, \dots, a_m] \in \RR^m$ is a random vector with unit norm.
We take $l = 0.3 \sqrt{d}$ for the RQ kernel and
$l = 0.2\sqrt{d}$ for the SE kernel,
because the diameter of the $d$-dimensional cube is $\sqrt{d}$.
For each kernel, we generate $10$ reward functions as above
and evaluate our proposed method and the existing algorithm
for time interval $T = 5000$
in terms of mean cumulative regret and total running time.
We compute the mean of cumulative regret and
running time for these 10 environments.
We normalize cumulative regret so that
normalized cumulative regret of uniform random policy
corresponds to the line through origin with slope 1 in the figure.
For simplicity, we assume the kernel, $B$, and $R$ are
known to the algorithms.
For the other parameters, we use theoretically suggested ones for both \algucbh{} and IGP-UCB.
Computation is done by Intel Xeon E5-2630 v4 processor with 128 GB RAM.
In supplementary material, we explain the experiment setting in more detail and
provide additional experimental results.

\subsection{Results}
We show the results for normalized cumulative regret
in Figures \ref{fig:regret-rqkern} and \ref{fig:regret-sekern}.
As suggested by the theoretical results, growth of the cumulative regret
of these algorithms is $\otilde(\sqrt{T})$ ignoring a polylogarithmic factor.
Although, convergence rate of the power function of SE kernels is
slightly faster than that of RQ kernels (by remark of Theorem \ref{thm:qunif-conv-exp}),
empirical results of RQ kernels and SE kernels are similar.
In both cases, \algucbh{} has almost the same cumulative regret as that of IGP-UCB.

We also show (mean) total running time in Table \ref{tab:comp-time},
where we abbreviate APG-UCB as APG and IGP-UCB as IGP.
For all dimensions, it took about from five to six thousand seconds for IGP-UCB
to complete an experiment for one environment.
As shown by the table and figures,
running time of our methods is much shorter than that of IGP-UCB while
it has almost the same regret as IGP-UCB.

\begin{table}[h]
  \caption{Total Running Time (in seconds).}
  \centering
  \label{tab:comp-time}
  \begin{tabular}{lll|ll}
       &  APG(RQ) &  IGP(RQ) & APG(SE) & IGP(SE) \\\hline
    $d = 1$ &  4.2e-01 &  5.7e+03 & 4.0e-01 &  5.7e+03 \\
    $d = 2$ &  2.7e+00 &  5.1e+03 & 2.9e+00 &  5.1e+03 \\
    $d = 3$ &  3.0e+01 &  5.7e+03 & 4.3e+01 &  5.7e+03 \\
  \end{tabular}
\end{table}

\section{Conclusion}
By reducing the RKHS bandit problem to the misspecified linear bandit problem,
we provide the first general algorithm for the adversarial RKHS bandit problem
and several efficient algorithms for the stochastic RKHS bandit problem.
We provide cumulative regret upper bounds for them and empirically verify our theoretical results.

\section{Acknowledgement}
We would like to thank anonymous reviewers for suggestions that improved the paper.
We also would like to thank Janmajay Singh and Takafumi J. Suzuki for valuable comments
on the preliminarily version of the manuscript.

\onecolumn
\appendix
\newcommand{\mywy}{\widetilde{y}}
\newcommand{\wrho}{\widetilde{\rho}}

\section*{Appendix}
In this appendix, we provide some results on Thompson Sampling based algorithms
in \S \ref{sec:add-ts},
proofs of the results in \S \ref{sec:proofs}, and
detailed experimental setting and additional experimental results in \S \ref{sec:appendix-exp}.

\section{Additional Results for Thompson Sampling}
\label{sec:add-ts}
\subsection{Misspecified Linear Bandit Problem}
We consider a modification of Thompson Sampling \citep{agrawal2013thompson}.
In $(t + 1)$th round, we sample $\mu_t$
from the multinomial normal distribution $\normdist\left(
\thetahat_t, \left(\beta(A_t, \delta/2, \lambda) + \err \psi_t\right)^2A_t^{-1}
\right)$,
and the modified algorithm selects $x \in \aset$ that
maximizes $\langle \mu_t, \wx\rangle$.
Then the following result holds.
\begin{prop}
  \label{prop:app-ts}
  We assume that $\lambda \ge 1$.
  Then,
  with probability at least $1-\delta$, the modification of Thompson Sampling algorithm
  incurs regret upper bound by
  \begin{equation*}
    \otilde \left(
    \sqrt{\log (|\aset|)}
  \left\{
    D\sqrt{T} + \sqrt{DT\log(1/\delta)} + \log(1/\delta) \sqrt{T} +
   \left(D T + T \sqrt{D\log(1/\delta)}\right)\err)
  \right\}
  \right)
  \end{equation*}
\end{prop}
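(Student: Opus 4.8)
The plan is to adapt the Thompson Sampling regret analysis of \citet{agrawal2013thompson} to the misspecified setting, reusing the confidence machinery already developed for the modified UCB algorithm (Proposition \ref{prop:app-ucb}). Throughout I write $\ell_t := \beta(A_t, \delta/2, \lambda) + \err \psi_t$ for the confidence-width multiplier, so that the sampling distribution is $\normdist(\thetahat_t, \ell_t^2 A_t^{-1})$. The structural fact I would exploit is that, exactly as in the UCB proof, the misspecification contributes only an additive bias: for every arm $x$ one has $|g(x) - \langle \thetahat_t, \wx\rangle| \le \ell_t \|\wx\|_{A_t^{-1}} + \err$, where the trailing $\err$ accounts for $|\omega(x)|$ and the $\err\psi_t$ inside $\ell_t$ absorbs the accumulated regression bias via the bound $\sum_{s \le t} |\wx^\trn A_t^{-1}\wx_s| \le \psi_t \|\wx\|_{A_t^{-1}}$.

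First I would establish two concentration events. (a) The confidence-ellipsoid event: with probability at least $1 - \delta/2$, the inequality $|\langle \thetahat_t - \theta, \wx\rangle| \le \ell_t \|\wx\|_{A_t^{-1}}$ holds simultaneously for all $t$ and $x$; this is the bound underlying Proposition \ref{prop:app-ucb}, obtained from the self-normalized martingale inequality of \citet{abbasi2011improved} for the sub-Gaussian part plus the deterministic bias bound above. (b) The sampling event: conditioned on the history, $\langle \mu_t - \thetahat_t, \wx\rangle$ is centered Gaussian with standard deviation $\ell_t \|\wx\|_{A_t^{-1}}$, so a union bound over $\aset$ gives $|\langle \mu_t - \thetahat_t, \wx\rangle| \le \ell_t \|\wx\|_{A_t^{-1}} \sqrt{4 \log |\aset|}$ for all $x$ with high probability; this union bound is the source of the overall $\sqrt{\log|\aset|}$ factor. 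For the optimism step I would also invoke Gaussian anti-concentration, which guarantees that the sampled linear value at the optimal arm exceeds its mean by a confidence width with at least constant probability.

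Next I would run the saturated/unsaturated decomposition of \citet{agrawal2013thompson}, calling an arm saturated at round $t$ when its gap $\Delta(x) := \sup_{x' \in \aset} g(x') - g(x)$ exceeds a fixed multiple of its confidence width $\ell_t\|\wx\|_{A_t^{-1}}$. Combining the sampling event with anti-concentration shows that $\mu_t$ assigns the largest value to some unsaturated arm with at least constant probability, so the played arm $x_t$ is unsaturated with constant probability; on the good events its instantaneous regret is then controlled by $O(\ell_t \|\wx_t\|_{A_t^{-1}}\sqrt{\log|\aset|} + \err)$. Passing from this expected per-round control to a high-probability statement uses a super-martingale (Azuma) argument, which contributes the additive $\log(1/\delta)\sqrt{T}$ term. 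Finally I would sum over $t$: the standard elliptical potential (log-determinant) bound gives $\sum_{t} \|\wx_t\|_{A_{t-1}^{-1}}^2 \le 2\log\det(\lambda^{-1}A_T) = O(D\log T)$, hence $\sum_t \|\wx_t\|_{A_{t-1}^{-1}} = O(\sqrt{DT\log T})$ and $\psi_T = O(\sqrt{DT\log T})$. Multiplying $\ell_T = O(\beta_T + \err\psi_T)$ by the potential sum, with $\beta_T = O(\sqrt{D\log T + \log(1/\delta)})$, reproduces the $\beta_T$-driven terms $D\sqrt{T}$ and $\sqrt{DT\log(1/\delta)}$ and the misspecification terms scaling with $\err$ (the direct $O(\err T)$ bias being dominated by $DT\err$), all carrying the $\sqrt{\log|\aset|}$ prefactor.

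I expect the main obstacle to be the optimism/anti-concentration step under misspecification. The bias $\omega$ decouples the observable linear surrogate $\langle\theta,\wx\rangle$ from the true objective $g$ against which regret is measured, so the usual argument that the sampled value at $x^\ast$ beats the true optimum with constant probability must now absorb an extra additive $\err$ as well as the $\err\psi_t$ inflation of $\ell_t$, and one must verify that this does not destroy the constant lower bound on the probability of playing an unsaturated arm. Carefully threading these misspecification terms through the saturated/unsaturated partition, so that they surface only in the clean additive $\err$-terms of the final bound, is the delicate part; the remaining steps are routine once the width $\ell_t$ is fixed.
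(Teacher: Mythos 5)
Your proposal follows essentially the same route as the paper's proof: both adapt the Agrawal--Goyal analysis by inflating the confidence width to $\beta_t + \err\psi_t$, establishing the two concentration events (confidence ellipsoid for $\thetahat$ and a union bound over $\aset$ for the Gaussian sample, which is indeed where the $\sqrt{\log|\aset|}$ comes from), invoking Gaussian anti-concentration for optimism, running the saturated/unsaturated decomposition, converting to a high-probability bound via a super-martingale and Azuma--Hoeffding, and closing with the elliptical potential bound $\psi_T = \otilde(\sqrt{DT})$. The subtlety you flag about optimism under misspecification is exactly the one the paper handles, by defining $x^\ast = \argmax_x g(x)$ but measuring the gap $\Delta$ through the linear surrogate $\langle\theta,\wx\rangle$ and paying the $O(\err T)$ bias separately at the end.
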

We provide proof of the proposition in \S \ref{sec:proof-prop-ts}.
\subsection{Thompson Sampling for the Stochastic RKHS Bandit Problem}
We provide a result on a Thompson Sample based algorithm for the stochastic RKHS bandit problem.
\begin{thm}
  \label{thm:app-ts}
  We reduce the RKHS bandit problem to the misspecified linear bandit problem,
  and apply the modified Thompson Sampling introduced above with
  admissible error $\admerr = \frac{\alpha}{T^q}$ with $q \ge 1/2$.
  We denote by $R_{\algthmp}(T)$ its regret and assume that $\lambda \ge 1$.
  Then with probability at least $1 - \delta$,
  $R_{\algthmp}(T)$ is upper bounded by
  \begin{equation*}
    \otilde\left(
    \sqrt{\log (|\aset|)}
    \left(
    D_{q, \alpha}(T)\sqrt{T} +
    \sqrt{D_{q, \alpha}(T)T\log(1/\delta)} +
     \log(1/\delta) \sqrt{T})
    \right)
    \right).
  \end{equation*}
  The total computational complexity of
   the algorithm is given as $O(|\aset|TD_{q, \alpha}^2(T) + TD_{q, \alpha}^3(T))$.
\end{thm}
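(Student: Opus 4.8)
The plan is to realize the RKHS bandit instance as a misspecified linear bandit instance through the Newton-basis embedding and then invoke Proposition~\ref{prop:app-ts}. First I would run Algorithm~\ref{alg:nbasispgreedy} with $\omdsc = \aset$ and admissible error $\admerr = \alpha/T^q$, obtaining an orthonormal Newton basis $N_1, \dots, N_D$ with $D = D_{q,\alpha}(T)$, and set $\wx = (N_1(x), \dots, N_D(x))^\trn$ and $\theta_f = (\langle f, N_i\rangle)_{1\le i\le D}$. To apply the proposition I must check that this data meets its hypotheses. The embedding lands in the unit ball because $\|\wx\|_2^2 = \sum_{i=1}^{D} N_i^2(x) = K(x,x) - P_{V(X)}^2(x) \le K(x,x) \le 1$; the parameter is bounded since, by orthonormality of the Newton basis in $\hilb{\Omega}$, $\|\theta_f\|_2 = \|\Pi_{V(X)} f\|_{\hilb{\Omega}} \le \|f\|_{\hilb{\Omega}} \le B$; and $|f(x)| = |\langle f, K(\cdot,x)\rangle| \le \|f\|_{\hilb{\Omega}} \sqrt{K(x,x)} \le B$, so $\sup_{x\in\aset}|g(x)| \le B$ with $g = f$.

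The key step is to identify the misspecification error. Writing $\omega(x) := f(x) - \langle \theta_f, \wx\rangle$, the approximation inequality of \S\ref{sec:main-results} gives $|\omega(x)| \le \|f\|_{\hilb{\Omega}} P_{V(X)}(x)$, and the stopping rule of Algorithm~\ref{alg:nbasispgreedy} run on $\omdsc = \aset$ guarantees $\|P_{V(X)}\|_{\linf{\aset}} < \admerr$. Hence $\sup_{x\in\aset}|\omega(x)| \le B\admerr = B\alpha/T^q =: \err$. Since $y_t = f(x_t) + \varepsilon_t = \langle \theta_f, \wx_t\rangle + \omega(x_t) + \varepsilon_t$, the conditionally $R$-sub-Gaussian noise and the filtration structure transfer verbatim to the reduced problem. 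I would stress that controlling the power function only on $\aset$ (not on all of $\Omega$) is exactly what is needed, because the regret and the misspecification requirement of Proposition~\ref{prop:app-ts} involve only the arms in $\aset$; this is precisely where the flexibility of Theorem~\ref{thm:pgreedy-decay} over arbitrary subsets $\omdsc$ is essential.

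With the hypotheses verified, I would apply Proposition~\ref{prop:app-ts} with dimension $D = D_{q,\alpha}(T)$ and misspecification $\err = B\alpha/T^q$, then simplify using $q \ge 1/2$. Since $T\err = B\alpha T^{1-q} \le B\alpha\sqrt{T}$, the two error-dependent summands collapse into the leading ones: $DT\err = \otilde(D\sqrt{T})$ is absorbed by the $D_{q,\alpha}(T)\sqrt{T}$ term and $T\sqrt{D\log(1/\delta)}\,\err = \otilde(\sqrt{DT\log(1/\delta)})$ by the $\sqrt{D_{q,\alpha}(T)T\log(1/\delta)}$ term, which yields the stated regret bound. For the complexity, the one-time $P$-greedy run costs $O(|\aset|D^2)$; at each of the $T$ rounds, updating $A_t^{-1}$, $\thetahat_t$, and $\psi_t$ costs $O(D^2)$, sampling $\mu_t$ from the $D$-dimensional Gaussian requires a matrix square root of $A_t^{-1}$ at cost $O(D^3)$, and maximizing $\langle \mu_t, \wx\rangle$ over $\aset$ costs $O(|\aset|D)$; summing over rounds and bounding loosely gives $O(|\aset|TD^2 + TD^3)$. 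The main obstacle is not any single deep estimate---Proposition~\ref{prop:app-ts} already carries the probabilistic analysis---but rather the careful verification that the restricted-domain power-function bound supplies exactly the uniform misspecification level $\err$ demanded by the proposition, together with the $\otilde$-bookkeeping that makes the misspecification terms vanish precisely when $q \ge 1/2$.
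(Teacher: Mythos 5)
Your proposal is correct and follows essentially the same route as the paper: the paper proves Theorem~\ref{thm:app-ucb} in detail by verifying the four reduction conditions (unit-ball embedding via $\|\wx\|_2^2=K(x,x)-P_{V(X)}^2(x)$, $\|\theta\|_2\le B$ via orthonormality, measurability, and $\err=\alpha B/T^q$ from the $P$-greedy stopping rule) and then states that Theorem~\ref{thm:app-ts} follows in the same way from Proposition~\ref{prop:app-ts}, which is exactly what you do, including the absorption of the $\err$-terms using $q\ge 1/2$ and the $O(|\aset|TD^2+TD^3)$ complexity accounting.
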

We provide proof of the theorem in \S \ref{sec:subsec-proof-thm-app-ucb}.

\section{Proofs}
\label{sec:proofs}
We provide omitted proofs in the main article and \S \ref{sec:add-ts}.
\subsection{Proof of Corollary \ref{cor:pgreedy-decay}}
For completeness, we provide a proof of corollary \ref{cor:pgreedy-decay}.
\begin{proof}
  For simplicity, we consider only the infinite smoothness case.
  We use the same notation as in Theorem \ref{thm:pgreedy-decay}
  and Algorithm \ref{alg:nbasispgreedy}.
  Denote by $D$ the number of points returned by the algorithm with error $\admerr = \alpha / T^q$.
  Since the statement of the corollary is obvious if $D=1$, we assume $D > 1$.
  Because the condition $\max_{x \in \omdsc}P_m(x) < \alpha / T^q$ is satisfied
  only when $m \ge D$,
  we have
  \begin{math}
    \alpha/T^q \le \max_{x \in \omdsc} P_{D-1}(x)
  \end{math}.
  If we run the algorithm with error $\admerr = \max_{x \in \omdsc}P_{D-1}(x) + \epsilon$
  with sufficiently small $\epsilon > 0$, then
  the algorithm returns $D - 1$ points.
  Therefore by the theorem and the inequality above,
  we have
  \begin{equation*}
    \alpha / T^q \le \max_{x \in \omdsc} P_{D-1}(x) <
    \hat{C_1} \exp\left(-\hat{C_2} (D-1)^{1/d} \right).
  \end{equation*}
  Ignoring constants other than $\alpha, T, q$, we have the assertion of the corollary.
\end{proof}

\subsection{Proof of Proposition \ref{prop:app-ucb}}
For symmetric matrices $P, Q \in \RR^{n \times n}$, we write $P \geq Q$ if and only if
$P - Q$ is positive semi-definite, i.e., $x^T (P- Q) x \ge 0$ for all $x \in \RR^{n}$.
For completeness, we prove the following elementary lemma.
\begin{lem}
  \label{lem:elem}
  Let $P, Q \in \RR^{n \times n}$ be symmetric matrices of size $n$
  and assume that $0 < P \leq Q$.
  Then we have $Q^{-1} \leq P^{-1}$.
\end{lem}
\begin{proof}
  It is enough to prove the statement for $U^\trn P U$ and $U^\trn Q U$ for some $U \in \gl_n(\RR)$,
  where $\gl_n(\RR)$ is the general linear group of size $n$.
  Since $P$ is positive definite, using Cholesky decomposition,
  one can prove that there exists $U \in \gl_n(\RR)$ such that
  $U^\trn P U = 1_n$ and $U^\trn Q U = \Lambda$ is a diagonal matrix.
  Then, the assumption implies that every diagonal entry of $\Lambda$ is greater than or equal to $1$.
  Now, the statement is obvious.
\end{proof}

Next, we prove that
$\langle \wx, \theta_t\rangle + \| \wx\|_{A_t^{-1}} (\beta_t + \err \psi_t)$
is a UCB up to a constant.
\begin{lem}
  \label{lem:upper-bd}
  We assume $\lambda \ge 1$.
  Then, with probability at least $1 - \delta$,
  we have
  \begin{equation*}
    | \langle \wx, \theta_t\rangle - \langle \wx , \theta \rangle |
    \le \| \wx \|_{A_t^{-1}} (\beta_t + \err \psi_t),
  \end{equation*}
  for any $t$ and $\wx \in \RR^D$.
\end{lem}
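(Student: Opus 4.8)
The plan is to derive the lemma from the standard self-normalized concentration argument of \citet{abbasi2011improved}, while carefully tracking the extra term created by the misspecification $\omega$. First I would substitute the model $y_s = \langle \theta, \wx_s\rangle + \omega(x_s) + \varepsilon_s$ into $b_t = \sum_{s=1}^{t} y_s \wx_s$ and use the identity $\sum_{s=1}^{t}\langle \theta, \wx_s\rangle \wx_s = \bigl(\sum_{s=1}^{t}\wx_s\wx_s^\trn\bigr)\theta = (A_t - \lambda 1_D)\theta$. Recalling $\thetahat_t = A_t^{-1}b_t$ (the estimator written $\theta_t$ in the statement), this yields the decomposition
\begin{equation*}
  \thetahat_t - \theta = -\lambda A_t^{-1}\theta + A_t^{-1}\sum_{s=1}^{t}\omega(x_s)\wx_s + A_t^{-1}\sum_{s=1}^{t}\varepsilon_s\wx_s,
\end{equation*}
whose three pieces I think of as a regularization term, a misspecification term, and a sub-Gaussian noise term. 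Pairing each with $\wx$ and applying Cauchy--Schwarz in the $A_t^{-1}$ inner product extracts a common factor $\|\wx\|_{A_t^{-1}}$, so the task reduces to bounding the $A_t^{-1}$-norm of each piece.

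For the regularization term, since $\lambda \ge 1$ and $A_t \ge \lambda 1_D$, Lemma \ref{lem:elem} gives $A_t^{-1} \le \lambda^{-1}1_D$, hence $\|\theta\|_{A_t^{-1}} \le \|\theta\|_2/\sqrt{\lambda} \le B/\sqrt{\lambda}$; the prefactor $\lambda$ then turns this into a contribution $\sqrt{\lambda}B\,\|\wx\|_{A_t^{-1}}$. For the noise term, writing $S_t = \sum_{s=1}^{t}\varepsilon_s\wx_s$, I would invoke the self-normalized tail bound, which holds uniformly in $t$ with probability at least $1-\delta$ and gives $\|S_t\|_{A_t^{-1}} \le R\sqrt{\log(\det(\lambda^{-1}A_t)/\delta^2)}$. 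These two contributions together are exactly $\beta_t\,\|\wx\|_{A_t^{-1}}$, and this is the only place randomness enters, supplying the stated confidence level.

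The misspecification term is where the definition of $\psi_t$ becomes essential. Bounding $|\omega(x_s)| \le \err$ and using Cauchy--Schwarz gives $\bigl|\wx^\trn A_t^{-1}\sum_{s}\omega(x_s)\wx_s\bigr| \le \err\,\|\wx\|_{A_t^{-1}}\sum_{s=1}^{t}\|\wx_s\|_{A_t^{-1}}$. The crux of the argument, and the step I expect to be the only non-routine one, is to replace the sum $\sum_{s}\|\wx_s\|_{A_t^{-1}}$ — which depends on the current $t$ and would be expensive to maintain — by the online quantity $\psi_t = \sum_{s}\|\wx_s\|_{A_{s-1}^{-1}}$. This follows from the monotonicity $A_{s-1} \le A_t$ for $s \le t$: Lemma \ref{lem:elem} then yields $A_t^{-1} \le A_{s-1}^{-1}$, so $\|\wx_s\|_{A_t^{-1}} \le \|\wx_s\|_{A_{s-1}^{-1}}$ termwise, and summing gives $\sum_{s=1}^{t}\|\wx_s\|_{A_t^{-1}} \le \psi_t$. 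Collecting the three bounds then produces $|\langle\wx,\thetahat_t\rangle - \langle\wx,\theta\rangle| \le \|\wx\|_{A_t^{-1}}(\beta_t + \err\psi_t)$, as claimed; everything is elementary apart from this monotonicity observation, which is precisely what makes the cheaply computable $\psi_t$ the correct inflation of the confidence radius.
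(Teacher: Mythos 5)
Your proposal is correct and follows essentially the same route as the paper: the paper invokes the decomposition from the proof of Theorem 2 of \citet{abbasi2011improved} (which you write out explicitly), splits off the misspecification term by the triangle inequality and Cauchy--Schwarz, applies the self-normalized concentration inequality to obtain $\beta_t$, and finishes with the same monotonicity step $A_{s-1} \le A_t$ plus Lemma \ref{lem:elem} to replace $\sum_{s=1}^{t}\|\wx_s\|_{A_t^{-1}}$ by $\psi_t$. The only difference is that you unpack the cited decomposition rather than referring to it, which is a matter of exposition, not substance.
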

\begin{proof}
  By proof of \citep[Theorem 2]{abbasi2011improved}, we have
  \begin{align*}
    | \langle \wx, \theta_t\rangle - \langle \wx , \theta \rangle |
    &\le \| \wx \|_{A_t^{-1}} \left(
    \left\| \sum_{s=1}^t \wx_s (\varepsilon_s + \omega(x_s)) \right\|_{A_t^{-1}} +
    \lambda^{1/2} \| \theta\|
    \right)\\
    & \le \| \wx \|_{A_t^{-1}}
    \left (
    \left\| \sum_{s=1}^t
    \wx_s \varepsilon_s \right\|_{A_t^{-1}} + \lambda^{1/2} \| \theta\|
    + \err \sum_{s=1}^t \|\wx_s \|_{A_t^{-1}}
    \right).
  \end{align*}
  By the self-normalized concentration inequality \citep{abbasi2011improved},
  with probability at least $1 - \delta$, we have
  \begin{equation*}
    | \langle \wx, \theta_t\rangle - \langle \wx , \theta \rangle | \le \|\wx\|_{A_t^{-1}}
    \left(\beta_t + \err \sum_{s=1}^t \|\wx_s \|_{A_t^{-1}} \right).
  \end{equation*}
  Since $A_{s-1} \le A_{s}$ for any $s$, by Lemma \ref{lem:elem},
  we have $\sum_{s=1}^{t}\|\wx_s \|_{A_t^{-1}} \le \psi_t$.
  This completes the proof.
\end{proof}

\begin{proof}[Proof of Proposition \ref{prop:app-ucb}]
  We assume $\lambda \ge 1$.
  Let $x^{*} := \argmax_{x \in \aset} f(x)$
  and $\bl x_t \br_t$ be a sequence of arms selected by the algorithm.
  Denote by $E$ the event on which the inequality in Lemma \ref{lem:upper-bd} holds
  for all $t$ and $\wx$.
  Then on event $E$, we have
  \begin{align*}
    f(x^*) - f(x_t) &\le 2 \err + \langle \wx^{*}, \theta \rangle - \langle \wx_t, \theta \rangle\\
    & \le 2\err + \langle \wx^*, \theta_t \rangle + \| \wx^* \|_{A_t^{-1}} (\beta_t + \err \psi_t)
    -\langle \wx_t, \theta \rangle\\
    & \le 2\err + \langle \wx_t, \theta_t \rangle + \| \wx_t\|_{A_t^{-1}} (\beta_t + \err \psi_t)
    - \left(
    \langle \wx_t, \theta_t \rangle - \| \wx_t\|_{A_{t}^{-1}} (\beta_t + \err \psi_t)
    \right)\\
    & = 2 \err + 2 \| \wx_t\|_{A_{t}^{-1}} (\beta_t + \err \psi_t).
  \end{align*}
  Therefore, on event $E$,
  \begin{align*}
    R(T) &\le 2 \err T + 2 \beta_T \sum_{t=1}^{T} \| \wx_t\|_{A_{t-1}^{-1}} +
    2\err \sum_{t=1}^{T} \| \wx_t\|_{A_{t-1}^{-1}}\psi_t\\
    &\le
    2\err T + 2\beta_T \sqrt{T} \sqrt{\sum_{t=1}^{T} \|\wx_t\|^2_{A_{t-1}^{-1}}}
    + 2\err \psi_T\sum_{t=1}^{T} \| \wx_t\|_{A_{t-1}^{-1}}\\
    &=
    2\err T + 2\beta_T \sqrt{T} \sqrt{\sum_{t=1}^{T} \|\wx_t\|^2_{A_{t-1}^{-1}}}
    + 2\err \left(\sum_{t=1}^{T} \| \wx_t\|_{A_{t-1}^{-1}}\right)^2\\
    & \le
    2\err T + 2\beta_T \sqrt{T} \sqrt{\sum_{t=1}^{T} \|\wx_t\|^2_{A_{t-1}^{-1}}}
    + 2\err T \left(\sum_{t=1}^{T} \| \wx_t\|^2_{A_{t-1}^{-1}}\right).
  \end{align*}
  By assumptions, we have $\|\wx\|_{A_{t-1}^{-1}} \le \|\wx\|_{A_{0}^{-1}} = \lambda^{-1/2} \| \wx\|_{2} \le 1$
  for any $x \in \aset$.
  Therefore, by \citep[Lemma 11]{abbasi2011improved}, the following inequalities hold:
  \begin{equation}
    \label{eq:beta-upp}
    \sum_{t=1}^{T} \|\wx_t\|^2_{A_{t-1}^{-1}} \le 2 \log(\det(\lambda^{-1}A_t)) \le 2 D\log\left( 1+ \frac{T}{\lambda D}\right),
    \quad
    \beta_t \le R \sqrt{D\log\left( 1 + \frac{T}{\lambda D} \right) + 2 \log(1/\delta)} +
    \sqrt{\lambda} B.
  \end{equation}
  Thus, on event $E$, we have
  \begin{align*}
    R(T)
    &\le 2 \beta_T \sqrt{T}  \sqrt{2\log(\lambda^{-1} A_T)}
    + 2\err T + 4 \err T \log(\det(\lambda^{-1} A_T))\\
    &= \otilde \left(
    \err T + (\sqrt{D} + \sqrt{1/\delta}) \sqrt{DT}
    + \err D T
    \right)\\
    & =
    \otilde \left(
    D \sqrt{T} + \sqrt{DT\log(1/\delta)} + \err D T
    \right).
  \end{align*}
\end{proof}

\subsection{Proof of Proposition \ref{prop:appexp3}}
This proposition can be proved by adapting the standard proof of the adversarial linear bandit
problem \citep{lattimore2020book,bubeck2012regret}.
We recall notation for the adversarial misspecified linear bandit problem and EXP3.
Let $\aset \ni x \mapsto \wx \in \{\xi \in \RR^{D} : \| \xi \| \le 1\}$ be a map,
$\{g_t\}_{t=1}^{T}$ be a sequence of reward functions on $\aset$ such that
$g_t(x) = \langle \theta_t, \wx \rangle + \omega_t(x)$ for $x \in \aset$,
where $\theta_t \in \RR^D$, $\sup_{x \in \aset}|g_t(x)|, \| \theta_t\| \le B$,
and $\sup_{x \in \aset}|\omega_t(x)| \le \epsilon$.

Let $\gamma \in (0, 1)$ be an exploration parameter, $\eta > 0$ a learning rate,
and $\piexp$ an exploitation distribution over $\aset$.
For a distribution $\pi$ over $\aset$, we put $Q(\pi) = \sum_{x \in \aset}\pi(x)\wx \wx^{\trn}$.
We define $\phi_t = g_x(x_t)Q_t^{-1}\wx_t$ and $\phi_t(x) = \langle \phi_t, \wx \rangle$
for $x \in \aset$, where the matrix $Q_t$ can be computed from the past observations
at round $t$ and is defined later.
Let $q_t$ be a distribution over $\aset$ such that $q_t(x) \sim \exp\left(
  \eta \sum_{s=1}^{t-1} \phi_s(x)
\right)$ and put $p_t(x) = \gamma \piexp(x) + (1 - \gamma) q_t(x)$
for $x \in \aset$.
We assume that $Q(\piexp)$ is non-singular and define $Q_t = Q(p_t)$.
For a distribution $\pi$ over $\aset$, we
define $\Gamma(\pi) = \sup_{x \in \aset} \wx^{\trn} Q(\piexp)^{-1}\wx$
and in this section we assume $\Gamma(\pi) \le D$.

Let $x_{*} = \argmax_{x \in \aset}\sum_{t=1}^{T}g_t(x)$ be an optimal arm
and regret is defined as $R(T) = \sum_{t=1}^{T} g_t(x_{*}) - \sum_{t=1}^{T}g_t(x_t)$.
We have
\begin{align}
 \ex{R(T)} &=
 \ex{\sum_{t=1}^{T}\left(
  \langle \theta_t, \wx_{*}\rangle - \langle \theta_t, \wx_t \rangle
 \right)}
 + \ex{\sum_{t=1}^{T} \left(\omega_t(\wx_*) - \omega_t(\wx_t) \right)} \notag \\
 &\le 2 \epsilon T +
 \ex{\sum_{t=1}^{T}\left(
  \langle \theta_t, \wx_{*}\rangle - \langle \theta_t, \wx_t \rangle
 \right)} \label{eq:app-exp3-rt}.
\end{align}
We denote
by $\mathcal{H}_{t-1}$ the sigma field generated by $x_1, \dots, x_{t-1}$
and
by $\mathbf{E}_{t-1}$ the conditional expectation conditioned on $\mathcal{H}_{t-1}$.
We note that $p_t(x), q_t(x)$ for $x \in \aset$ and $Q_t$ are $\mathcal{H}_{t-1}$-measurable but
$\phi_t$ is not.
Then we have
\begin{align}
  &\ex{\sum_{t=1}^{T} \langle \theta_t, \wx_t\rangle} = \ex{\sum_{t=1}^{T} \ex[t-1]{\langle \theta_t, \wx_t \rangle}}
  = \ex{\sum_{t=1}^{T}
  \sum_{x \in \aset} p_t(x) {\langle \theta_t, \wx \rangle }} \notag\\
  &=\gamma \ex{\sum_{t=1}^{T} \sum_{x \in \aset}{\piexp(x)\langle \theta, \wx \rangle}} +
  (1 - \gamma) \ex{\sum_{t=1}^{T} \sum_{ x \in \aset}{ q_t(x)\langle \theta_t, \wx \rangle}}
  \notag
  \\
  & \ge -\gamma BT + (1 - \gamma)S.\label{eq:app-exp3-s}
\end{align}
Here we used $|\langle \theta_t, \wx \rangle| \le \| \theta_t \| \| \wx\| \le B$
and $S$ is defined as $\ex{\sum_{t=1}^{T} \sum_{x \in \aset} {q_t(x) \langle \theta_t, \wx \rangle}}$.
Since $\sum_{t=1}^{T} \langle \theta_t, \wx_{*} \rangle \le \gamma BT + (1 - \gamma)
\sum_{t=1}^{T}\langle \theta_t, \wx_{*} \rangle$,
by inequalities \eqref{eq:app-exp3-rt}, \eqref{eq:app-exp3-s},
we have
\begin{equation}
  \ex{R(T)}  \le
  2 \epsilon T + 2 \gamma BT +
  (1 - \gamma) \left(
    \sum_{t=1}^{T} \langle \theta_t, \wx_{*}\rangle
    - S
  \right).\label{eq:app-exp3-rt2}
\end{equation}
We decompose $S = S_1 + S_2$,
where
\begin{equation*}
  S_1 = \ex{\sum_{t=1}^{T} \sum_{x \in \aset}{ q_t(x)\langle \phi_t, \wx \rangle}}, \quad
  S_2 = \ex{\sum_{t=1}^{T} \sum_{x \in \aset}{ q_t(x)\langle \theta_t -\phi_t, \wx \rangle}}.
\end{equation*}
First, we bound $|S_2|$. To do this, we prove the following lemma.
\begin{lem}
  \label{lem:appendix-phi-theta}
  For any $x \in \aset$, the following inequality holds:
  \begin{equation*}
    \left|\ex[t-1]{
    \langle \phi_t - \theta_t, \wx \rangle}
    \right|
  \le \frac{\epsilon\Gamma(\piexp)}{\gamma}.
  \end{equation*}
  In particular, we have $
  \left|\ex{ \langle \phi_t - \theta_t, \wx \rangle}\right|
   \le \frac{\epsilon\Gamma(\piexp)}{\gamma} $.
\end{lem}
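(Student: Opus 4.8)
The plan is to compute the conditional expectation $\ex[t-1]{\phi_t}$ explicitly and to show that $\phi_t$ is an unbiased estimator of $\theta_t$ up to a bias term produced entirely by the misspecification $\omega_t$. Conditioned on $\mathcal{H}_{t-1}$ the quantities $p_t$ and $Q_t = Q(p_t)$ are deterministic, so the only randomness in $\phi_t = g_t(x_t)Q_t^{-1}\wx_t$ comes from $x_t \sim p_t$. Since $g_t(y) = \langle \theta_t, \mywy \rangle + \omega_t(y)$, I would first write
\begin{equation*}
  \ex[t-1]{g_t(x_t)\wx_t}
  = \sum_{y \in \aset} p_t(y)\bl \langle \theta_t, \mywy\rangle + \omega_t(y)\br \mywy
  = Q_t \theta_t + \sum_{y \in \aset} p_t(y)\omega_t(y)\mywy,
\end{equation*}
using $\sum_{y} p_t(y)\mywy\mywy^{\trn} = Q_t$. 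Multiplying by $Q_t^{-1}$ gives $\ex[t-1]{\phi_t} = \theta_t + Q_t^{-1}\sum_{y} p_t(y)\omega_t(y)\mywy$, so that
\begin{equation*}
  \ex[t-1]{\langle \phi_t - \theta_t, \wx\rangle}
  = \sum_{y \in \aset} p_t(y)\,\omega_t(y)\, \mywy^{\trn} Q_t^{-1}\wx .
\end{equation*}

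Next I would bound this sum in absolute value. Using $|\omega_t(y)| \le \epsilon$ and the triangle inequality, it suffices to bound $\sum_{y} p_t(y)\,|\mywy^{\trn} Q_t^{-1}\wx|$ by $\Gamma(\piexp)/\gamma$. The key estimate is the operator inequality $Q_t \ge \gamma\, Q(\piexp)$, which holds because $p_t(y) = \gamma\piexp(y) + (1-\gamma)q_t(y) \ge \gamma\piexp(y)$ and $Q(\cdot)$ is monotone in the positive semidefinite order. By Lemma \ref{lem:elem} this yields $Q_t^{-1} \le \frac{1}{\gamma} Q(\piexp)^{-1}$, hence for every arm $y$,
\begin{equation*}
  \mywy^{\trn} Q_t^{-1}\mywy \le \frac{1}{\gamma}\,\mywy^{\trn}Q(\piexp)^{-1}\mywy \le \frac{\Gamma(\piexp)}{\gamma},
\end{equation*}
and the same bound holds for $\wx$ since $x \in \aset$. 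A Cauchy--Schwarz step in the $Q_t^{-1}$-inner product then gives $|\mywy^{\trn}Q_t^{-1}\wx| \le \|\mywy\|_{Q_t^{-1}}\|\wx\|_{Q_t^{-1}} \le \Gamma(\piexp)/\gamma$, where $\|v\|_{Q_t^{-1}} = \sqrt{v^{\trn}Q_t^{-1}v}$. Since $\sum_{y} p_t(y) = 1$, this proves the first inequality.

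Finally, the ``in particular'' statement follows from the tower property: applying $|\ex{Y}| = |\ex{\ex[t-1]{Y}}| \le \ex{|\ex[t-1]{Y}|}$ to $Y = \langle \phi_t - \theta_t, \wx\rangle$ reduces it to the conditional bound just established. I expect the only real subtlety to be the bookkeeping in the first step --- correctly isolating $Q_t\theta_t$ so that the $Q_t^{-1}$ cancels and leaves a clean bias term supported on the $\omega_t$ --- together with verifying the operator-monotonicity inequality $Q_t \ge \gamma Q(\piexp)$ from the mixture structure of $p_t$. Once these are in place, the remaining estimates are routine applications of Cauchy--Schwarz and the definition of $\Gamma(\piexp)$.
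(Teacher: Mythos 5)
Your proposal is correct and follows essentially the same route as the paper's proof: compute the conditional expectation of $\phi_t$ to isolate the bias term coming from $\omega_t$, then bound it via Cauchy--Schwarz in the $Q_t^{-1}$-inner product together with $\gamma Q(\piexp) \le Q_t$ and the definition of $\Gamma(\piexp)$, and finish with the tower property. The only cosmetic difference is that you write the conditional expectation as an explicit sum over $\aset$ and bound $|\mywy^{\trn}Q_t^{-1}\wx|$ uniformly in $y$, whereas the paper keeps everything in expectation notation; the estimates are identical.
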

\begin{proof}
  We note that by conditioning on $\mathcal{H}_{t-1}$, randomness comes only from $x_t$.
  By definition of $\phi_t$, we have
  \begin{align*}
    \ex[t - 1]{\langle \phi_t, \wx \rangle} &= \ex[t-1]{\langle \left(
      \langle \theta_t, \wx_t \rangle + \omega_t(x_t)\right)
      Q_t^{-1} \wx_t, \wx \rangle
     }\\
     &=\ex[t-1]{\wx^{\trn} Q_t^{-1}\wx_t\wx_t^{\trn} \theta_t}
     + \ex[t-1]{\omega_t(x_t)\wx^{\trn}Q_t^{-1}\wx_t}\\
     &= \langle \theta_t, \wx \rangle + \ex[t-1]{\omega_t(x_t)\wx^{\trn}Q_t^{-1}\wx_t}.
  \end{align*}
  Therefore,
  \begin{align*}
    \left|\ex[t-1]{
      \langle \phi_t - \theta_t, \wx \rangle}    \right|
      \le \epsilon \ex[t-1]{\|\wx_t\|_{Q_t^{-1}} \| \wx \|_{Q_t^{-1}}} \le
      \frac{\epsilon}{\gamma} \ex[t-1]{\|\wx_t\|_{Q(\piexp)^{-1}} \| \wx \|_{Q(\piexp)^{-1}}}
      \le \frac{\epsilon \Gamma(\piexp)}{\gamma}.
  \end{align*}
  Here in the second inequality, we use $\gamma Q(\piexp) \le Q_t$ and
  the last inequality follows from the definition of $\Gamma(\piexp)$.
  The second assertion follows from
  \begin{equation*}
  \left|\ex{ \langle \phi_t - \theta_t, \wx \rangle}\right|
  \le \ex{\left|
  \ex[t-1]{ \langle \phi_t - \theta_t, \wx \rangle}
  \right|}
  \le \frac{\epsilon\Gamma(\piexp)}{\gamma}.
  \end{equation*}
\end{proof}
By this lemma, we can bound $S_2$ as follows.
\begin{lem}
  \label{lem:s2-bound}
  The following inequality holds:
  \begin{equation*}
    |S_2| \le \frac{\epsilon T \Gamma(\piexp)}{\gamma}.
  \end{equation*}
\end{lem}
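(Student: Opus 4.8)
The plan is to bound $S_2$ termwise using the conditional estimate already established in Lemma \ref{lem:appendix-phi-theta}. The key structural observation is that the weights $q_t(x)$ (and the matrix $Q_t$ defining $\phi_t$) are $\mathcal{H}_{t-1}$-measurable, whereas $\phi_t$ itself is not. Hence I would first condition on $\mathcal{H}_{t-1}$ so as to pull $q_t(x)$ outside the inner conditional expectation, and only then invoke the per-arm bound.

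Concretely, I would apply the tower property and the $\mathcal{H}_{t-1}$-measurability of $q_t(x)$ (together with the fact that $\theta_t$ is deterministic under the oblivious adversary) to rewrite
\begin{equation*}
  S_2 = \ex{\sum_{t=1}^{T} \sum_{x \in \aset} q_t(x)\, \ex[t-1]{\langle \theta_t - \phi_t, \wx \rangle}}.
\end{equation*}
Then, for each fixed round $t$, the triangle inequality combined with Lemma \ref{lem:appendix-phi-theta} gives
\begin{equation*}
  \left| \sum_{x \in \aset} q_t(x)\, \ex[t-1]{\langle \theta_t - \phi_t, \wx \rangle} \right|
  \le \sum_{x \in \aset} q_t(x)\, \frac{\epsilon \Gamma(\piexp)}{\gamma}
  = \frac{\epsilon \Gamma(\piexp)}{\gamma},
\end{equation*}
where the last equality uses that $q_t$ is a probability distribution on $\aset$, so its weights sum to one. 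Summing this uniform bound over the $T$ rounds yields $|S_2| \le \epsilon T \Gamma(\piexp)/\gamma$, which is exactly the claim.

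I do not expect a genuine obstacle here: the lemma is essentially a bookkeeping consequence of Lemma \ref{lem:appendix-phi-theta}. The only point requiring care is the measurability step, namely verifying that $q_t(x)$ may legitimately be factored out of the conditional expectation $\ex[t-1]{\cdot}$ before the per-arm estimate is applied; once that is granted, the remainder is just the triangle inequality and the normalization of $q_t$.
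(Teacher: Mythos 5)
Your proposal is correct and follows the same route as the paper's proof: condition on $\mathcal{H}_{t-1}$ to factor out the $\mathcal{H}_{t-1}$-measurable weights $q_t(x)$, apply Lemma \ref{lem:appendix-phi-theta} to each inner conditional expectation, and use that $q_t$ sums to one before summing over the $T$ rounds. No gaps.
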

\begin{proof}
  Since $q_t(x)$ is $\mathcal{H}_{t-1}$-measurable for any $x \in \aset$,
   we have
   \begin{align*}
     S_2 =
     \ex{\sum_{t=1}^{T} \sum_{x \in \aset} q_t(x) \ex[t-1]{ \langle \theta_t -\phi_t, \wx \rangle}}.
   \end{align*}
   Therefore, we have
   \begin{align*}
     |S_2| \le
     \ex{\sum_{t=1}^{T} \sum_{x \in \aset} q_t(x) \left|
    \ex[t-1]{ \langle \theta_t -\phi_t, \wx \rangle}
     \right|}
     \le
     \frac{\epsilon T \Gamma(\piexp)}{\gamma}.
   \end{align*}
   Here we used Lemma \ref{lem:appendix-phi-theta} in the last inequality.
\end{proof}

Next, we introduce the following elementary lemma (c.f. \citet[Lemma 49]{chatterji2019online}).
\begin{lem}
  \label{lem:appendix-exp3-elem}
  Let $\eta > 0$ and $X$ be a random variable.
  We assume that $\eta X \le 1$ almost surely.
  Then we have
  \begin{equation*}
    \ex{X}  \ge \frac{1}{\eta} \log \left(\ex{\exp(\eta X)} \right) - ( e- 2) \eta \ex{X^2}.
  \end{equation*}
\end{lem}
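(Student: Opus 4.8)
The plan is to reduce this probabilistic inequality to a single pointwise (deterministic) inequality for the exponential and then transport it through the expectation. Concretely, I would first establish the elementary bound
\begin{equation*}
  e^{y} \le 1 + y + (e-2)y^{2} \qquad \text{for all } y \le 1,
\end{equation*}
and then substitute $y = \eta X$, which is legitimate precisely because the hypothesis guarantees $\eta X \le 1$ almost surely.

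To prove the elementary bound I would study $g(y) := 1 + y + (e-2)y^{2} - e^{y}$, noting the boundary values $g(0) = 0$ and $g(1) = 0$ together with $g'(0) = 0$. Since $g''(y) = 2(e-2) - e^{y}$ changes sign exactly once, at $y_{0} = \log(2e-4) \in (0,1)$, the function $g$ is convex on $(-\infty, y_{0})$ and concave on $(y_{0}, 1]$. On the convex part, $g'(0)=0$ forces a minimum at $y=0$, so $g \ge g(0) = 0$ there; on the concave part, $g$ lies above the chord joining $(y_{0}, g(y_{0}))$ and $(1,0)$, which is nonnegative because $g(y_{0}) > 0$. Hence $g \ge 0$ on the whole half-line $y \le 1$. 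Alternatively, one can split into $0 \le y \le 1$, where $\frac{e^{y}-1-y}{y^{2}} = \sum_{k\ge 0} \frac{y^{k}}{(k+2)!} \le \sum_{k \ge 0}\frac{1}{(k+2)!} = e-2$ term by term, and $y < 0$, where the cruder bound $e^{y} < 1 + y + \tfrac12 y^{2}$ already suffices.

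Granting the pointwise bound, the remaining steps are routine. Substituting $y = \eta X$ gives $e^{\eta X} \le 1 + \eta X + (e-2)\eta^{2} X^{2}$ almost surely; taking expectations and using monotonicity of $\ex{\cdot}$ yields
\begin{equation*}
  \ex{\exp(\eta X)} \le 1 + \eta\,\ex{X} + (e-2)\eta^{2}\,\ex{X^{2}}.
\end{equation*}
Applying $1 + z \le e^{z}$ to the right-hand side, taking logarithms, dividing by $\eta > 0$, and rearranging then produces exactly $\ex{X} \ge \frac{1}{\eta}\log\bigl(\ex{\exp(\eta X)}\bigr) - (e-2)\eta\,\ex{X^{2}}$.

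The single genuine obstacle is the elementary inequality in the second paragraph; everything downstream is a mechanical substitution followed by the two textbook facts $1+z \le e^{z}$ and monotonicity of $\log$. I expect the only delicate bookkeeping to be handling negative values of $y$, since the bound is needed on the entire half-line $y \le 1$ rather than just for nonnegative arguments, where it would be immediate.
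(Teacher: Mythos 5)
Your proof is correct and is essentially the paper's own argument: both hinge on the pointwise bound $e^{y} \le 1 + y + (e-2)y^{2}$ for $y \le 1$ applied to $y = \eta X$, combined with $\log x \le x - 1$ (which is exactly your $1 + z \le e^{z}$ step in disguise). The only difference is that you supply a full verification of the elementary exponential inequality, including the negative-$y$ case, which the paper states without proof.
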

\begin{proof}
  By $\log(x) \le x - 1$ for $x > 0$ and
  $\exp(y) \le 1 + y + (e-2)y^2$ for $y \le 1$,
  we have
  \begin{align*}
    \log \ex{\exp(\eta X)} \le \ex{\exp(\eta X) } - 1
    \le \ex{\eta X + (e-2)\eta^2 X^2}.
  \end{align*}
\end{proof}
To apply the lemma with $X = \langle \phi_t, \wx \rangle$ and $\mathbf{E} = \mathbf{E}_{x \sim q_t}$,
we prove the following:
\begin{lem}
  \label{lem:appendix-gamma}
  Let $x \in \aset$ and assume that $\gamma = \eta B \Gamma(\piexp)$.
  Then, we have $\eta |\langle \phi_t, \wx \rangle| \le 1$.
\end{lem}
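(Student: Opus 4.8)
The plan is to bound $|\langle \phi_t, \wx \rangle|$ directly and then substitute the chosen value of $\gamma$. First I would expand the inner product using the definition $\phi_t = g_t(x_t) Q_t^{-1} \wx_t$, which gives $\langle \phi_t, \wx \rangle = g_t(x_t)\, \wx^\trn Q_t^{-1} \wx_t$. Since by assumption $\sup_{x \in \aset}|g_t(x)| \le B$, the scalar factor $|g_t(x_t)|$ is bounded by $B$, so the whole task reduces to controlling $|\wx^\trn Q_t^{-1} \wx_t|$. By the Cauchy--Schwarz inequality with respect to the inner product induced by the positive definite matrix $Q_t^{-1}$, this is at most $\|\wx\|_{Q_t^{-1}} \|\wx_t\|_{Q_t^{-1}}$.

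The key step is to replace $Q_t$ by the exploration matrix $Q(\piexp)$. Because $p_t = \gamma \piexp + (1-\gamma) q_t$ and $Q(\cdot)$ is linear in the distribution, we have
\begin{equation*}
  Q_t = Q(p_t) = \gamma\, Q(\piexp) + (1-\gamma)\, Q(q_t) \ge \gamma\, Q(\piexp),
\end{equation*}
since $Q(q_t) = \sum_{x} q_t(x) \wx \wx^\trn \ge 0$. Applying Lemma \ref{lem:elem} to $0 < \gamma Q(\piexp) \le Q_t$ yields $Q_t^{-1} \le \gamma^{-1} Q(\piexp)^{-1}$, and hence for every $y \in \aset$,
\begin{equation*}
  \|\widetilde{y}\|_{Q_t^{-1}}^2 = \widetilde{y}^\trn Q_t^{-1} \widetilde{y}
  \le \frac{1}{\gamma}\, \widetilde{y}^\trn Q(\piexp)^{-1} \widetilde{y}
  \le \frac{\Gamma(\piexp)}{\gamma},
\end{equation*}
using the definition $\Gamma(\piexp) = \sup_{x \in \aset} \wx^\trn Q(\piexp)^{-1}\wx$. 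Combining this for both $\wx$ and $\wx_t$ gives $|\wx^\trn Q_t^{-1} \wx_t| \le \Gamma(\piexp)/\gamma$, so that $|\langle \phi_t, \wx \rangle| \le B\,\Gamma(\piexp)/\gamma$.

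Finally I would substitute the prescribed value $\gamma = \eta B \Gamma(\piexp)$, which immediately gives $\eta |\langle \phi_t, \wx \rangle| \le \eta B \Gamma(\piexp)/\gamma = 1$, as required. I do not expect a genuine obstacle here: the only slightly delicate point is justifying the operator inequality $Q_t \ge \gamma Q(\piexp)$ and inverting it correctly, but this is handled cleanly by the monotonicity of matrix inversion established in Lemma \ref{lem:elem}. Everything else is a straightforward application of Cauchy--Schwarz and the uniform bound on $g_t$.
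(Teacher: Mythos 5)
Your proposal is correct and follows essentially the same route as the paper's proof: expand $\phi_t$, bound $|g_t(x_t)|$ by $B$, apply Cauchy--Schwarz in the $Q_t^{-1}$-inner product, and use $\gamma Q(\piexp) \le Q_t$ together with the definition of $\Gamma(\piexp)$ before substituting $\gamma = \eta B \Gamma(\piexp)$. You simply spell out the operator-monotonicity step via Lemma \ref{lem:elem} in more detail than the paper does.
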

\begin{proof}
  By definition of $\phi_t$, we have
  \begin{align*}
    \eta  |g_t(x_t)\wx_t^{\trn} Q_t^{-1} \wx|
    \le \eta B \| \wx_t\|_{Q_t^{-1}} \|\wx \|_{Q_t^{-1}}
    \le \frac{\eta B \Gamma(\piexp)}{\gamma}.
  \end{align*}
  Here, in the last inequality, we use
  $\gamma Q(\piexp) \le Q_t$ and the definition of $\Gamma(\piexp)$.
\end{proof}

By Lemma \ref{lem:appendix-exp3-elem} and Lemma \ref{lem:appendix-gamma}, we obtain the following.
\begin{equation}
  \label{eq:s1_lb}
  S_1  \ge \frac{U_1}{\eta} - (e-2)\eta U_2,
\end{equation}
where $U_1$ and $U_2$ are given as
\begin{align*}
  U_1 = \ex{\sum_{t=1}^{T}\log\left(
    \sum_{x \in \aset}{q_t(x)\exp\left(\eta \langle \phi_t, \wx \rangle \right)}
  \right)}, \quad
  U_2 = \ex{\sum_{t=1}^{T} \sum_{x \in \aset}{q_t(x)\langle \phi_t, \wx\rangle^2}}.
\end{align*}
We bound $|U_2|$ as follows.
\begin{lem}
  \label{lem:appendix-u2-bound}
  The following inequality holds:
  \begin{equation*}
    |U_2| \le \frac{B^2 DT}{1 - \gamma}.
  \end{equation*}
\end{lem}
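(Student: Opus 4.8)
The plan is to recognize the inner sum over $\aset$ as a quadratic form and then control it by a trace identity, exactly as in the standard EXP3 analysis. First I would observe that $\sum_{x \in \aset} q_t(x) \langle \phi_t, \wx \rangle^2 = \phi_t^\trn Q(q_t) \phi_t$, where $Q(q_t) = \sum_{x \in \aset} q_t(x) \wx \wx^\trn$. Since $Q_t = Q(p_t) = \gamma Q(\piexp) + (1-\gamma) Q(q_t)$ and $Q(\piexp)$ is positive semidefinite, Lemma \ref{lem:elem} type reasoning gives $(1-\gamma) Q(q_t) \le Q_t$, hence $\phi_t^\trn Q(q_t) \phi_t \le \frac{1}{1-\gamma} \phi_t^\trn Q_t \phi_t$. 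This is where the factor $1/(1-\gamma)$ in the bound originates.

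Next I would substitute the definition $\phi_t = g_t(x_t) Q_t^{-1} \wx_t$, which collapses the quadratic form to $\phi_t^\trn Q_t \phi_t = g_t(x_t)^2 \wx_t^\trn Q_t^{-1} \wx_t = g_t(x_t)^2 \| \wx_t \|_{Q_t^{-1}}^2$. Thus pointwise I obtain $\sum_{x \in \aset} q_t(x) \langle \phi_t, \wx \rangle^2 \le \frac{g_t(x_t)^2 \| \wx_t\|_{Q_t^{-1}}^2}{1-\gamma}$.

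The crucial step is to take the conditional expectation $\ex[t-1]{\cdot}$, using that under $\mathcal{H}_{t-1}$ the only randomness is $x_t \sim p_t$, while $Q_t$ and $q_t$ are $\mathcal{H}_{t-1}$-measurable. Bounding $g_t(x_t)^2 \le B^2$ and computing $\sum_{x \in \aset} p_t(x) \wx^\trn Q_t^{-1} \wx = \Tr\!\left(Q_t^{-1} \sum_{x \in \aset} p_t(x)\wx \wx^\trn\right) = \Tr(Q_t^{-1} Q_t) = D$ yields $\ex[t-1]{\sum_{x \in \aset} q_t(x) \langle \phi_t, \wx \rangle^2} \le \frac{B^2 D}{1-\gamma}$. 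Summing over $t = 1, \dots, T$ and taking the outer expectation gives the claimed bound, and nonnegativity of the summand lets me replace $U_2$ by $|U_2|$.

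The main obstacle, though it is more bookkeeping than genuine difficulty, is the measurability argument: I must let $q_t$ and $Q_t$ pass outside the conditional expectation while keeping $\phi_t$ (via $x_t$) inside, so that the trace identity is applied against the actual sampling distribution $p_t$ rather than the weighting distribution $q_t$. Conflating the two would give a wrong normalization, and it is precisely the mismatch between sampling from $p_t$ and weighting by $q_t$ that forces the dominating step $Q(q_t) \le \frac{1}{1-\gamma} Q_t$.
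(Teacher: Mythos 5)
Your proof is correct and follows essentially the same route as the paper's: the quadratic-form identity $\sum_{x} q_t(x)\langle \phi_t, \wx\rangle^2 = \phi_t^{\trn}Q(q_t)\phi_t$, the domination $(1-\gamma)Q(q_t) \le Q_t$, the bound $|g_t(x_t)| \le B$, and the trace identity $\ex[t-1]{\wx_t\wx_t^{\trn}} = Q_t$ giving $\Tr(Q_t^{-1}Q_t) = D$. The only cosmetic difference is that you apply the domination step before expanding $\phi_t$ while the paper expands first; the measurability bookkeeping you emphasize is exactly the point the paper also flags.
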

\begin{proof}
  By definition of $\phi_t$, we have
  \begin{align*}
    &\sum_{x \in \aset} q_t(x) \langle \phi_t, \wx \rangle^2
    \le B^2 \sum_{x \in \aset} q_t(x) \wx_t^{\trn} Q_t^{-1} \wx \wx^{\trn}
    Q_t^{-1}\wx_t
    = B^2 \wx_t Q_t^{-1} Q(q_t) Q_t^{-1}\wx_t\\
    &\le \frac{B^2}{1-\gamma} \wx_t^{\trn} Q_t^{-1}\wx_t.
  \end{align*}
  Here the last inequality follows from $(1 - \gamma)Q(q_t) \le Q_t$.
  Therefore,
  \begin{align*}
    &\ex{\sum_{x\in \aset} \langle \phi_t, \wx \rangle^2}
    \le \frac{B^2}{1-\gamma}\ex{\wx_t^{\trn}Q_t^{-1}\wx_t}
     = \frac{B^2}{1-\gamma}\ex{\Tr \left(\wx_t \wx_t^{\trn}Q_t^{-1}\right)}\\
    & = \frac{B^2}{1- \gamma}\ex{\Tr \left(Q_t Q_t^{-1}\right)} = \frac{B^2D}{1 - \gamma}.
  \end{align*}
  Here the second equality follows from the fact that $Q_t$ is $\mathcal{H}_{t-1}$-measurable and
  the linearity of the trace.
  The assertion of the lemma follows from this.
\end{proof}
Next, we give a lower bound for $U_1$.
\begin{lem}
  \label{lem:appendix-u1-lb}
  Let $x_0 \in \aset$ be any element.
  Then the following inequality holds:
  \begin{equation*}
    U_1 \ge \eta \ex{\sum_{t=1}^{T} \langle \phi_t, \wx_0 \rangle} - \log(|\aset|).
  \end{equation*}
\end{lem}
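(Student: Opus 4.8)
The plan is to recognize the summand defining $U_1$ as the logarithmic increment of the normalizing constant (potential) of the exponential-weights distribution $q_t$, and then to telescope the sum over $t$. Recall that $q_t(x)\sim\exp(\eta\sum_{s=1}^{t-1}\langle\phi_s,\wx\rangle)$ and that $\phi_s(x)=\langle\phi_s,\wx\rangle$. I would first make the normalization explicit by setting
\begin{equation*}
  W_t := \sum_{x \in \aset}\exp\left(\eta\sum_{s=1}^{t-1}\langle\phi_s,\wx\rangle\right),
\end{equation*}
with the convention that the empty sum ($t=1$) is zero, so that $q_t(x)=W_t^{-1}\exp(\eta\sum_{s=1}^{t-1}\langle\phi_s,\wx\rangle)$ and $W_1=|\aset|$.

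With this notation the key observation is the pathwise identity
\begin{equation*}
  \sum_{x \in \aset} q_t(x)\exp(\eta\langle\phi_t,\wx\rangle)
  = \frac{1}{W_t}\sum_{x\in\aset}\exp\left(\eta\sum_{s=1}^{t}\langle\phi_s,\wx\rangle\right)
  = \frac{W_{t+1}}{W_t},
\end{equation*}
valid for every realization of the randomness. Taking logarithms and summing over $t=1,\dots,T$, the right-hand side telescopes, so the sum inside $U_1$ equals $\log W_{T+1}-\log W_1=\log W_{T+1}-\log|\aset|$. To finish I would lower bound $\log W_{T+1}$ by retaining only the single term indexed by $x_0$,
\begin{equation*}
  \log W_{T+1} = \log\sum_{x\in\aset}\exp\left(\eta\sum_{s=1}^{T}\langle\phi_s,\wx\rangle\right)
  \ge \eta\sum_{s=1}^{T}\langle\phi_s,\wx_0\rangle.
\end{equation*}
Combining the last two displays yields the almost-sure bound $\sum_{t=1}^{T}\log(\cdot)\ge\eta\sum_{s=1}^{T}\langle\phi_s,\wx_0\rangle-\log|\aset|$, and applying $\ex{\cdot}$ gives exactly $U_1\ge\eta\ex{\sum_{t=1}^{T}\langle\phi_t,\wx_0\rangle}-\log|\aset|$.

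This is the standard exponential-weights potential argument, so I expect no genuine obstacle. The only point needing a little care is that $W_t$ and $q_t$ are themselves random (they are $\mathcal{H}_{t-1}$-measurable), so both the telescoping identity and the drop-a-term lower bound must be carried out pathwise, i.e. \emph{before} the outer expectation is applied; once the inequality holds almost surely, the expectation step is immediate by monotonicity. It is also worth noting that the bound holds uniformly in the choice of $x_0$, which is precisely what later permits specializing $x_0$ to the optimal arm $x_{*}$.
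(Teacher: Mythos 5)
Your proof is correct and is essentially the same argument as the paper's: both recognize the summand as the log-increment of the exponential-weights potential, telescope to $\log W_{T+1}-\log|\aset|$, and lower bound the final log-sum-exp by the single term at $x_0$. Your write-up is in fact slightly more careful than the paper's (which writes the last step as an equality rather than the inequality it actually is, and does not comment on the pathwise versus in-expectation distinction), but there is no substantive difference in approach.
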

\begin{proof}
  By definition of $q_t$, we have
  \begin{align*}
    U_1 &=
    \ex{\sum_{t=1}^{T} \left\{
      \log \left( \sum_{x \in \aset} \exp\left(
      \eta \sum_{s=1}^{t} \langle \phi_s, \wx \rangle
    \right)
    \right)
    -
    \log\left(
      \sum_{x \in \aset}\exp\left(\eta \sum_{s=1}^{t-1} \langle \phi_s, \wx \rangle\right)
    \right)\right\}
    }\\
    &=\ex{
      \log \left( \sum_{x \in \aset} \exp\left(
      \eta \sum_{s=1}^{T} \langle \phi_s, \wx \rangle
    \right)
    \right)
    } - \log|\aset|\\
    &= \eta \ex{\sum_{t=1}^{T} \langle \phi_t, \wx_0\rangle} - \log|\aset|.
  \end{align*}
\end{proof}

\begin{proof}[Proof of Proposition \ref{prop:appexp3}]
  We assume $\gamma = \eta B \Gamma(\piexp)$.
  By \eqref{eq:app-exp3-rt2},
  we have
  \begin{align*}
    \ex{R(T)}
    \le 2 \epsilon T +
    2\gamma BT + (1-\gamma)\ex{\sum_{t=1}^{T} \langle \theta_t, \wx_{*}\rangle}
    - (1- \gamma)S_1 + (1-\gamma)|S_2|
  \end{align*}
  By inequality \eqref{eq:s1_lb}, Lemma \ref{lem:s2-bound}, and Lemma \ref{lem:appendix-u1-lb} with $x_0 =
  x_{*}$, we have
  \begin{align*}
    &\ex{R(T)} \le 2 \epsilon T
    + 2\gamma BT +
    (1 -\gamma) \ex{\sum_{t=1}^{T} \langle \theta_t - \phi_t , \wx_{*} \rangle}
    + \frac{1-\gamma}{\eta}\log|\aset| + (e-2)\eta (1- \gamma) U_2 +
    \frac{\epsilon T \Gamma(\piexp)}{\gamma}
    \\
    &\le
    2\epsilon T +
    2 \gamma BT +
    \frac{\epsilon T \Gamma(\piexp)(1-\gamma)}{\gamma} +
    \frac{\log |\aset|}{\eta} + (e-2)\eta B^2 DT +
  \frac{\epsilon T \Gamma(\piexp)}{\gamma}\\
  &\le
   2\epsilon T+
  2 \gamma BT +
  \frac{\log |\aset|}{\eta} + (e-2)\eta B^2 DT +
  \frac{2\epsilon T \Gamma(\piexp)}{\gamma}.
  \end{align*}
  Here in the second inequality, we used Lemma \ref{lem:appendix-phi-theta} and Lemma \ref{lem:appendix-u2-bound}.
  By $\gamma = \eta B \Gamma(\piexp)$ and $\Gamma(\piexp) \le D$,
  we have
  \begin{align*}
    &\ex{R(T)} \le 2\epsilon T + 2 B^2 \eta T \Gamma(\piexp) +
  \frac{2\epsilon T}{B\eta} + \frac{\log|\aset|}{\eta} + (e-2)B^2 \eta DT\\
  &\le
    2 \epsilon T
   + e B^2 \eta DT + \frac{2\epsilon T}{B\eta} + \frac{\log|\aset|}{\eta}.
  \end{align*}
\end{proof}

\subsection{Proof of Proposition \ref{prop:app-ts}}
\label{sec:proof-prop-ts}
We assume $\lambda \ge 1$.
This can be proved by modifying the proof of \citep{agrawal2013thompson}.
Since most of their arguments can be directly applicable to our case,
we omit proofs of some lemmas.
Let $(\Psi, \prob, \mathcal{G})$ be the probability space on which
all random variables considered here are defined,
where $\mathcal{G} \subset 2^{\Psi}$ is a $\sigma$-algebra on $\Psi$.
We put $x^* := \argmax_{x \in \aset}g(x)$
and for $t = 1, 2, \dots, T$ and $x \in \aset$,
we put $\Delta(x) := \langle \wx^*, \theta \rangle - \langle \wx, \theta \rangle$.
We also put $v_t := l_t : = \beta(A_{t-1}, \delta/(2T), \lambda) + \err \psi_{t-1}$
and $g_t := \sqrt{4\log(|\aset|t)} v_t + l_t$.
In each round $t$,
$\mu_{t-1}$ is sampled
from the multinomial normal distribution $\normdist\left(
\theta_{t-1}, \left(\beta(A_{t-1}, \delta/2, \lambda) + \err \psi_t\right)^2A_{t-1}^{-1}
\right)$.
For $t =1, \dots, T$, we define $E_t$ by
\begin{equation*}
  E_t := \left\{
    \psi \in \Psi :
  |\langle \wx, \theta_{t-1} \rangle- \langle \wx, \theta \rangle| \le l_t \| \wx\|_{A_{t-1}^{-1}},
  \quad \forall x \in \aset
  \right\},
\end{equation*}
and define event $E'_t$ by
\begin{equation*}
  E_t' := \left\{
    \psi \in \Psi:
  | \langle \wx, \mu_{t-1} \rangle - \langle \wx, \theta_{t-1} \rangle| \le
  \sqrt{4\log(|\aset|t)} v_t \|\wx\|_{A_{t-1}^{-1}}, \quad \forall x \in \aset
  \right\}.
\end{equation*}
For an event $G$, we denote by $1_G$ the corresponding indicator function.
Then by assumptions, we see that $E_t \in \filt_{t-1}$, i.e., $1_{E_t}$ is $\filt_{t-1}$-measurable.
For a random variable $X$ on $\Psi$, we say
``on event $E_t$, the conditional expectation (or conditional probability) $\ex{X \mid \filt_{t-1}}$
satisfies a property''
if and only if $1_{E_t}\ex{X \mid \filt_{t-1}} = \ex{1_{E_t}X \mid \filt_{t-1}}$
satisfies the property for almost all $\psi \in \Psi$.

Then by Lemma \ref{lem:upper-bd} and the proof of \citep[Lemma 1]{agrawal2013thompson},
we have
\begin{lem}
  \label{lem:ts-lem1}
  $\prob(E_t) \ge 1 - \frac{\delta}{2T}$ and
  $\prob(E'_t \mid \mathcal{F}_{t - 1}) \ge 1 - 1/t^2$ for all $t$.
\end{lem}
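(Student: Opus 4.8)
The two assertions are of quite different nature, so I would prove them separately. The first, $\prob(E_t) \ge 1 - \delta/(2T)$, is a concentration statement about the ridge estimate $\theta_{t-1}$ and can be inherited almost verbatim from Lemma \ref{lem:upper-bd}; the second, $\prob(E'_t \mid \filt_{t-1}) \ge 1 - 1/t^2$, is a Gaussian tail statement about the sampled parameter $\mu_{t-1}$ and is where the Thompson-sampling-specific argument enters.

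For the bound on $\prob(E_t)$, the plan is to invoke Lemma \ref{lem:upper-bd} with the confidence parameter $\delta$ replaced by $\delta/(2T)$. Its proof rests on the self-normalized concentration inequality of \citep{abbasi2011improved}, which already yields an estimate holding simultaneously for all rounds; reading that estimate at index $t-1$ with failure probability $\delta/(2T)$ reproduces exactly the defining inequality of $E_t$, since $l_t = \beta(A_{t-1}, \delta/(2T), \lambda) + \err \psi_{t-1}$. The choice of $\delta/(2T)$ anticipates a later union bound over the $T$ rounds, which then contributes at most $\delta/2$ to the overall failure probability.

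For the conditional bound on $E'_t$, I would first condition on $\filt_{t-1}$, under which $\theta_{t-1}$, $A_{t-1}$, $\psi_{t-1}$, and hence $v_t$ and the sampling covariance are all determined, so that the only remaining randomness in $\mu_{t-1}$ is the injected Gaussian noise. Consequently, for each fixed $x \in \aset$, the scalar $\langle \wx, \mu_{t-1}\rangle - \langle \wx, \theta_{t-1}\rangle$ is, conditionally, a centered Gaussian with standard deviation $v_t \|\wx\|_{A_{t-1}^{-1}}$. Applying the two-sided Gaussian tail bound $\prob(|Z| > z) \le e^{-z^2/2}$ with $z = \sqrt{4\log(|\aset|t)}$ gives a per-arm conditional failure probability of at most $(|\aset|t)^{-2}$, and a union bound over the $|\aset|$ arms yields $\prob((E'_t)^c \mid \filt_{t-1}) \le |\aset| \cdot (|\aset|t)^{-2} = |\aset|^{-1} t^{-2} \le t^{-2}$, as required.

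The only delicate point, and the part I would take care over, is the measurability bookkeeping: one must verify that $v_t$ and the covariance $v_t^2 A_{t-1}^{-1}$ are $\filt_{t-1}$-measurable so that the conditional Gaussian tail bound is legitimate, and that the exponent $4\log(|\aset|t)$ is calibrated so that the per-arm factor $(|\aset|t)^{-2}$ exactly absorbs the union bound over the arms while leaving the clean $t^{-2}$. Both are routine once the conditioning is fixed, and the remainder follows the template of \citep{agrawal2013thompson}.
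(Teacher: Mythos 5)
Your proof is correct and follows essentially the same route as the paper, which disposes of the lemma by citing Lemma \ref{lem:upper-bd} (with confidence level $\delta/(2T)$) for $E_t$ and the proof of Lemma 1 of \citet{agrawal2013thompson} (conditional Gaussian tail bound plus a union bound over the $|\aset|$ arms) for $E_t'$ — your write-up simply makes that cited argument explicit. The only cosmetic point is that the sampling covariance uses $\beta(A_{t-1},\delta/2,\lambda)$ while $v_t$ uses $\beta(A_{t-1},\delta/(2T),\lambda)$, so the conditional standard deviation of $\langle \wx,\mu_{t-1}\rangle-\langle\wx,\theta_{t-1}\rangle$ is bounded above by, rather than equal to, $v_t\|\wx\|_{A_{t-1}^{-1}}$; the inequality goes the right way, so your tail estimate stands.
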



We note that the proof of \citep[Lemma 2]{agrawal2013thompson} works if $l_t \le v_t$,
i.e., we have the following lemma:
\begin{lem}
  \label{lem:ts-lem2}
  On event $E_t$, we have
  \begin{equation*}
    \prob\left(\langle \mu_t, \wx^*\rangle > \langle \theta, \wx^*\rangle
    \mid \filt_{t - 1}\right) \ge p,
  \end{equation*}
  where $p = \frac{1}{4e\sqrt{\pi}}$.
\end{lem}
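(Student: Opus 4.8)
The plan is to reproduce the anti-concentration step of \citet{agrawal2013thompson} (their Lemma 2) in the misspecified setting; as the preceding remark indicates, the whole argument survives as long as the sampling scale dominates the confidence half-width, i.e.\ $l_t \le v_t$, which holds here since $v_t = l_t$ by definition. The one conceptual difference from the well-specified case is that both $l_t$ (the width controlling $E_t$) and $v_t$ (the scale of the Gaussian sample) already absorb the bias contribution $\err \psi$, so no separate treatment of the misspecification error is needed at this step.

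First I would condition on $\filt_{t-1}$. By the measurability conventions set up before Lemma \ref{lem:ts-lem1}, both $\theta_{t-1}$ and $A_{t-1}$ are $\filt_{t-1}$-measurable and $E_t \in \filt_{t-1}$, while the sample $\mu_{t-1}$ (the draw made in round $t$) is a fresh draw from $\normdist\bl \theta_{t-1}, v_t^2 A_{t-1}^{-1}\br$. Consequently $\langle \mu_{t-1}, \wx^*\rangle$ is, conditionally on $\filt_{t-1}$, a scalar Gaussian with mean $\langle \theta_{t-1}, \wx^*\rangle$ and standard deviation $v_t \|\wx^*\|_{A_{t-1}^{-1}}$. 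Standardizing with $Z \sim \normdist(0,1)$,
\begin{equation*}
  \prob\bl \langle \mu_{t-1}, \wx^*\rangle > \langle \theta, \wx^*\rangle \mid \filt_{t-1}\br = \prob\bl Z > \zeta \br, \qquad \zeta := \frac{\langle \theta, \wx^*\rangle - \langle \theta_{t-1}, \wx^*\rangle}{v_t \|\wx^*\|_{A_{t-1}^{-1}}}.
\end{equation*}
On $E_t$ the defining inequality gives $|\langle \theta_{t-1}, \wx^*\rangle - \langle \theta, \wx^*\rangle| \le l_t \|\wx^*\|_{A_{t-1}^{-1}}$, hence $\zeta \le l_t/v_t \le 1$. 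Since the Gaussian upper tail is nonincreasing, $\prob(Z > \zeta) \ge \prob(Z > 1)$, and I would finish with the elementary anti-concentration bound $\prob(Z > z) \ge \frac{1}{4\sqrt{\pi}} e^{-z^2}$ (Abramowitz--Stegun), which at $z = 1$ yields $\prob(Z > 1) \ge \frac{1}{4e\sqrt{\pi}} = p$.

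The routine parts are the standardization and the tail estimate; the step requiring the most care is the measurability bookkeeping that legitimizes the conditional statement---specifically that $1_{E_t}$ is $\filt_{t-1}$-measurable, so that ``on $E_t$, $\prob(\cdot \mid \filt_{t-1}) \ge p$'' means $1_{E_t}\,\prob(\cdot \mid \filt_{t-1}) \ge p\, 1_{E_t}$ almost surely, in the sense fixed before Lemma \ref{lem:ts-lem1}. The only genuinely new verification relative to \citet{agrawal2013thompson} is confirming $l_t \le v_t$, after which their computation applies essentially verbatim.
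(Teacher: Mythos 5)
Your proposal is correct and matches the paper's approach: the paper simply observes that the proof of Lemma~2 of \citet{agrawal2013thompson} goes through whenever $l_t \le v_t$ (which holds here since $v_t = l_t$ by definition), and your argument is exactly that proof written out --- conditioning on $\filt_{t-1}$, standardizing the Gaussian sample, using the $E_t$ inequality to bound the standardized deviation by $l_t/v_t \le 1$, and applying the Gaussian anti-concentration bound to obtain $p = \frac{1}{4e\sqrt{\pi}}$. No substantive difference from the paper's (cited) argument.
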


The main differences of our proof and theirs lie in the definitions of
$l_t, v_t, x^*$, and $\Delta(x)$ (they define $\Delta(x)$ as
$\sup_{y \in\aset} \langle \theta, \widetilde{y} \rangle - \langle \theta, \wx\rangle$
and we consider $x^* = \argmax g(x)$ instead of $\argmax_{x} \langle \wx, \theta \rangle$).
However, it can be verified that these differences do not matter in the arguments of
Lemma 3, 4 in \citep{agrawal2013thompson}.
In fact,
we can prove the following lemma in a similar way to the proof of \citep[Lemma 3]{agrawal2013thompson}.
\begin{lem}
  We define $C(t)$ by $\{x \in \aset : \Delta(x) > g_t \| \wx\|_{A_{t-1}^{-1}}\}$.
  On event $E_t$, we have
  \begin{equation*}
    \prob (x \not \in C(t) \mid \filt_{t-1}) \ge p  - \frac{1}{t^2}.
  \end{equation*}
  \label{lem:ts-lem3}
  Here $p$ is given in Lemma \ref{lem:ts-lem2}.
\end{lem}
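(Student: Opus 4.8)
The plan is to follow the argument of \citet[Lemma 3]{agrawal2013thompson}, adapted to the misspecified setting, by reducing the probabilistic claim to a deterministic implication about the selected arm. Write $x_t := \argmax_{x \in \aset} \langle \mu_{t-1}, \wx \rangle$ for the arm chosen in round $t$ (this is the random arm $x$ appearing in the statement). First I would establish the pointwise implication that, on the event $E_t \cap E'_t \cap \{\langle \mu_{t-1}, \wx^* \rangle > \langle \theta, \wx^* \rangle\}$, the selected arm is unsaturated, i.e. $x_t \notin C(t)$.

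To prove this implication, fix any saturated arm $x \in C(t)$. Chaining the defining inequality of $E'_t$ with that of $E_t$ gives
\[
  \langle \mu_{t-1}, \wx \rangle
  \le \langle \theta_{t-1}, \wx \rangle + \sqrt{4 \log(|\aset| t)}\, v_t \|\wx\|_{A_{t-1}^{-1}}
  \le \langle \theta, \wx \rangle + \left( \sqrt{4 \log(|\aset| t)}\, v_t + l_t \right) \|\wx\|_{A_{t-1}^{-1}}.
\]
Since $\sqrt{4 \log(|\aset| t)}\, v_t + l_t = g_t$ and $x$ is saturated, i.e. $\Delta(x) > g_t \|\wx\|_{A_{t-1}^{-1}}$, the right-hand side is strictly smaller than $\langle \theta, \wx \rangle + \Delta(x) = \langle \theta, \wx^* \rangle$; and on the event $\{\langle \mu_{t-1}, \wx^* \rangle > \langle \theta, \wx^* \rangle\}$ this is in turn strictly smaller than $\langle \mu_{t-1}, \wx^* \rangle$. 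Hence every saturated arm has sampled value strictly below that of $x^*$. Because $x_t$ maximizes $\langle \mu_{t-1}, \cdot \rangle$, its sampled value is at least $\langle \mu_{t-1}, \wx^* \rangle$, so $x_t$ cannot be saturated, proving the implication.

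It then remains to lower-bound the conditional probability of that event on $E_t$. By Lemma \ref{lem:ts-lem2}, on $E_t$ the sampled value of $x^*$ exceeds its true mean $\langle \theta, \wx^* \rangle$ with conditional probability at least $p$, and by Lemma \ref{lem:ts-lem1} we have $\prob((E'_t)^c \mid \filt_{t-1}) \le 1/t^2$. Applying the elementary bound $\prob(A \cap B \mid \filt_{t-1}) \ge \prob(A \mid \filt_{t-1}) - \prob(B^c \mid \filt_{t-1})$ with $A = \{\langle \mu_{t-1}, \wx^* \rangle > \langle \theta, \wx^* \rangle\}$ and $B = E'_t$, the implication above yields, on $E_t$,
\[
  \prob(x_t \notin C(t) \mid \filt_{t-1})
  \ge \prob(A \cap E'_t \mid \filt_{t-1})
  \ge p - \frac{1}{t^2},
\]
which is the assertion.

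The only delicate point, rather than a genuine obstacle, is the bookkeeping of the phrase ``on event $E_t$'' in the measure-theoretic sense fixed just above the lemma: since $E_t \in \filt_{t-1}$, the indicator $1_{E_t}$ is $\filt_{t-1}$-measurable and passes through the conditional expectations, so each inequality above should be read as holding $1_{E_t}$-almost surely. I expect the deterministic implication to be routine once the definitions of $E_t$, $E'_t$, $g_t$, $l_t$, and $v_t$ are unwound, after which the probabilistic step is immediate from Lemmas \ref{lem:ts-lem1} and \ref{lem:ts-lem2}.
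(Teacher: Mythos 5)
Your proof is correct and follows essentially the same route as the paper's own argument: both reduce to the deterministic implication that on $E_t \cap E_t'$ every saturated arm's sampled value is bounded by $\langle \theta, \wx \rangle + g_t \|\wx\|_{A_{t-1}^{-1}} < \langle \theta, \wx^* \rangle$, and then apply Lemmas \ref{lem:ts-lem1} and \ref{lem:ts-lem2} with the bound $\prob(A \cap E_t' \mid \filt_{t-1}) \ge \prob(A \mid \filt_{t-1}) - \prob((E_t')^c \mid \filt_{t-1})$. The only cosmetic difference is that the paper first isolates the intermediate event $\{\langle \wx^*, \mu_{t-1}\rangle > \langle \wx, \mu_{t-1}\rangle,\ \forall x \in C(t)\}$ before applying the same chain, which changes nothing of substance.
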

\begin{proof}
  Because the algorithm selects $x \in \aset$ that maximizes
  $\langle \wx, \mu_{t-1} \rangle$, if
  $\langle \wx^*, \mu_{t-1} \rangle > \langle \wx, \mu_{t-1}\rangle$
  for all $x \in C(t)$, then we have
  $x_t \not \in C(t)$.
  Therefore, we have
  \begin{equation}
    \label{ineq-lem:ts-lem3}
  \prob( x_t \not \in C(t) \mid \filt_{t-1}) \ge
  \prob \left(\langle \wx^*, \mu_{t-1} \rangle > \langle \wx, \mu_{t-1}\rangle,
  \forall x \in C(t) \mid \filt_{t-1} \right).
  \end{equation}
  By definitions of $C(t), E_t$, and $E_t'$,
  on even $E_t \cap E_t'$, we have
  $\langle \wx, \mu_{t-1}\rangle \le \langle \wx, \theta \rangle + g_t \| \wx \|_{A_{t-1}^{-1}}
  < \langle \wx^*, \theta \rangle$
  for all $x \in C(t)$.
  Therefore, on $E_t \cap E_t'$,
  if $\langle \wx^*, \mu_{t-1}\rangle > \langle \wx^*, \theta \rangle$,
  we have $\langle \wx^*, \mu_{t-1}\rangle > \langle \wx, \mu_{t-1}\rangle$ for all $x \in C(t)$.
  Thus we obtain the following inequalities:
  \begin{align*}
    &\prob(\langle \wx^*, \mu_{t-1}\rangle > \langle \wx, \mu_{t-1}\rangle, \quad \forall x \in C(t) \mid \filt_{t-1})\\
    &\ge \prob( \langle \wx^*, \mu_{t-1}\rangle > \langle \wx^*, \theta \rangle \mid \filt_{t-1})
    - \prob((E_t')^c \mid \filt_{t-1})\\
    &\ge p - 1/t^2.
  \end{align*}
  Here $(E_t')^c$ is the complement of $E_t'$ and we used Lemmas \ref{lem:ts-lem1}, \ref{lem:ts-lem2}
  in the last inequality.
  By inequality \eqref{ineq-lem:ts-lem3}, we have our assertion.
\end{proof}

We can also prove the following lemma in a similar way to the proof of \citep[Lemma 4]{agrawal2013thompson}.
\begin{lem}
  \label{lem:lem4}
  On event $E_t$, we have
  \begin{equation*}
    \ex{\Delta( x_t ) \mid \mathcal{F}_{t-1}} \le c_1 g_t \ex{\|x_t\|_{A_{t-1}^{-1}}
      \mid \mathcal{F}_{t-1}}
    + \frac{c_2 g_t}{t^2},
  \end{equation*}
  where $c_1$ and $c_2$ are universal constants.
\end{lem}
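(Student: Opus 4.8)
The plan is to follow the structure of \citep[Lemma 4]{agrawal2013thompson}, adapting it to the misspecified setting in which the extra term $\err \psi_{t-1}$ has already been absorbed into the confidence width through $l_t = v_t = \beta(A_{t-1}, \delta/(2T), \lambda) + \err \psi_{t-1}$. First I would record the combined concentration: on $E_t \cap E_t'$, adding the two defining inequalities via the triangle inequality gives, for every $x \in \aset$,
$|\langle \wx, \mu_{t-1}\rangle - \langle \wx, \theta\rangle| \le (l_t + \sqrt{4\log(|\aset|t)}\, v_t)\|\wx\|_{A_{t-1}^{-1}} = g_t \|\wx\|_{A_{t-1}^{-1}}$,
since $g_t$ is precisely the sum of the two widths. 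Thus $g_t\|\wx\|_{A_{t-1}^{-1}}$ is a valid symmetric radius relating the sampled value $\langle \wx, \mu_{t-1}\rangle$ to the true linear value $\langle \wx, \theta\rangle$.

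Next I would introduce the unsaturated arm of smallest width, $\bar{x}_t := \argmin_{x \notin C(t)} \|\wx\|_{A_{t-1}^{-1}}$, which is $\filt_{t-1}$-measurable and well-defined because $x^*$ is always unsaturated ($\Delta(x^*) = 0 \le g_t\|\wx^*\|_{A_{t-1}^{-1}}$). Decomposing $\Delta(x_t) = \Delta(\bar{x}_t) + (\langle \widetilde{\bar{x}}_t, \theta\rangle - \langle \wx_t, \theta\rangle)$, I would bound the first summand by $g_t\|\widetilde{\bar{x}}_t\|_{A_{t-1}^{-1}}$ using $\bar{x}_t \notin C(t)$. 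For the second summand, on $E_t \cap E_t'$ I would use the combined concentration to swap $\theta$ for $\mu_{t-1}$ at both endpoints, at a cost of $g_t(\|\widetilde{\bar{x}}_t\|_{A_{t-1}^{-1}} + \|\wx_t\|_{A_{t-1}^{-1}})$, and then invoke that $x_t$ maximizes $\langle \cdot, \mu_{t-1}\rangle$ to discard the sampled-value difference. This yields the pointwise bound $\Delta(x_t) \le g_t(2\|\widetilde{\bar{x}}_t\|_{A_{t-1}^{-1}} + \|\wx_t\|_{A_{t-1}^{-1}})$ on $E_t \cap E_t'$.

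The crux is the probabilistic step converting the deterministic width $\|\widetilde{\bar{x}}_t\|_{A_{t-1}^{-1}}$ into the realized expected width. By Lemma \ref{lem:ts-lem3}, on $E_t$ we have $\prob(x_t \notin C(t) \mid \filt_{t-1}) \ge p - 1/t^2$, and since $\bar{x}_t$ minimizes the width over unsaturated arms, $\ex{\|\wx_t\|_{A_{t-1}^{-1}} \mid \filt_{t-1}} \ge \|\widetilde{\bar{x}}_t\|_{A_{t-1}^{-1}}(p - 1/t^2)$; hence $\|\widetilde{\bar{x}}_t\|_{A_{t-1}^{-1}} \le (p - 1/t^2)^{-1}\ex{\|\wx_t\|_{A_{t-1}^{-1}} \mid \filt_{t-1}}$ whenever $t$ is large enough that $p - 1/t^2 \ge p/2$. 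Finally, since $E_t \in \filt_{t-1}$, I would take $\ex{\cdot \mid \filt_{t-1}}$ on $E_t$ and split on $E_t'$ and its complement: on $E_t'$ the two previous displays produce the $c_1 g_t \ex{\|\wx_t\|_{A_{t-1}^{-1}} \mid \filt_{t-1}}$ term with $c_1 = 4/p + 1$, while on $(E_t')^c$ I would use the crude bound $\Delta(x_t) \le 2B \le 2g_t$ (valid since $\lambda \ge 1$ forces $g_t \ge l_t \ge \sqrt{\lambda}B \ge B$) together with $\prob((E_t')^c \mid \filt_{t-1}) \le 1/t^2$ from Lemma \ref{lem:ts-lem1}, giving the $c_2 g_t/t^2$ term.

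I expect the main obstacle to be bookkeeping rather than conceptual: carefully tracking which objects are $\filt_{t-1}$-measurable (namely $C(t)$, $\bar{x}_t$, $g_t$, and $E_t$ itself, as opposed to $\mu_{t-1}$, $x_t$, and $E_t'$, which are not), and absorbing the finitely many small-$t$ indices—where $p - 1/t^2$ is not bounded below by $p/2$—into the universal constants by applying the crude bound $\Delta(x_t) \le 2g_t$ there. Because the misspecification is confined to the definition of $g_t$, no analytic difficulty arises beyond what already appears in \citep{agrawal2013thompson}.
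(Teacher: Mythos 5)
Your proposal is correct and follows essentially the same route as the paper, which simply defers to the proof of Lemma~4 of \citet{agrawal2013thompson} after noting that the changed definitions of $l_t$, $v_t$, $x^*$, and $\Delta(x)$ do not affect the argument; your reconstruction via the saturated set $C(t)$, the minimal-width unsaturated arm, the combined concentration radius $g_t\|\wx\|_{A_{t-1}^{-1}}$ on $E_t \cap E_t'$, and the probabilistic conversion through Lemma~\ref{lem:ts-lem3} is exactly that argument, with the small-$t$ and $(E_t')^c$ cases correctly absorbed into the universal constants.
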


For $t=1, 2, \dots, T$, define random variables $X_t$ and $Y_t$ by
\begin{equation*}
  X_t := \Delta(x_t) 1_{E_t} - c_1 g_t \|x_t\|_{A_{t-1}^{-1}} - \frac{c_2 g_t}{t^2},\quad
  Y_t := \sum_{s=1}^{t}X_t.
\end{equation*}
From Lemma \ref{lem:lem4}, we can prove the following lemma.
\begin{lem}
  \label{lem:super-mart}
  The process $\{Y_t\}_{t=0, \dots, T}$ is a super-martingale process w.r.t.
  the filtration $\{\mathcal{F}_t\}_t$.
\end{lem}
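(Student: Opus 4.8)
The plan is to verify the defining super-martingale inequality $\ex{Y_t \mid \mathcal{F}_{t-1}} \le Y_{t-1}$ directly. Since $Y_t = Y_{t-1} + X_t$ and $Y_{t-1}$ is $\mathcal{F}_{t-1}$-measurable, this reduces to showing $\ex{X_t \mid \mathcal{F}_{t-1}} \le 0$ almost surely. The adaptedness and integrability requirements in the definition of a super-martingale are routine here, since the arm set $\aset$ is finite and each of $\Delta(x_t)$, $g_t$, and $\|x_t\|_{A_{t-1}^{-1}}$ is bounded.

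The key structural observation is that both $1_{E_t}$ and $g_t$ are $\mathcal{F}_{t-1}$-measurable: as recorded above $E_t \in \mathcal{F}_{t-1}$, and $g_t = \sqrt{4\log(|\aset|t)}\,v_t + l_t$ is a function of $A_{t-1}$ and $\psi_{t-1}$, both of which are $\mathcal{F}_{t-1}$-measurable. I can therefore pull these factors out of the conditional expectation to obtain
\begin{equation*}
  \ex{X_t \mid \mathcal{F}_{t-1}}
  = 1_{E_t}\,\ex{\Delta(x_t) \mid \mathcal{F}_{t-1}}
  - c_1 g_t\, \ex{\|x_t\|_{A_{t-1}^{-1}} \mid \mathcal{F}_{t-1}}
  - \frac{c_2 g_t}{t^2}.
\end{equation*}

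From here I would split on the two complementary events. On $E_t$ (where $1_{E_t} = 1$) Lemma \ref{lem:lem4} gives precisely $\ex{\Delta(x_t) \mid \mathcal{F}_{t-1}} \le c_1 g_t\,\ex{\|x_t\|_{A_{t-1}^{-1}} \mid \mathcal{F}_{t-1}} + c_2 g_t/t^2$, so the displayed right-hand side is non-positive. On the complement $E_t^c$ (where $1_{E_t} = 0$) the first term vanishes, and since $g_t \ge 0$ and $\|x_t\|_{A_{t-1}^{-1}} \ge 0$ the two remaining terms are non-positive as well. In either case $\ex{X_t \mid \mathcal{F}_{t-1}} \le 0$ almost surely, which gives the claim.

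I do not expect a genuine obstacle here. The indicator $1_{E_t}$ was built into the definition of $X_t$ precisely so that the bound of Lemma \ref{lem:lem4}, valid only on the event $E_t$, can be upgraded to an almost-sure inequality through the case split. The one point requiring care is the measurability bookkeeping that lets $1_{E_t}$ and $g_t$ be treated as constants under $\ex{\cdot \mid \mathcal{F}_{t-1}}$, which is exactly the observation noted above.
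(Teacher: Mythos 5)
Your proof is correct and follows exactly the route the paper intends: the paper omits the argument, noting only that the lemma follows from Lemma \ref{lem:lem4}, and your case split on $E_t$ versus $E_t^c$ (using that $1_{E_t}$ and $g_t$ are $\mathcal{F}_{t-1}$-measurable so they can be pulled out of the conditional expectation) is precisely the standard way to upgrade that lemma, valid only on $E_t$, to the almost-sure inequality $\ex{X_t \mid \mathcal{F}_{t-1}} \le 0$. No gaps.
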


\begin{proof}[Proof of Proposition \ref{prop:app-ts}]
  By Lemma \ref{lem:super-mart} and $\| X_t \| \le 2(B + \err) + (c_1 + c_2)g_t$ (for all $t$),
  applying Azuma-Hoeffding inequality,
  we see that
  there exists an event $G$ with $\prob(G) \ge 1 - \delta/2$
  such that on $G$, the following inequality holds:
  \begin{equation*}
    \sum_{t=1}^T \Delta(x_t) 1_{E_t}
    \le \sum_{t=1}^{T} c_1 g_t \| \wx_t\|_{A_{t-1}^{-1}}+
    \sum_{t=1}^{T}c_2 g_t/t^2 + \sqrt{
      \left(
      4T(B+\err)^2 + 2(c_1 + c_2)^2 \sum_{t=1}^{T}g_t^2
      \right)\log(2/\delta)
    }.
  \end{equation*}
  Since $g_t \le g_T$ for any $t$, on the event $G$, we have
  \begin{align*}
    \sum_{t=1}^T \Delta(x_t) 1_{E_t}  \le c_1 g_T
    \sqrt{T}\sqrt{\sum_{t=1}^{T} \| \wx_t\|^2_{A_{t-1}^{-1}}} +
    c_2 g_T \frac{\pi^2}{6} +
    \sqrt{T}\sqrt{\left(
      4(B + \err)^2 + 2(c_1 + c_2)^2 g_T^2
      \right)\log(2/\delta)}.
  \end{align*}
  By inequalities \eqref{eq:beta-upp}, we have
  \begin{align*}
    \sqrt{\sum_{t=1}^{T} \| \wx_t\|^2_{A_{t-1}^{-1}}} &= \otilde(\sqrt{D}),\quad
    g_T = \otilde(\sqrt{\log (|\aset|)}v_T) = \otilde
    \left(\sqrt{|\log |\aset|} (\sqrt{D} + \sqrt{\log(1/\delta)} + \err\psi_T) \right).
  \end{align*}
  Since $\psi_T = \sum_{s=1}^{T}\| \wx_s\|_{A_{s-1}^{-1}} \le
  \sqrt{T} \sqrt{\sum_{s=1}^{T}\|\wx_s\|^2_{A_{s-1}^{-1}}} = \otilde (\sqrt{DT})$,
  we obtain
  \begin{equation*}
    g_T = \otilde\left(
      \sqrt{\log A}
      (\sqrt{D} + \sqrt{\log(1/\delta)} + \err \sqrt{DT})
      \right).
  \end{equation*}
  Therefore, on the event $G$, we have
  \begin{equation*}
    \sum_{t=1}^T \Delta(x_t) 1_{E_t}
    =
    \otilde \left(
      \sqrt{\log (|\aset|)}
    \left\{
      D\sqrt{T} + \sqrt{DT\log(1/\delta)} + \log(1/\delta) \sqrt{T} +
     \left(D T + T \sqrt{D\log(1/\delta)}\right)\err)
    \right\}
    \right).
  \end{equation*}
  Therefore, on event $\bigcap_{t=1}^{T}E_t \cap G$, we can upper bound the regret as follows:
  \begin{align*}
    R(T)  &= \sum_{t=1}^{T} \{g(x^*) - g(x_t)\}
    \le \err T +  \sum_{t=1}^T \Delta(x_t) 1_{E_t}\\
    &=
    \otilde \left(
      \sqrt{\log (|\aset|)}
    \left\{
      D\sqrt{T} + \sqrt{DT\log(1/\delta)} + \log(1/\delta) \sqrt{T} +
     \left(D T + T \sqrt{D\log(1/\delta)}\right)\err)
    \right\}
    \right).
  \end{align*}
  Since $\prob(\bigcap_{t=1}^{T}E_t\cap G)\ge 1 - \delta$, we have the assertion of the proposition.
\end{proof}

\subsection{Proof of Theorem \ref{thm:app-ucb}}
\label{sec:subsec-proof-thm-app-ucb}
Since Theorems \ref{thm:app-pe}, \ref{thm:app-ts} can be proved in a similar way,
we only provide proof of Theorem \ref{thm:app-ucb}.

Let $\{\xi_1, \dots, \xi_D\}$ and $N_1, \dots, N_D$
be a sequence of points and Newton basis returned by Algorithm \ref{alg:appucb} with
$\admerr = \frac{\alpha}{T^q}$,
where $D = D_{q, \alpha}(T)$ and $q \ge 1/2$.

We verify the assumptions of the (stochastic) misspecified linear bandit problem hold, i.e.,
we show there exists $\theta \in \RR^D$
such that the following conditions are satisfied for $\wx = [N_1(x),\dots, N_D(x)]^\trn$ and $\theta$:
\begin{enumerate}
\item $\| \wx\|_2 \le 1$.
\item If $x$ is a $\aset$-valued random variable
  and $\mathcal{F}_{t}$-measurable, then $\wx$ is $\mathcal{F}_{t}$-measurable .
\item $\| \theta\|_2 \le B$.
\item $\sup_{x \in \aset}|f(x) - \langle \theta, \wx \rangle| < \err$, where $\err = \alpha B/T^q$.
\end{enumerate}

We put $X_D := \{\xi_1, \dots, \xi_D\}$. Then by definition, Newton basis $N_1, \dots, N_D$
is a basis of $V(X_D)$.
We define $\theta_1, \dots, \theta_D \in \RR$ by
$\Pi_{V(X_D)}f = \sum_{i=1}^{D} \theta_i N_i$ and put $\theta = [\theta_1, \dots, \theta_D]^\trn$.
Since Newton basis is an orthonormal basis of $V(X_D)$,
we have
\begin{equation*}
  \| \theta \|_2 = \left \| \sum_{i=1}^{D} \theta_i N_i\right \|_{\hilb{\Omega}}
  = \left \|\Pi_{V(X_D)}f \right \|_{\hilb{ \Omega }}
  \le \| f \|_{\hilb{\Omega}} \le B.
\end{equation*}
By the orthonormality, we have
\begin{math}
  P_{V(X)}^2(x) = K(x, x) - \sum_{i = 1}^{m} N_i^2(x)
\end{math}
(c.f. \citet[Lemma 5]{santin2017convergence}).
Then by assumption, we have $\| \wx\|_2^2 = \sum_{i=1}^{m}N_i^2(x) = K(x, x) - P_{V(X_D)}^2(x) \le 1$.
Since $N_k$ for $k=1, \dots, D$ is a linear combination of $K(\cdot, \xi_1), \dots, K(\cdot, \xi_D)$
and $K$ is continuous, $x \mapsto \wx$ is continuous.
Therefore, $\wx$ is $\filt_t$-measurable if $x$ is $\filt_t$-measurable.
By definition of the P-greedy algorithm, we have $\sup_{x \in \aset}P_{V(X_D)}(x)
< \frac{\alpha}{T^q}$.
By this inequality and the definition of the power function, the following inequality holds:
\begin{equation*}
  \sup_{x \in \aset} |f(x) - \langle \theta, \wx \rangle|
  =\sup_{x \in \aset} |f(x) - \left(\Pi_{V(X_D)}f\right)(x)| \le \| f\| \frac{\alpha}{T^q} \le
  \frac{\alpha B}{T^q}.
\end{equation*}
Thus, one can apply results of a misspecified linear bandit problem with $\err = \frac{\alpha B}{T^q}$.
By applying Proposition \ref{prop:app-ucb},
with probability at least $1-\delta$, the regret is upper bounded as follows:
\begin{equation*}
  R_{\algucb}(T)  =
   \otilde\left(
    \sqrt{T D_{q, \alpha}(T) \log(1/\delta)} + D_{q, \alpha}(T) \sqrt{T}
    \right).
\end{equation*}
Since computing Newton basis requires $O(|\aset| D^2)$ time and
total complexity of the modified LinUCB is given as
$O(|\aset| D^2 T)$, we have the assertion of Theorem \ref{thm:app-ucb}.

\subsection{Proof of Theorem \ref{thm:app-exp3}}
For simplicity, by normalization, we assume $B = 1$.
We denote by $R_{\text{\algexp{}}}(T)$
the cumulative regret that \algexp{} with $q = 1$
and $\alpha = \log (|\aset|)$ incurs up to time step $T$.
We can reduce the adversarial RKHS bandit problem to the adversarial
misspecified linear bandit problem as in \S \ref{sec:subsec-proof-thm-app-ucb}.
To apply Proposition \ref{prop:appexp3},
we need to prove that $\{\wx | x \in \aset \}$ spans $\RR^D$.
We denote by $X = \{X_1, \dots, X_D\}$ the points returned by the $P$-greedy algorithm.
Then,
since $N_1, \dots, N_D$ is a basis of $V(X)$ and $K$ is positive definite,
$\rank (N_i(x))_{1 \le i \le D, x \in \aset} = \rank (K(x_i, x))_{1 \le i \le D, x \in \aset} = D$.
Therefore, $\{\wx | x \in \aset \}$ spans $\RR^D$.

By Proposition \ref{prop:appexp3}, we have
\begin{equation*}
  \ex{R_{\text{\algexp{}}}(T)} \le 2 \epsilon T +e \eta D T + \frac{2\epsilon} {\eta} +
  \frac{\log(|\aset|)}{\eta},
\end{equation*}
where $\epsilon = \frac{\log (|\aset|)}{T}$ and $D = D_{1, \log (|\aset|)}(T)$.
Thus we have
\begin{math}
  \ex{R_{\text{\algexp{}}}(T)} \le 2 \log(|\aset|) +e \eta D T + \frac{3\log(|\aset|)}{\eta}.
\end{math}
By taking $\eta = \sqrt{\frac{\log (|\aset|)}{DT}}$, we have the assertion of the theorem.
\subsection{Proof of Theorem \ref{thm:igp-app}}
\label{sec:appendix-thm-igp-app}
First, we prove that the $P$-greedy algorithm (Algorithm \ref{alg:nbasispgreedy})
also gives a uniform kernel approximation.
\begin{lem}
  \label{lem:kernel-app}
  Let $N_1, \dots, N_D$ be a Newton basis returned by the $P$-greedy algorithm \ref{alg:nbasispgreedy}
  with error $\admerr$ and $\omdsc = \aset$.
  For $x \in \aset$, we put $\wx := [N_1(x), \dots, N_D(x)]^{\trn}$.
  Then, we have
  \begin{math}
    \sup_{x, y \in \aset}|K(x, y) - \langle \wx, \mywy \rangle| \le \admerr.
  \end{math}
  \begin{proof}
    We denote by $X$ the points returned by the $P$-greedy algorithm.
    Then, by definition of the Power function, we have
    \begin{equation*}
      \left|h(x) - \left(\Pi_{V(X)}h \right)(x)\right| \le \| h\|_{\hilb{\Omega}} \admerr,
    \end{equation*}
    for any $h \in \hilb{\Omega}$ and $x \in \aset$.
    We take arbitrary $y \in \aset$ and take $h = K(\cdot, y)$.
    Since $N_1, \dots, N_D$ is an orthonormal basis of $V(X)$,
    we have
    \begin{align*}
      \left(\Pi_{V(X)} h\right)(x)
      = \sum_{i=1}^{D}\langle h, N_i\rangle_{\hilb{\Omega}}N_i(x)
      = \sum_{i=1}^{D}N_i(y)N_i(x) = \langle \wx, \mywy \rangle.
    \end{align*}
    Here, in the second equality, we used the reproducing property.
    Since $\| h\|_{\hilb{\Omega}} \le 1$ and $x, y$ are arbitrary,
    we have our assertion.
  \end{proof}
\end{lem}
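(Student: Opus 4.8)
The plan is to apply the defining power-function bound to the single well-chosen function $h := K(\cdot, y)$ for each fixed $y \in \aset$, and to recognize that the orthogonal projection of this $h$ onto $V(X)$ is precisely the inner product $\langle \wx, \mywy \rangle$. Concretely, I would first fix $y \in \aset$ and set $h = K(\cdot, y) \in \hilb{\Omega}$. The reproducing property gives $\|h\|_{\hilb{\Omega}}^2 = \langle K(\cdot, y), K(\cdot, y)\rangle_{\hilb{\Omega}} = K(y, y) \le 1$, where the bound is the normalization assumption $K(x,x)\le 1$; this is what will let me drop the norm factor at the end.

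Next I would compute $\Pi_{V(X)} h$ explicitly. Since the Newton basis $N_1, \dots, N_D$ is an \emph{orthonormal} basis of $V(X)$, the projection is $(\Pi_{V(X)}h)(x) = \sum_{i=1}^D \langle h, N_i\rangle_{\hilb{\Omega}}\, N_i(x)$. Applying the reproducing property once more yields $\langle h, N_i\rangle_{\hilb{\Omega}} = \langle K(\cdot, y), N_i\rangle_{\hilb{\Omega}} = N_i(y)$, so that $(\Pi_{V(X)}h)(x) = \sum_{i=1}^D N_i(y) N_i(x) = \langle \wx, \mywy\rangle$. This identifies the coordinate inner product with the interpolant of $K(\cdot,y)$.

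Finally I would invoke the basic power-function inequality $|h(x) - (\Pi_{V(X)}h)(x)| \le \|h\|_{\hilb{\Omega}}\, P_{V(X)}(x)$ together with the stopping criterion of Algorithm \ref{alg:nbasispgreedy}, which guarantees $\sup_{x\in\aset} P_{V(X)}(x) < \admerr$ upon termination. Combining these with $\|h\|_{\hilb{\Omega}} \le 1$ gives $|K(x,y) - \langle \wx, \mywy\rangle| \le \admerr$ for every $x \in \aset$; taking the supremum over $x$ and $y$ finishes the argument. There is no real obstacle here beyond the one conceptual move: the nontrivial step is simply to evaluate the abstract $L^\infty$ approximation guarantee at $h = K(\cdot, y)$, which converts it into a statement about approximating the kernel itself. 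Everything else is a direct computation that uses only orthonormality of the Newton basis and the reproducing property.
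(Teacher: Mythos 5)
Your proposal is correct and follows essentially the same route as the paper's proof: apply the power-function bound to $h = K(\cdot, y)$, identify $\Pi_{V(X)}h$ with $\langle \wx, \mywy\rangle$ via orthonormality of the Newton basis and the reproducing property, and use $\|K(\cdot,y)\|_{\hilb{\Omega}}^2 = K(y,y) \le 1$ to drop the norm factor. Your explicit computation of $\|h\|_{\hilb{\Omega}}$ from the normalization assumption is a small but welcome clarification of a step the paper leaves implicit.
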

Next, we introduce the following classical result on matrix eigenvalues.
\begin{lem}[a specail case of the Wielandt-Hoffman theorem \citet{hoffman1953variation}]
  \label{lem:wielandt-hoffman}
  Let $A, B \in \RR^{n\times n}$ be symmetric matrices.
  Denote by $a_1 \le \dots \le a_n$ and $b_1 \le  \dots \le b_n$ be the eigenvalues of
  $A$ and $B$ respectively.
  Then, we have
  \begin{math}
    \sum_{i=1}^{n}|a_i - b_i|^2 \le \| A- B \|_{F}^{2},
  \end{math}
  where $\| \cdot \|_{F}$ denotes the Frobenius norm.
\end{lem}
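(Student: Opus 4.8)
The plan is to reduce the stated inequality to a single trace inequality and then exploit the doubly-stochastic structure that arises from an orthogonal change of basis. First I would expand the Frobenius norm, using that $A - B$ is symmetric, so that $\| A - B\|_F^2 = \Tr((A-B)^2) = \sum_i a_i^2 - 2\Tr(AB) + \sum_i b_i^2$. Comparing this with $\sum_i |a_i - b_i|^2 = \sum_i a_i^2 - 2\sum_i a_i b_i + \sum_i b_i^2$, the desired bound is seen to be \emph{equivalent} to
\[
  \Tr(AB) \le \sum_{i=1}^{n} a_i b_i,
\]
where both eigenvalue sequences are listed in increasing order. So the entire content of the lemma is this trace inequality.

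Next I would diagonalize both matrices. Write $A = U D_a U^\trn$ and $B = V D_b V^\trn$ with $U, V$ orthogonal and $D_a = \diag(a_1, \dots, a_n)$, $D_b = \diag(b_1, \dots, b_n)$. Setting $W := U^\trn V$, which is again orthogonal, the cyclic invariance of the trace gives $\Tr(AB) = \Tr(D_a W D_b W^\trn) = \sum_{i,j} a_i b_j\, w_{ij}^2$. The key observation is that the matrix $S := (w_{ij}^2)_{i,j}$ is doubly stochastic: since $W W^\trn = W^\trn W = 1_n$, every row sum $\sum_j w_{ij}^2$ and every column sum $\sum_i w_{ij}^2$ equals $1$, and the entries are nonnegative. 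Hence $\Tr(AB) = \sum_{i,j} a_i b_j s_{ij}$ is a linear functional of $S$ over the Birkhoff polytope of doubly-stochastic matrices.

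Finally I would maximize this linear functional. A linear function on a compact convex polytope attains its maximum at an extreme point, and by Birkhoff's theorem the extreme points of the set of doubly-stochastic matrices are exactly the permutation matrices. Therefore $\Tr(AB) \le \max_{\pi} \sum_i a_i b_{\pi(i)}$, the maximum being over permutations $\pi$ of $\{1, \dots, n\}$. The rearrangement inequality then identifies this maximum as $\sum_i a_i b_i$, the value at the identity, precisely because both sequences are sorted in the same increasing order. Combining the three steps yields $\Tr(AB) \le \sum_i a_i b_i$ and hence the lemma.

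I expect the main obstacle to be the middle step: recognizing that $(w_{ij}^2)$ is doubly stochastic and that the resulting optimization collapses, via Birkhoff's theorem together with the rearrangement inequality, to the sorted matching. Once that structural observation is in place, the surrounding trace expansions are entirely routine.
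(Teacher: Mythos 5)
Your proof is correct. Note that the paper itself gives no proof of this lemma at all: it is stated as a classical result and attributed directly to Hoffman and Wielandt (1953), so there is nothing in the paper to compare against step by step. Your argument is the standard one for the symmetric case: reducing the inequality to $\Tr(AB) \le \sum_i a_i b_i$, writing $\Tr(AB) = \sum_{i,j} a_i b_j w_{ij}^2$ with $W = U^\trn V$ orthogonal, observing that $(w_{ij}^2)_{i,j}$ is doubly stochastic, and then invoking Birkhoff's theorem together with the rearrangement inequality to see that the sorted matching maximizes the linear functional. Each step checks out, including the equivalence obtained from expanding $\|A-B\|_F^2 = \sum_i a_i^2 - 2\Tr(AB) + \sum_i b_i^2$, so this is a complete and valid proof of the cited result.
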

By these lemmas, we can prove $\log \det\left( \lambda^{-1} A_T \right)$
is an approximation of the maximum information gain.
\begin{lem}
  \label{lem:at_gamma-app}
  We apply \algucbh{} with admissible error $\admerr$
  to the stochastic RKHS bandit, then following inequality holds:
  \begin{equation*}
    \log \det
    \left(
      \lambda^{-1} A_T
    \right)
    \le 2\gamma_T + \frac{\admerr T^{3/2}}{\lambda}.
  \end{equation*}
\end{lem}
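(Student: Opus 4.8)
The plan is to pass from the $D\times D$ determinant defining $\log\det(\lambda^{-1}A_T)$ to a $T\times T$ determinant built from the \emph{approximate} kernel matrix, and then to compare the latter with the true kernel matrix, whose log-determinant is controlled by $\gamma_T$. Writing $\Phi \in \RR^{T\times D}$ for the matrix whose $s$-th row is $\wx_s^\trn$, we have $A_T = \lambda 1_D + \Phi^\trn \Phi$, so by Sylvester's determinant identity
\begin{equation*}
  \det(\lambda^{-1}A_T) = \det(1_D + \lambda^{-1}\Phi^\trn\Phi) = \det(1_T + \lambda^{-1}\widetilde{K}_T),
\end{equation*}
where $\widetilde{K}_T := \Phi\Phi^\trn = (\langle \wx_i, \wx_j\rangle)_{1\le i,j\le T}$. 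Let $K_T := (K(x_i,x_j))_{1\le i,j\le T}$ be the true kernel matrix. By the definition of the maximum information gain, the specific sequence $x_1,\dots,x_T$ selected by the algorithm satisfies $\log\det(1_T + \lambda^{-1}K_T) \le 2\gamma_T$, so it would suffice to bound the difference $\log\det(1_T+\lambda^{-1}\widetilde{K}_T) - \log\det(1_T+\lambda^{-1}K_T)$ by $\admerr T^{3/2}/\lambda$.

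For the comparison I would use an eigenvalue-perturbation argument. Both $K_T$ and $\widetilde{K}_T$ are symmetric positive semidefinite; denote their eigenvalues $\mu_1 \le \dots \le \mu_T$ and $\widetilde\mu_1 \le \dots \le \widetilde\mu_T$, all nonnegative. Then
\begin{equation*}
  \log\det(1_T+\lambda^{-1}\widetilde{K}_T) - \log\det(1_T+\lambda^{-1}K_T)
  = \sum_{i=1}^T \left( \log(1+\lambda^{-1}\widetilde\mu_i) - \log(1+\lambda^{-1}\mu_i)\right).
\end{equation*}
Since $t\mapsto \log(1+\lambda^{-1}t)$ is $\lambda^{-1}$-Lipschitz on $[0,\infty)$, each summand is at most $\lambda^{-1}|\widetilde\mu_i - \mu_i|$; by Cauchy--Schwarz and the Wielandt--Hoffman inequality (Lemma \ref{lem:wielandt-hoffman}),
\begin{equation*}
  \sum_{i=1}^T |\widetilde\mu_i - \mu_i| \le \sqrt{T}\, \Big(\sum_{i=1}^T |\widetilde\mu_i-\mu_i|^2\Big)^{1/2} \le \sqrt{T}\,\|\widetilde{K}_T - K_T\|_F.
\end{equation*}
Finally, Lemma \ref{lem:kernel-app} gives $|K(x_i,x_j) - \langle \wx_i,\wx_j\rangle| \le \admerr$ entrywise, hence $\|\widetilde{K}_T - K_T\|_F \le \sqrt{T^2\admerr^2} = T\admerr$, and combining the last three displays yields the claimed bound $\admerr T^{3/2}/\lambda$.

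The main obstacle I anticipate is bookkeeping rather than conceptual: first, confirming the exact normalization in the definition of $\gamma_T$ so that the factor of $2$ in $2\gamma_T$ is correct for the regularization parameter $\lambda$ in use; and second, ensuring the Lipschitz step is legitimate, which requires the nonnegativity of all eigenvalues of both Gram matrices (so that the argument of $\log$ stays bounded away from $0$ and the Lipschitz constant is uniformly $\lambda^{-1}$). The positive semidefiniteness of $\widetilde{K}_T = \Phi\Phi^\trn$ is automatic, and that of $K_T$ follows from positive definiteness of $K$; everything else is the routine chain Sylvester $\to$ Wielandt--Hoffman $\to$ Cauchy--Schwarz $\to$ Lemma \ref{lem:kernel-app}.
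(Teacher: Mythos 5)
Your proposal is correct and follows essentially the same route as the paper's proof: Sylvester's determinant identity to pass to the $T\times T$ Gram matrices, then Wielandt--Hoffman plus Cauchy--Schwarz plus the uniform kernel approximation bound of Lemma \ref{lem:kernel-app}, and finally $\log\det(1_T+\lambda^{-1}K_T)\le 2\gamma_T$. The only cosmetic difference is that you perturb the eigenvalues of $K_T$ and invoke the $\lambda^{-1}$-Lipschitz continuity of $t\mapsto\log(1+\lambda^{-1}t)$, whereas the paper perturbs the eigenvalues of $1_T+\lambda^{-1}K_T$ and uses $\log x\le x-1$ together with the fact that those eigenvalues are at least $1$; the two steps are equivalent.
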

\begin{proof}
  We define a $T \times T$ matrix $\widetilde{K}_T$
  as $(\langle \wx_i, \wx_j \rangle)_{1 \le i, j \le T}$.
  Since for any matrix $X \in \RR^{n \times m}$,
  $\det (1_n + X X^{\trn}) = \det (1_m + X^{\trn} X)$ holds,
  we have $\det(\lambda^{-1}A_{T}) = \det\left(1_T + \lambda^{-1} \widetilde{K}_T
  \right)$.
  We denote by $\rho_1 \le \dots \le \rho_T$ the eigenvalues of $K_T$
  and $\wrho_1 \le \dots \le \wrho_T$ those of $\widetilde{K}_T$.
  Then by the Wielandt-Hoffman theorem (Lemma \ref{lem:wielandt-hoffman}),
  we have
  \begin{equation}
    \label{eq:wrho_rho}
  \sqrt{\sum_{i=1}^{T} (\rho_i - \wrho_i)^2} \le \lambda^{-1}
  \|K_T - \widetilde{K}_T\|_{F}
  \le \lambda^{-1} \admerr T,
  \end{equation}
  where the last inequality follows from Lemma \ref{lem:kernel-app}.
  Thus, we have
  \begin{align*}
    \log \det \left( \lambda^{-1} A_T \right)
    &= \log \det \left (1_T + \lambda^{-1}\widetilde{K}_T \right)
    = \sum_{i=1}^{T}\log(\wrho_i)
    = \sum_{i=1}^{T} \log(\rho_i) + \sum_{i=1}^{T}\log(\wrho_i / \rho_i)\\
    &\le \log \det (1_T + \lambda^{-1}K_T) + \sum_{i=1}^{T}\frac{\wrho_i - \rho_i}{\rho_i}\\
    &\le \log \det (1_T + \lambda^{-1}K_T) + \sum_{i=1}^{T} |\wrho_i - \rho_i|\\
    &\le \log \det (1_T + \lambda^{-1}K_T) + \frac{\admerr T^{3/2}}{\lambda}.
  \end{align*}
  Here in the second inequality,
  we used $\rho_i \ge 1$
  and in the third inequality,
  we used inequality \eqref{eq:wrho_rho} and the Cauchy-Schwartz inequality.
  Noting that $\log \det (1_T + \lambda^{-1}K_T) \le 2 \gamma_T$ \citep{chowdhury2017kernelized},
  we have our assertion.
\end{proof}

We provide a more precise result than Theorem \ref{thm:igp-app}.
We can prove the following by Proposition \ref{prop:app-ucb}.
\begin{prop}
  We assume that
  $\lambda^{-1}\log \left(\det( \lambda^{-1}A_T)\right)
   \le 2 \gamma_T + \delta_T$,
  where $\delta_T = O(T^{a - q})$ with $a \in \RR$
  and $q$ is the parameter of \algucbh{}.
  We also assume that $\delta_T = O(\gamma_T)$ and $\lambda = 1$.
  Then with probability at least $1 - \delta$,
  the cumulative regret of \algucbh{} is upper bounded by a function $b(T)$,
  where $b(T)$ is given as
  \begin{equation}
    b(T) = 4 \betaigp_T \sqrt{\gamma_T T} + O(\sqrt{T\gamma_T} T^{(a-q)/2}
    + \gamma_T T^{1-q}),
    \label{eq:bt-upper-bound}
  \end{equation}
  where $\betaigp_T$ is defined by $B + R\sqrt{2(\gamma_T + 1 + \log(1/\delta))}$.
\end{prop}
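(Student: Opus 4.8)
The plan is to feed the hypotheses directly into the regret bound of Proposition~\ref{prop:app-ucb} and track the leading constant. Write $L_T := \log\det(\lambda^{-1}A_T)$. Since $\lambda = 1$ the hypothesis reads $L_T \le 2\gamma_T + \delta_T$, and by the reduction used for \algucbh{} the misspecification level is $\err = \alpha B / T^q = O(T^{-q})$. Proposition~\ref{prop:app-ucb} then gives
\begin{equation*}
  R_{\algucb}(T) \le 2\sqrt{2}\,\beta_T\sqrt{T}\sqrt{L_T} + 2\err T + 4\err T L_T,
\end{equation*}
and I would estimate the three summands separately.

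For the leading summand I would bound $\sqrt{L_T}$ and $\beta_T$ against their ``IGP-UCB counterparts.'' Sub-additivity of the square root gives $\sqrt{L_T} \le \sqrt{2\gamma_T} + \sqrt{\delta_T}$; and since $\beta_T = R\sqrt{L_T + 2\log(1/\delta)} + B$, combining $L_T \le 2\gamma_T + \delta_T$ with the definition $\betaigp_T = B + R\sqrt{2\gamma_T + 2 + 2\log(1/\delta)}$ and a one-line case split on whether $\delta_T \le 2$ yields $\beta_T \le \betaigp_T + R\sqrt{\delta_T}$. Expanding the product
\begin{equation*}
  2\sqrt{2}\,\bigl(\betaigp_T + R\sqrt{\delta_T}\bigr)\sqrt{T}\,\bigl(\sqrt{2\gamma_T} + \sqrt{\delta_T}\bigr)
\end{equation*}
produces the exact main term $4\betaigp_T\sqrt{\gamma_T T}$ together with three cross terms. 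Each cross term I would show is $O(\sqrt{T\gamma_T}\,T^{(a-q)/2})$: the two terms carrying a single factor $\sqrt{\delta_T}$ use $\betaigp_T = O(\sqrt{\gamma_T})$ and $\delta_T = O(T^{a-q})$, while the remaining term $\delta_T\sqrt{T}$ I would rewrite as $\sqrt{\delta_T}\cdot\sqrt{\delta_T T}$ and bound by invoking $\delta_T = O(\gamma_T)$ for the first factor and $\delta_T = O(T^{a-q})$ for the second.

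The two lower-order summands are immediate: $\err = O(T^{-q})$ gives $2\err T = O(T^{1-q})$, and $L_T = O(\gamma_T)$ (which again uses $\delta_T = O(\gamma_T)$) gives $4\err T L_T = O(\gamma_T T^{1-q})$, the former being absorbed into the latter since $\gamma_T \ge 1$. Summing the three estimates gives exactly
\begin{equation*}
  R_{\algucb}(T) \le 4\betaigp_T\sqrt{\gamma_T T} + O\bigl(\sqrt{T\gamma_T}\,T^{(a-q)/2} + \gamma_T T^{1-q}\bigr) = b(T),
\end{equation*}
and the probability $1-\delta$ is inherited from Proposition~\ref{prop:app-ucb}.

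The main obstacle is bookkeeping rather than conceptual: one must extract the sharp constant $4$ in the main term while guaranteeing that every residual generated by replacing $L_T$ with $2\gamma_T$ and $\beta_T$ with $\betaigp_T$ collapses into the two advertised orders. This is precisely where the assumption $\delta_T = O(\gamma_T)$ is used — it supplies the spare factor $\sqrt{\gamma_T}$ that turns the otherwise awkward term $\delta_T\sqrt{T}$ into something of order $\sqrt{T\gamma_T}\,T^{(a-q)/2}$; without it the first error order would fail.
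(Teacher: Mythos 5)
Your proposal is correct and follows essentially the same route as the paper, which simply states that the proposition follows from Proposition \ref{prop:app-ucb} and leaves the bookkeeping implicit; you have filled in exactly that bookkeeping (substituting $L_T \le 2\gamma_T + \delta_T$ into the bound, comparing $\beta_T$ with $\betaigp_T$, and collapsing the cross terms using $\delta_T = O(T^{a-q})$ and $\delta_T = O(\gamma_T)$). The extraction of the constant $4$ in the main term and the handling of the $\delta_T\sqrt{T}$ cross term are done correctly.
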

\begin{rem}
  We note that the cumulative regret of IGP-UCB is upper bounded by
  $4 \betaigp_T \sqrt{\gamma_T (T + 2)}$ by the proof in \citep{chowdhury2017kernelized}.
\end{rem}
If $q > \max(a, 1/2)$, then the first term $4 \betaigp_T \sqrt{\gamma_T T}$ in
\eqref{eq:bt-upper-bound} is the main term of $b(T)$.
By Lemma \ref{lem:at_gamma-app}, we can take $a = 3/2$.
Thus, we have the assertion of Theorem \ref{thm:igp-app}.

\section{Supplement to the Experiments}
\label{sec:appendix-exp}
\subsection{Experimental Setting}
For each reward function $f$, we add independent Gaussian noise of mean $0$
and standard deviation $0.2 \cdot \| f\|_{L^1(\aset)}$.
We use the $L^1$-norm because even if we normalize $f$
so that $\| f\|_{\hilb{\Omega}} = 1$, the values of the function $f$ can be small.
As for the parameters of the kernels,
we take $\mu = 2d$ for the RQ kernel because
the condition $\mu = \Omega(d)$ is required for positive definiteness.
We take $l = 0.3\sqrt{d}$ and $l = 0.2\sqrt{d}$
if the kernel is RQ kernel and SE kernel respectively
because the diameter of the $d$-dimensional cube is $\sqrt{d}$.
As for the parameters of the algorithms,
we take $B = 1, \delta=10^{-3}$ and $R = 0.2 \cdot \left(\sum_{i=1}^{10}\|f_i \|_{L^1(\aset)}/10\right)$
for both algorithms,
where $f_1, \dots, f_{10}$ are the reward functions used for the experiment.
We take $\lambda = 1, \alpha = 5 \cdot 10^{-3}, q = 1/2$ for \algucbh{}
and $\lambda = 1 + 2/T$ for IGP-UCB.

Since exact value of the maximum information gain is not known,
when computing UCB for IGP-UCB, we modify IGP-UCB as follows.
Using notation of \citep{chowdhury2017kernelized},
IGP-UCB selects an arm $x$ maximizing $\mu_{t-1}(x) + \beta_t \sigma_{t-1}(x)$,
where $\beta_t = B + R\sqrt{2(\gamma_{t-1} + 1 + \log(1/\delta))}$.
Since exact value of $\gamma_{t-1}$ is not known,
we use $\frac{1}{2} \ln \det(I + \lambda^{-1} K_{t-1})$ instead of $\gamma_{t-1}$.
From their proof, it is easy to see that this modification of IGP-UCB have
the same guarantee for the regret upper bound as that of IGP-UCB.
In addition,
by
$\ln \det(I + \lambda^{-1} K_t) = \sum_{s=1}^{t} \log(1 + \lambda^{-1} \sigma_{s-1}^2(x_s))$,
one can update $\ln \det(I + \lambda^{-1} K_t) $ in $O(t^2)$ time at each round
if $K_t^{-1}$ is known.
To compute the inverse of the regularized kernel matrix $K_t^{-1}$,
we used a Schur complement of the matrix.

Computation was done by Intel Xeon E5-2630 v4 processor with 128 GB RAM.
We computed UCB for each arm in parallel for both algorithms.
For matrix-vector multiplication, we used efficient implementation of the dot product
provided in
\url{https://github.com/dimforge/nalgebra/blob/dev/src/base/blas.rs}.

\subsection{Additional Experimental Results}
As shown in the main article and \S \ref{sec:appendix-thm-igp-app}, the error $\admerr$ balances
the computational complexity and cumulative regret, i.e.,
if $\admerr$ is smaller, then the cumulative regret is smaller, but
the computational complexity becomes larger.
In this subsection, we provide additional experimental results
by changing $\alpha$ with fixed $q = 1/2$.
We also show results for more complicated reward functions,
i.e. $l = 0.2 \sqrt{d}$ for RQ kernels ($\mu$ is the same)
and $l = 0.1\sqrt{d}$ for SE kernels.

In Table \ref{tab:pd-dims}, we show
the number of points returned by the $P$-greedy algorithms
for the RQ and SE kernels.
\begin{table}[htbp]
  \centering
  \caption{The Number of Points Returned by the P-greedy Algorithm with $\epsilon = \frac{5 \cdot 10^{-3}}{\sqrt{T}}.$
  }
  \begin{tabular}{lllll}
   {}  & RQ ($l = 0.3\sqrt{d}$) & SE ($l = 0.2\sqrt{d}$)
   & RQ ($l = 0.2\sqrt{d}$) & SE ($l = 0.1\sqrt{d}$)\\
   \hline
    $d = 1$ &  18 & 15 & 23& 25\\
    $d = 2$ & 105 & 108 & 188 & 283\\
    $d = 3$ & 376 & 457 & 725 & 994
  \end{tabular}
  \label{tab:pd-dims}
\end{table}




In Figures \ref{fig:rq_alpha_l_normal}, \ref{fig:se_alpha_l_normal}
and Tables \ref{tab:time-rq_alpha_l_normal}, \ref{tab:time-se_alpha_l_normal},
we show the dependence on the parameter $\alpha$.
In these figures,
we denote \algucbh{} with parameter $\alpha$
by \algucbh{}($\alpha$).

In Figures \ref{fig:rq_alpha_l_small}, \ref{fig:se_alpha_l_small}
and Tables \ref{tab:time-rq_alpha_l_small}, \ref{tab:time-se_alpha_l_small},
we also show the dependence on the parameter $\alpha$ for more complicated functions.

\begin{figure}[htbp]
  \centering
  \includegraphics[width=\textwidth]{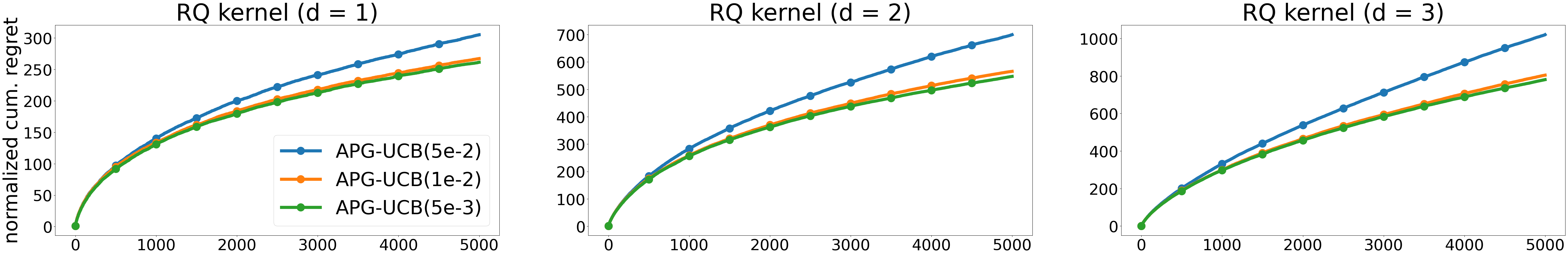}
  \caption{Normalized Cumulative Regret for RQ kernels with $l=0.3\sqrt{d}$.}
  \label{fig:rq_alpha_l_normal}
\end{figure}

\begin{figure}[htbp]
  \centering
  \includegraphics[width=\textwidth]{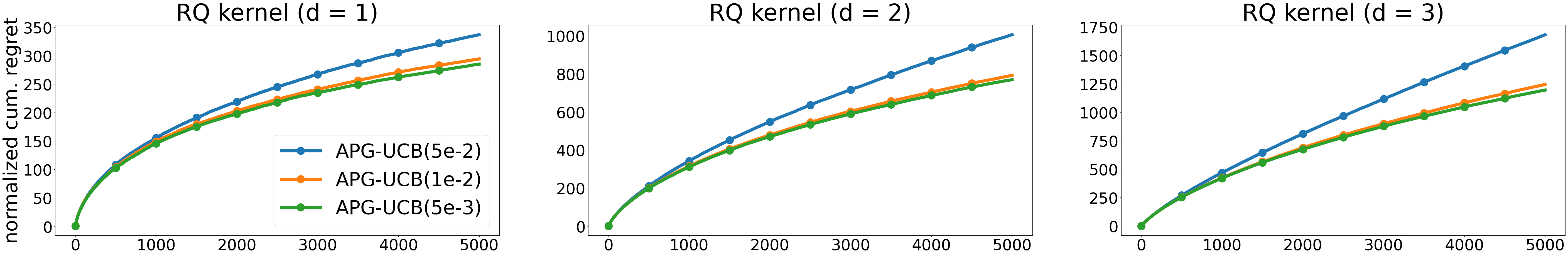}
  \caption{Normalized Cumulative Regret for RQ kernels with $l=0.2\sqrt{d}$.}
  \label{fig:se_alpha_l_normal}
\end{figure}

\begin{figure}[htbp]
  \centering
  \includegraphics[width=\textwidth]{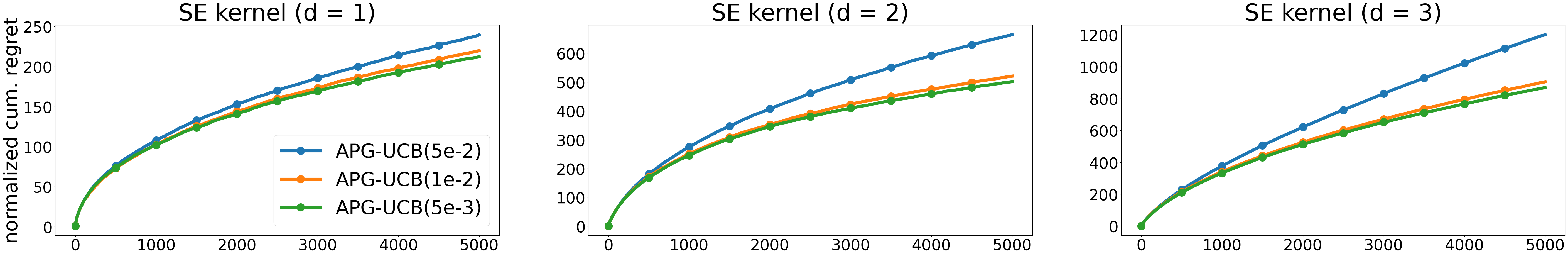}
  \caption{Normalized Cumulative Regret for SE kernels with $l=0.2\sqrt{d}$.}
  \label{fig:rq_alpha_l_small}
\end{figure}

\begin{figure}[htbp]
  \centering
  \includegraphics[width=\textwidth]{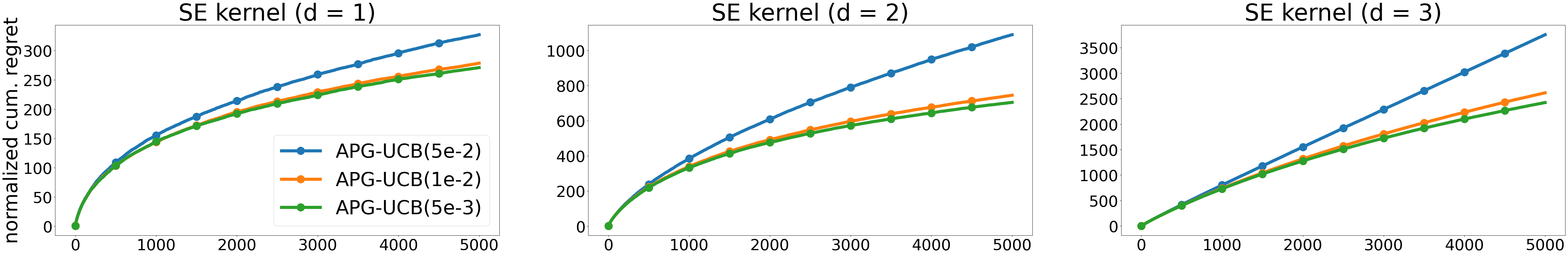}
  \caption{Normalized Cumulative Regret for SE kernels with $l=0.1\sqrt{d}$.}
  \label{fig:se_alpha_l_small}
\end{figure}

\begin{table}[htbp]
  \centering
  \caption{Total Running Time for RQ Kernels with $l=0.3\sqrt{d}$.}
  \begin{tabular}{lllll}
     {} & \algucbh{}(5e-2) & \algucbh{}(1e-2) & \algucbh{}(5e-3) \\
     \hline
     d = 1 (RQ) &     3.91e-01 &     4.06e-01 &     4.23e-01 \\
     d = 2 (RQ) &     1.36e+00 &     2.39e+00 &     2.76e+00 \\
     d = 3 (RQ) &     1.19e+01 &     2.40e+01 &     2.98e+01 \\
  \end{tabular}
  \label{tab:time-rq_alpha_l_normal}
\end{table}

\begin{table}[htbp]
  \centering
  \caption{Total Running Time for SE Kernels with $l=0.2\sqrt{d}$.}
  \begin{tabular}{lllll}
  {} & \algucbh{}(5e-2) & \algucbh{}(1e-2) & \algucbh{}(5e-3) \\
  \hline
  d = 1 (SE) &     3.84e-01 &     4.04e-01 &     4.02e-01 \\
  d = 2 (SE) &     1.69e+00 &     2.59e+00 &     2.89e+00 \\
  d = 3 (SE) &     2.13e+01 &     3.51e+01 &     4.30e+01 \\
  \end{tabular}
  \label{tab:time-se_alpha_l_normal}
\end{table}

\begin{table}[htpb]
  \centering
  \caption{Total Running Time for RQ Kernels with $l=0.2\sqrt{d}$.}
  \begin{tabular}{lllll}
    {} &  \algucbh{}(5e-2) & \algucbh{}(1e-2) & \algucbh{}(5e-3) \\
    \hline
    d = 1 (RQ) &     4.49e-01 &     4.84e-01 &     4.96e-01 \\
    d = 2 (RQ) &     3.84e+00 &     6.01e+00 &     7.39e+00 \\
    d = 3 (RQ) &     4.87e+01 &     8.76e+01 &     1.07e+02 \\
  \end{tabular}
  \label{tab:time-rq_alpha_l_small}
\end{table}

\begin{table}[htbp]
  \centering
  \caption{Total Running Time for SE Kernels with $l=0.1\sqrt{d}$.}
  \begin{tabular}{lllll}
    {} & \algucbh{}(5e-2) & \algucbh{}(1e-2) & \algucbh{}(5e-3) \\
    \hline
    d = 1 (SE) &      4.72e-01 &     4.88e-01 &     5.08e-01 \\
    d = 2 (SE) &       9.59e+00 &     1.40e+01 &     1.61e+01 \\
    d = 3 (SE) &      1.77e+02 &     2.02e+02 &     2.02e+02 \\
  \end{tabular}
  \label{tab:time-se_alpha_l_small}
\end{table}






\end{document}